\renewcommand{\vec}[1]{ {\mathbf{#1}}}
\newcommand{\btau}   {\mbox{\boldmath $\tau$}}
\newcommand{\T}{^\top}
\newcommand{\bb}{\mbox{\boldmath $b$}}
\newcommand{\bzero}{\mbox{\boldmath $0$}}
\newcommand{\vc}{\vec{c}}
\newcommand{\vg}{\vec{g}}
\newcommand{\vp}{\vec{p}}
\newcommand{\vI}{\vec{I}}
\newcommand{\vk}{\vec{k}}
\newcommand{\BM}{\begin{bmatrix}}
\newcommand{\EM}{\end{bmatrix}}
\newcommand{\vPhi}{ \boldsymbol{\Phi}}
\newcommand{\vphi}{\boldsymbol{\phi}}
\newcommand{\bone}{{\bf 1}}
\newcommand{\vzero}{{\vec{0}}}
\newcommand{\vA}{\vec{A}}
\newcommand{\va}{\vec{a}}
\newcommand{\vB}{\vec{B}}
\newcommand{\vC}{\vec{M}}
\newcommand{\vf}{\vec{f}}
\newcommand{\vG}{\vec{G}}
\newcommand{\vH}{\vec{H}}
\newcommand{\vJ}{\vec{J}}
\newcommand{\vK}{\vec{K}}
\newcommand{\vM}{\vec{M}}
\newcommand{\vN}{\vec{N}}
\newcommand{\vW}{\vec{W}}
\newcommand{\vY}{\vec{Y}}
\newcommand{\vq}{\vec{q}}
\newcommand{\vpi}{\boldsymbol{\pi}}
\newcommand{\vqd}{\dot{\vq}}
\newcommand{\vqdd}{\ddot{\vq}}
\newcommand{\qd}{\dot{q}}
\newcommand{\qdd}{\ddot{q}}
\newcommand{\vR}{\vec{R}}
\newcommand{\Rot}[2]{{}^{#1}\vR_{{\mkern .5\thinmuskip} #2}}
\newcommand{\vS}{\vec{S}}
\newcommand{\vT}{\vec{T}}
\newcommand{\vv}{\vec{v}}
\newcommand{\vV}{\vec{V}}
\newcommand{\subsub}[1]{\vspace{1ex}
\noindent{\bf #1}:}
\newcommand{\ba}{{\bf a}}
\newcommand{\vX}{\vec{X}}
\newcommand{\XJ}{{\vX\!_J}}
\newcommand{\XM}[2]{{}^{#1}\vX_{#2}{\mkern-.75\thinmuskip}}
\newcommand{\TM}[2]{{}^{#1}\vT_{#2}{\mkern-.75\thinmuskip}}
\newcommand{\XMT}[2]{{}^{#1}\vX_{#2}\T}
\newcommand{\vx}{\vec{x}}
\newcommand{\vy}{\vec{y}}
\newcommand{\beq}{\begin{equation}}
\newcommand{\eeq}{\end{equation} }
\newcommand{\Nw}[1]{n^{w}_{#1}}
\newcommand{\Nv}[1]{n^{v}_{#1}}
\newcommand{\TsetInertia}{\overline{\mathcal{T}}}
\newcommand{\NsetInertia}{\overline{\mathcal{N}}}
\newcommand{\MomentumCondition}[1]{\vC(\vV_{#1}, \vPhi_{#1})\T \dpi_{#1}}
\newcommand{\MomentumColumns}[1]{\vC(\vV_{#1}, \vPhi_{#1})}
\newcommand{\MomentumRows}[1]{\vC(\vV_{#1}, \vPhi_{#1})\T}
\newcommand{\ObservabilityColumns}[1]{\beforeJoint{ \vK}{#1}}
\newcommand{\ObservabilityRows}[1]{\beforeJoint{ \vK}{#1}\T}
\definecolor{colork}{RGB}{240, 240, 240}
\newcommand{\algdesc}[1]{ \textcolor{white}{\colorbox{colork}{\textcolor{black}{\em #1}}} }
\newcommand{\Bi}{ {}^{\pred{i}}\vB_{\hspace{0.05em} i} }
\newcommand{\Bj}{ {}^{\pred{j}}\vB_{\hspace{0.05em} j} }
\newcommand{\vKJiT}{ \vKJi\T }
\newcommand{\vVJi}{\wellBeforeJoint{\vV}{i}}
\newcommand{\vVim}{\beforeJoint{ \vV}{i}}
\newcommand{\vKJi}{\wellBeforeJoint{\vK}{i}}
\newcommand{\vKim}{\beforeJoint{ \vK}{i}}
\newcommand{\beforeJoint}[2]{
#1_{#2^-} 
}
\newcommand{\wellBeforeJoint}[2]{
#1_{\pred{#2}^+} 
}
\newcommand{\Iset}{\mathbb{I}}
\newcommand{\pred}[1]{ #1{\text-}1 }
\newcommand{\nbod}{N}
\newcommand{\Ichange}[2]{ (#2 \times)\T\, #1 + #1 \, (#2 \times)  }
\newcommand{\vvJi}{\vv_{\pred{i}}}
\newcommand{\qb}{q_\jb}
\newcommand{\qdb}{\qd_\jb}
\newcommand{\Xba}{\XM{\bb}{\ba}}
\newcommand{\Xbaq}{\Xba(\qb)}
\newcommand{\Xbaz}{\Xba(0)}
\newcommand{\XTba}{\XMT{\bb}{\ba}}
\newcommand{\XTbaq}{\XTba(\qb)}
\newcommand{\XTbaz}{\XTba(0)}
\newcommand{\vPhib}{\vPhi_\bb}
\newcommand{\dIb}{\dI_\bb}
\newcommand{\dIa}{\dI_\ba}
\newcommand{\vvb}{\vv_\bb}
\newcommand{\vva}{\vv_\ba}
\renewcommand{\ba}{{\pred{i}}}
\renewcommand{\bb}{i}
\newcommand{\jb}{i}
\newcommand{\Wspan}[1]{\mathcal{W}_{#1}}
\newcommand{\Vspan}[1]{\mathcal{V}_{#1}}
\newcommand{\Vset}[1]{\mathcal{V}_{#1}^*}
\newcommand{\dI}{\delta \vI}
\newcommand{\dpi}{\delta \vpi}
\DeclareMathOperator{\spn}{span}
\DeclareMathOperator{\trace}{Trace}
\newtheorem{theorem}{Theorem}
\newtheorem{lemma}{Lemma}
\newenvironment{customthm}[1]
  {\innercustomthm}
  {\endinnercustomthm}
\newtheorem{proposition}{Proposition}
\newtheorem{remark}{Remark}
\newcommand{\mapping}{mapping\xspace}
\newcommand{\Mapping}{Mapping\xspace}
\newcommand{\vel}{\mbox{\boldmath $v$}}
\renewcommand{\XJ}[1]{\XM{#1}{\pred{#1}}(0)}
\newcommand{\XJT}[1]{\XMT{#1}{\pred{#1}}(0)}
\newcommand{\DI}[1]{\boldsymbol{\Delta}_{#1}}
\title{A Geometric Characterization of Observability in Inertial Parameter Identification }
\author{Patrick~M.~Wensing, G{\"u}nter  Niemeyer, Jean-Jacques E. Slotine
                                                                    }
\definecolor{light-gray}{gray}{0.95}
\begin{document}

\newbox\one
\newbox\two
\long\def\loremlines#1{%
    \setbox\one=\vbox{
    \vspace{3px}
      \lipsum\footnote{Another footnote.}%
     }
   \setbox\two=\vsplit\one to #1\baselineskip
   \unvbox\two}


 \begin{abstract}

This paper presents an algorithm to geometrically characterize inertial parameter identifiability for an articulated robot. 
The geometric approach tests identifiability across the infinite space of configurations using only a finite set of conditions and without approximation.  It
can be applied to general open-chain kinematic trees ranging from industrial manipulators to legged robots, and it is the first solution for this broad set of systems that is provably correct.  The high-level operation of the algorithm is based on a key observation: Undetectable changes in inertial parameters can be represented as sequences of inertial transfers across the joints. Drawing on the exponential parameterization of rigid-body kinematics, undetectable inertial transfers are analyzed in terms of observability from linear systems theory. This analysis can be applied recursively, and lends an overall complexity of $O(N)$ to characterize parameter identifiability for a system of $N$ bodies. {\sc Matlab} source code for the new algorithm is provided.

\vspace{-10px}
\end{abstract}

\maketitle


\renewcommand{\cite}[1]{\citep{#1}}

\SetAlgoNoLine


\section{Introduction}

A classic problem in robotics is the identification of inertial parameters (mass, center of mass, and inertia) for each link of a mechanism. This problem has received attention through multiple decades, with original work on the identification of manipulators \citep{Atkeson86,Khalil95,Swevers97} seeing extensions to the identification of mobile robots and humans in more recent applications \citep{Ayusawa08,Ayusawa10,Ayusawa14,Traversaro13,Jovan16,lee2020geometric}. 
Across domains, an enabling property is that the inverse dynamics of a rigid-body system are linear in its inertial parameters, motivating least-squares solutions to identify parameters from the measurement of joint kinematics, joint torques, and external forces. 

It is well known, however, that not all inertial parameters are identifiable from these measurements \cite{Atkeson86}. This observation has motivated studying which parameters (or combinations thereof) can theoretically be identified from an infinite amount of data \cite{Gautier90b,Mayeda90,sheu1991identifying,Kawasaki91,Khalil95,Chen02,Ros12,Ayusawa14}. This problem is one of characterizing the {\em structural identifiability} \cite{Bellman70} of a model form. It is emphasized that this problem is distinct from the problem of fitting a model to experimental data. Naturally, the quality of any identified model depends both on the accuracy of the model form assumed and the quality of the data collected. Thus, the related problem of {\em practical identifiability} addresses how uncertainty in measurements relates to uncertainty in the identified parameters (e.g., \cite{Calafiore00,Poignet05}). In short, structural identifiability considers what model properties can be identified from data, whereas practical identifiability considers how non-ideal aspects of that data impact the inferred model.  

This paper presents new tools for characterizing the structural identifiability of a rigid-body system via a geometric approach to the problem. We provide an algorithm that takes a kinematic model as an input, and returns the linear combinations of the system's inertial parameters that are identifiable from measurements of the joint kinematics, joint torques, and external forces. The approach taken offers an alternative to previous numerical (e.g., \cite{Gautier91b}) or symbolic approaches (e.g., \cite{Khosla89,Gautier90b}) to the problem. The main benefit of our approach is that the algorithm requires only a finite set of conditions to test the infinite space of configurations.  It also 
comes with a theoretical proof of correctness that holds for arbitrary open-chain fixed- or floating-base systems with generic joint types. This proof of correctness is achieved by working with spatial (6D) inertias in a parameterization-free sense and only adopting inertial parameters for the implementation of the theory.

We show that undetectable changes in the inertial parameters can be represented by exchanges of mass and inertia between neighboring bodies. Exchanges are considered undetectable if they leave the system dynamics unchanged across the entire state space. 
Addressing this infinity of possibilities is the main hurdle that has previously prevented a rigorous treatment of structural identifiability for general robot systems.




\begin{table*}
\center
\vspace{6px}
\scaleobj{.85}{
\begin{tabular}{cccccc}
\hline 
                       &      & General   & Fixed   & Floating  & Provably \\
                       & Type & Joints    & Base   & Base     &  Correct  \\ \hline
\rowcolor{light-gray} \quad \citet{Niemeyer90,niemeyer1991performance}  & \hspace{1em} Geometric \hspace{1em} & & & \checkmark & \checkmark \\
\citet{Ayusawa14}   & Symbolic  & \checkmark     &    & \checkmark  & \checkmark          \\
\rowcolor{light-gray}  \citet{Gautier90b}& Symbolic &  & \checkmark &  & \\
\citet{Gautier91b} & Numeric & \checkmark  & \checkmark & \checkmark & \\
\rowcolor{light-gray}  \citet{Ros15} & Symbolic &  & \checkmark &  & \\ 
{\bf This Paper	}		    & \bf{Geometric} & \pmb{\checkmark} & \pmb{\checkmark} & \pmb{\checkmark} & \pmb{\checkmark} \\ \hline\\[-1ex]
\end{tabular}
}
\caption{Feature comparison.}
\label{tab:compare}
\end{table*}

\subsection{Related previous work}

\noindent {\bf Motivation:} 
Early work on system identification emphasized determining minimal sets of parameters (base parameters) to be identified \cite{Gautier91b}. Base parameters group together linear combinations of link parameters that appear together in the equations of motion. Isolating these groupings allows using a reduced parameter set for faster dynamics computations \cite{khalil1987minimum}, and enables identification to have a unique solution. 

Since this work back in the 1980s and 90s, increases in computing power and new identification methodologies suggest revisiting these original motivations. Regarding computation power, the Recursive-Newton-Euler Algorithm (RNEA) 
can now be carried out with the full parameter set in microseconds for complex systems  \cite{carpentier2019pinocchio}. In terms of methodology, recent advances in enforcing physical consistency of the parameters \cite{Sousa14,Traversaro13, WensingKimSlotine17} and geometric regularization \cite{Lee18,lee2020geometric} jointly suggest the benefits of considering the full parameter set when carrying out identification.  

Despite these recent advances, structural identifiability considerations remain fundamental for robot system identification. Recall that structural identifiability analysis characterizes which parameters (or combinations) can be deduced from an {\em infinite} amount of training data. Methods that design exciting trajectories \citep{Swevers97,Gautier92,calafiore2001robot,Jovic15,lee2021optimal} seek to maximally identify model information with a {\em finite} amount of training data. It is only possible to certify that a given dataset is maximally exciting, however, via comparison to a structural identifiability analysis. Another motivation for characterizing base parameters comes from instrumental variable identification techniques \citep{Janot14,Janot14b} that address model bias from noisy data, but currently require the use of a non-redundant model parameterization. Methods that characterize uncertainty in the parameter estimates likewise only do so through considering uncertainty in the base parameters \citep{Calafiore00,Poignet05}.   Yet other threads of work have exploited how the full parameters map onto a base parameter set for payload identification \citep{gaz2017payload}, or have sought to invert this relationship for inferring the full parameters from the base set \citep{gaz2016extracting}.

As a practical matter, mobile legged systems are often identified in restricted setups (e.g., with the torso fixed \cite{WensingKimSlotine17,Focchi17b,Bonnet18}), and so it is important to understand how these setups affect identifiability. When the accuracy of certain joint torques is affected by these restricted setups, it also has implications for which joints are best used for proprioceptive detection of external contacts \citep{de2006collision,haddadin2017robot,bledt2018contact}.

Considerations of structural identifiability likewise find applications in the area of adaptive control (e.g., \cite{slotine1987adaptive,pucci2015collocated,wang2012recursive}), where so-called persistency of excitation (PE) conditions (i.e., ensuring maximal excitation over recurring finite intervals over time) can only be satisfied when adopting base-parameters. 
These considerations hold regardless of whether one adopts a direct adaptive control strategy (e.g., \citep{slotine1987adaptive,chung2009cooperative,pucci2015collocated,garofalo2021adaptive}), an indirect adaptive control strategy \cite{li1989indirect}, or a composite of the two \citep{slotine1989composite,wang2013recursive,o2022neural}. More recently, less strict alternatives to PE conditions, known as interval excitation (IE) conditions, have been proposed \citep{pan2018composite} (see, e.g., \citet{ortega23} for broader context beyond robot control). Again, however, IE conditions can only be satisfied when adopting a minimal parameterization.

\noindent {\bf Approaches:}
Previous methods for characterizing identifiability
include symbolic and numerical approaches. Some symbolic approaches consider finding common groupings of parameters within the equations of motion \cite{khosla1986real, khalil1986automatic,Khosla89}, which can become untenable at a system-level scale due to the complexity of the equations of motion. Thus, other work focused on carrying out regroupings of parameters by hand for special cases such as revolute manipulators with parallel or perpendicular joints \cite{Gautier90b,Mayeda90,Kawasaki91}. Many special cases still have to be considered separately, for example, whether joints are revolute or prismatic, parallel or orthogonal to each other, parallel or orthogonal to gravity, and combinations of the above. When regroupings are missed in the application of these rules, it leads one to believe that there are more base parameters than there truly are.
By comparison, the approach herein borrows inspiration from these methods but does not require any special cases due to the generality of its geometric treatment of the problem.

On the flip side, numerical methods for assessing identifiability (e.g., \cite{Gautier91b}) provide a lower bound on the number of base parameters of a model. Such methods often generate a finite set of random data, which is assumed to be maximally exciting. If the data is indeed maximally exciting, and there are no numerical issues, then correct conclusions can be drawn regarding the identifiability of the model. In cases when the data is not maximally exciting, the number of base parameters is underestimated. Since numeric methods provide a lower bound on the number of base parameters, while symbolic methods provide an upper bound, they can be combined together  \cite{Gautier91b} to mutually certify their outputs. 
Overall, neither the symbolic nor numeric approaches individually are provably correct for general mechanisms.

A complementary approach to
finding common parameter groupings is to consider inconsequential/undetectable transfers of inertia between pairs of bodies across each joint.  This strategy was developed originally by \citet{Niemeyer90} for use with floating-base systems with revolute and prismatic joints. It was later independently discovered by Chen and colleagues for 2D \citep{Chen02} and 3D \citep{Chen02b} mechanisms, and developed further by \citet{Ros12,Ros15} and \citet{ Iriarte13}. These latter papers represent the state of the art in base parameter determination. However, much like original symbolic approaches (e.g., \cite{Gautier90b}) they require a skilled individual to exercise discretion in applying a set of special rules for determining identifiable parameters of links with motion restrictions close to the ground. By comparison, our algorithm is fully automatic. The user provides a model description (e.g., specifying the kinematic data found in a URDF file), and our algorithm provides a provably correct description of which parameter combinations are identifiable, and which are not.

The closest work is from \citet{niemeyer1991performance} and \citet{Ayusawa14}, where they carried out provably correct identifiability analysis for floating-base robots. Results from \citet{Ayusawa14} treated general joint models, and also contributed a surprising result regarding the ability to identify floating-base models without measurement of joint torques. Beyond our extensions to fixed-base robots, we revisit the main theorem of \citet{Ayusawa14} to provide a shorter proof from a new angle.

\subsection{Contribution}

The main contribution of this paper is the first provably correct algorithm to characterize the identifiable inertial parameter combinations for general fixed- and floating-base open-chain systems (Tab.~\ref{tab:compare}). The algorithm is named the Recursive-Parameter-Nullspace Algorithm (RPNA) and has a structure reminiscent of the outward kinematics pass of the RNEA. Rather than computing the velocities of each link on the outward pass, we geometrically characterize all possible velocities of each link. This information enables the algorithm to automatically detect motion restrictions for each body and to assess how they influence parameter identifiability. In this regard, we provide a modern update to past symbolic identifiability work, which leverages a geometric treatment to avoid the many special cases previously needed. 



\subsection{Organization}

The paper is organized as follows. Section \ref{sec:Concepts} introduces the main concepts of an inertial transfer and how it relates to identifiability. We then develop the identifiability theory by focusing on a pair of bodies connected via a joint, in cases where the combined bodies can move freely in 6D space (Sec.~\ref{sec:Floating_Two_Body}) vs. when one body has additional motion restrictions (Sec.~\ref{sec:Two_Fixed}). We develop this theory in a parameterization-free sense, but rephrase the results in terms of inertial parameters in Sec.~\ref{sec:Parameters}.
These developments then enable a recursive treatment of identifiability in chains of bodies in Sec.~\ref{sec:fb}, where we present the RPNA. 
Extensions of the basic algorithm 
to kinematics trees, multi-DoF joints, and floating-base systems 
are provided in Sec.~\ref{sec:extensions}. Sec.~\ref{sec:closedChain} provides a further extension to joint motors, an example of simple closed chains.
Results in Sec.~\ref{sec:results} consider system-level identifiability for classical manipulators, the PUMA \& SCARA, as well as a mobile robot, the MIT Cheetah 3. As a key practical takeaway, we illustrate the pitfalls of constrained identification experiments often used to identify individual limbs of mobile legged systems. Concluding comments are provided in Sec.~\ref{sec:conclusions}.

\section{Main Concepts}
\label{sec:Concepts}

\noindent {\bf Dynamics and Unidentifiable Parameters:}
We consider identifying a rigid-body robot with $N$ bodies whose joint-space dynamics take the standard form
\begin{equation}
 \vH(\vq)\, \vqdd  + \vc(\vq,\vqd) + \vg(\vq) = \btau_{\rm total}
\label{eq:eom}
\end{equation}
with $\vq$ the configuration variable, $\vH$ the mass matrix (also known as the joint-space inertia matrix), $\vc$ and $\vg$ the Coriolis and gravity forces, and $\boldsymbol{\tau}_{\rm total}$ the total generalized force
summing contributions from joint torques and any external contact forces.
It is well known that 
$\vH$, $\vc$, and $\vg$ 
can be expressed linearly in the inertial parameters $\vpi \in \mathbb{R}^{10 N}$ of the bodies, which include masses, first moments, and rotational inertias.
Thus, the dynamics \eqref{eq:eom} can be written as
\begin{equation}
\vY(\vq, \vqd, \vqdd)\, \vpi = \btau_{\rm total}
\label{eq:regressor_main}
\end{equation}
where $\vY$ is the classical regressor \citep{Atkeson86}.

Unidentifiable parameters of the system are then given by those that don't affect the generalized force in any case
\begin{align}
\mathcal{N} &= \{\dpi \in \mathbb{R}^{10 \nbod}~|~ \vY(\vq,\vqd, \vqdd) \, \dpi = \bzero, ~\forall\,\vq,\vqd,\vqdd \} \nonumber
\end{align}
For any parameter variations $\dpi \in \mathcal{N}$, we say that the change $\dpi$ does not affect the dynamics, or equivalently that it is not identifiable through measurement of the total generalized force $\boldsymbol{\tau}_{\rm total}$.
Likewise, 
$\dpi$ is undetectable from the measurement of the joint torques and any external contact forces in this case.

Since the first two terms in \eqref{eq:eom} are determined by the kinetic energy, we will depart from these vector equations and instead focus on scalar energy $T=\frac{1}{2} \vqd\T \vH(\vq) \vqd$.
We temporarily dismiss potential energy before introducing gravity in Sec.~\ref{sec:gravity} as an upward acceleration.
Rewriting $T= \vY_T(\vq, \vqd)\, \vpi$, we can then equivalently express
\begin{equation}
\mathcal{N} = \{\dpi \in \mathbb{R}^{10 \nbod}~|~ \vY_T(\vq,\vqd) \, \dpi = \bzero, ~\forall\,\vq,\vqd \} \nonumber
\end{equation}

\begin{figure}
\center
\includegraphics[width=\columnwidth]{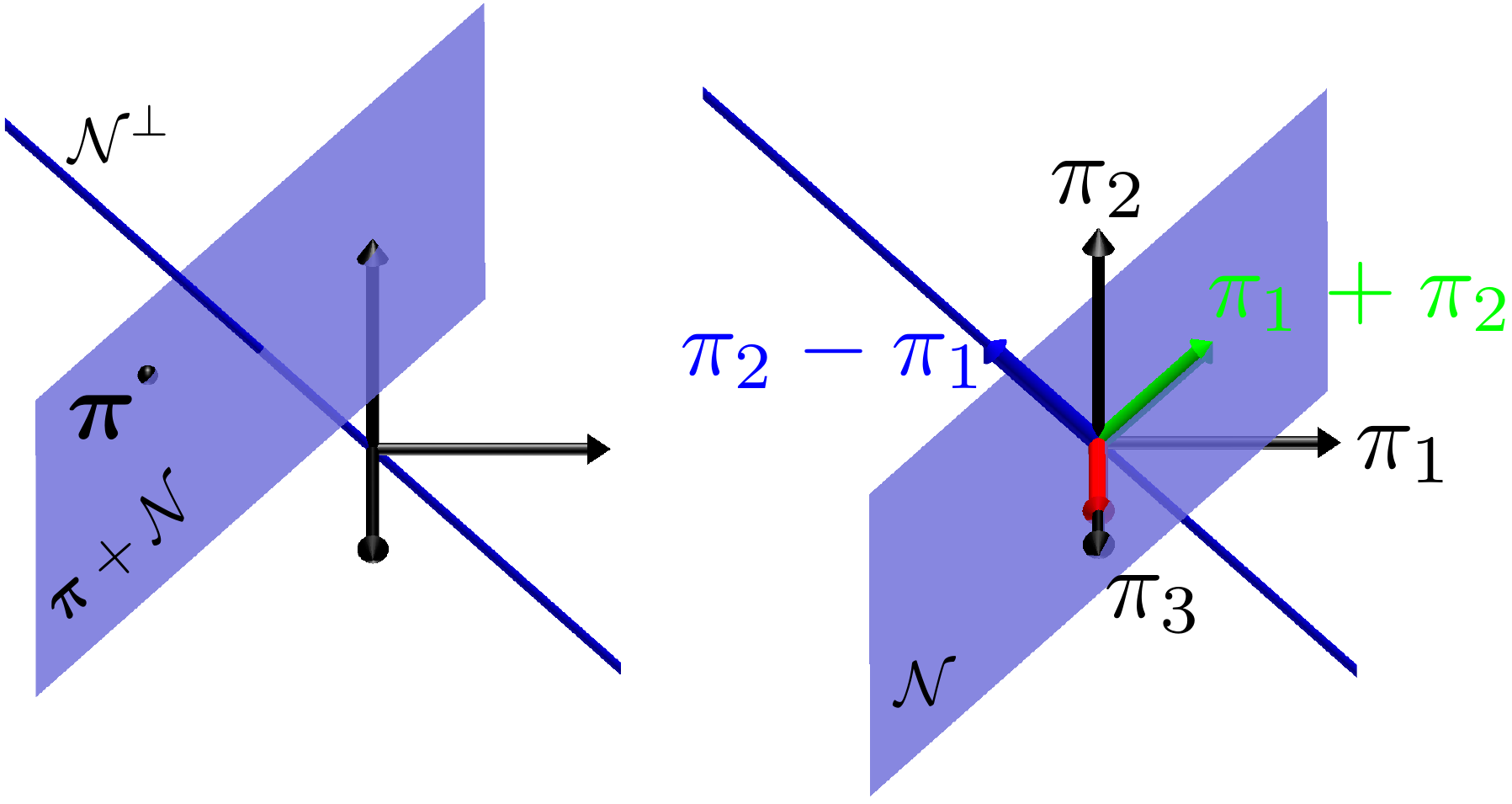}
\caption{(left) Example 3D parameter space showing the nominal parameter values $\vpi$. Any parameter vectors in the affine (i.e., offset) subspace $\vpi + \mathcal{N}$ give the same model. (right) Variations $\dpi$ split across unidentifiable  $\mathcal{N}$ (2D) and identifiable  $\mathcal{N}^\perp$ (1D) subspaces.  \\
\quad $\begin{array}{
r@{\ :\quad\dpi=[\ }c@{\quad}c@{\quad}c@{\quad}c@{\ ...]\ \in\ }l}
\pi_3           & 0 & 0 & 1 & 0 & \mathcal{N} \\
(\pi_1 + \pi_2) & \phantom{-} 1 & 1 & 0 & 0 & \mathcal{N} \\
(\pi_2 - \pi_1) & -1 & 1 & 0 & 0 & \mathcal{N}^\perp
\end{array}$}
\label{fig:pispace}
\end{figure}

\subsub{Identifiable Parameters and Combinations}
Generally, any variation $\dpi$ may affect a combination of parameters. For illustration, consider the $\vpi \in \mathbb{R}^{3}$ parameter space shown in Figure~\ref{fig:pispace}, decomposed into the unidentifiable subspace $\mathcal{N}$ as well as the identifiable orthogonal subspace $\mathcal{N}^\perp$. Axis $\pi_3$ lies within $\mathcal{N}$, making the parameter $\pi_3$ unidentifiable. But parameters $\pi_1$ and $\pi_2$ are combined: $(\pi_1\!+\!\pi_2)$ is unidentifiable while $(\pi_2\!-\!\pi_1)$ is identifiable. We can also equally imagine a $\pi_4$ axis within $\mathcal{N}^\perp$ so that $\pi_4$ is identifiable by itself.  So, any individual parameter may be identifiable in isolation, identifiable in combination, or unidentifiable.

Mathematically, if we can characterize $\mathcal{N} = {\rm Null}(\vN)$ as the nullspace of some matrix $\vN$, then the subspace $\mathcal{N}^\perp$ orthogonal to $\mathcal{N}$ is given by $\mathcal{N}^\perp = {\rm Range}( \vN\T)$ such that the rows of $\vN$ give the identifiable parameter combinations.

\subsub{Interpretation and Parameter Transfers}
The set $\mathcal{N}$ is a linear subspace of $\mathbb{R}^{10 \nbod}$, and any basis for it will generally consist of combinations of parameters from multiple bodies. In deriving a basis for $\mathcal{N}$, we will show it is enough to consider combinations of parameters from only two neighboring bodies at a time, and that these combinations represent a parameter transfer between the bodies.

Indeed, the approach derives two criteria on $\dpi$, which we interpret as two identification mechanisms:
(1) how a body's momentum is projected on the preceding joint (and hence identifiable via joint torques), 
or (2) how a body's inertia combines with preceding (parent) bodies and whether it remains identifiable in the conglomeration of bodies.  For the latter, we notice the child body's parameters may be identified if they add to a parent differently, depending on joint configurations.  But if the child parameter adds to the parent in a fixed manner, independent of configuration, it remains undetectable via (2).

Figure \ref{fig:invariance} illustrates this idea at a high level, showing three potential point
masses to be attached to body 2.  Each point mass affects the body's total mass, center of mass, and inertia and presents a physical interpretation of a particular $\dpi$.  Adding either $m_b$ or $m_c$ ($\dpi_b$ or $\dpi_c$) affects the angular momentum seen on the joint (mechanism (1)), but, in the absence of gravity, the rotational joint alone can only determine their sum and not distinguish between them.  However, considering the conglomeration of both bodies, we see $m_b$ vs. $m_c$ at different locations with respect to body 1 based on the joint position, and hence can identify them distinctly (mechanism (2)).  So, both $\dpi_b\in\mathcal{N}^\perp$ and $\dpi_c\in\mathcal{N}^\perp$.  Meanwhile, $m_a$ ($\dpi_a$) does not affect the angular momentum about the joint axis and adds to the previous body in a fixed manner/location, making it unidentifiable by itself.  Indeed, we could equivalently rigidly attach $m_a$ to the parent body without any effect on the dynamics.  We consider this reassignment a transfer of $\dpi_a$ between the neighboring bodies.  The combination of all such undetectable exchanges will be shown to span $\mathcal{N}$.

\begin{figure}[t]
\center
\includegraphics[width=\columnwidth]{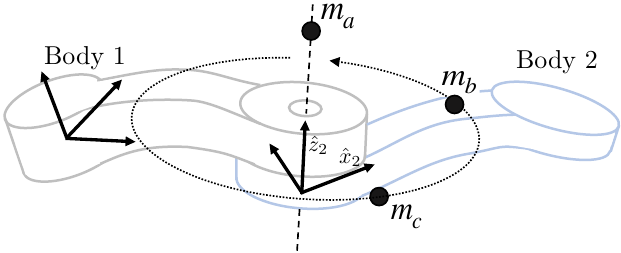}
\caption{Additional masses $m_a$, $m_b$, $m_c$ are rigidly attached to body 2.  The masses $m_b$, $m_c$ are detectable by the joint torque and appear in differing locations w.r.t.~body 1 based on the joint position.  The mass $m_a$ neither affects the angular momentum about the joint or joint torque, nor shifts location, and remains undetectable by itself.}
\label{fig:invariance}
\end{figure}

\subsub{Spatial Notation}
To analyze the parameter combinations that do not affect the system dynamics, we will equivalently consider parameter changes that do not modify the kinetic energy.  A simple extension in Sec.~\ref{sec:fb} will address gravity. 
Effects of rotational and linear kinetic energy will be captured together using 6D spatial notation of  \citet{Featherstone08} (see Appendix~\ref{app:dynamics} for a short review). For example, the kinetic energy of a single body takes the form 
\newcommand{\m}{\phantom{-}}
$T=\frac{1}{2} \vv\T \vI \vv$, 
with $\vv \in \mathbb{R}^6$ its spatial velocity and $\vI \subset \mathbb{R}^{6\times6}$ its spatial inertia, given by
\begin{align*}
\vv&=\left[ \begin{smallmatrix} \omega_x \\ \omega_y \\ \omega_z \\ v_x \\ v_y \\ v_z \end{smallmatrix} \right] &
\vI&= 
 \left[ \begin{smallmatrix} I_{xx} & I_{xy} & I_{xz} & 0      & -m c_z &  \m m c_y \\
                      I_{xy} & I_{yy} & I_{yz} &  \m m c_z & 0      & -m c_x \\
                      I_{xz} & I_{yz} & I_{zz} & -m c_y &  \m m c_x &      0 \\
                      0      &  \m m c_z & - m c_y & m & 0 & 0 \\
                        -m c_z & 0      & \m m c_x & 0 & m & 0 \\
                      \m m c_y & -m c_x & 0 & 0 & 0 & m \end{smallmatrix} \right]
\end{align*}
where $[\omega_x,\omega_y,\omega_z]\T$ is the angular velocity of a body-fixed coordinate system, $[v_x, v_y,v_z]\T$ the linear velocity of the origin of that system, $m$ the body mass, $[c_x,c_y,c_z]\T$ the CoM location in body coordinates, and $I_{xx}$, $I_{yy}$, etc. the mass moments and products of inertia about the body coordinate origin. For later use, the first moments are abbreviated as $h_{x} = m c_{x}$ for the $x$-axis and similarly for the others, while $
\Iset$ denotes the subspace of $6\times 6$ matrices taking the above form. Rather than working with $T = \frac{1}{2} \vqd\T \vH(\vq) \vqd$ in our analysis, we'll instead consider the system kinetic energy through $T=\sum_{i} \frac{1}{2} \vv_i \T \vI_i \vv_i$ where $i$ sums over bodies.


Our preliminary development will consider the special case of rigid-body chains with bodies connected by single-degree-of-freedom joints. Bodies are numbered $1$ to $\nbod$ from the base to the end of the chain with a frame attached to each body (Fig.~\ref{fig:two-body}). Joint~$i$ connects body~$\pred{i}$ to body~$i$, with its configuration noted by $q_i\in \mathbb{R}$. Under these definitions, the spatial velocities of neighboring bodies are related as:
\begin{equation}
\vvb = \Xbaq \, \vva + \vPhib \, \qdb \label{eq:two_body_diff_kin}
\end{equation}
where $\Xba \in \mathbb{R}^{6\times 6}$ is a spatial transform between frames and the vector $\vPhib \in \mathbb{R}^{6 \times 1}$  describes the free motion for the joint. For instance, for a revolute joint about $\hat{z}_\bb$, $\vPhib = [0\,0\,1\,0\,0\,0]\T$.

\section{Two Body Case: General Spatial Motion}
\label{sec:Floating_Two_Body}

We begin by considering a single pair of bodies within the chain, and we use this simple case to mathematically develop the main ideas of the previous section. Consider bodies $\ba$ and $\bb$ where body $\ba$ is able to move with any general spatial velocity. This case could occur, for example, if body $\ba$ were a floating base or if it were a body in a fixed base system that was far enough away from the base of the mechanism to experience all 6 spatial degrees of freedom in its movement.

\begin{figure}[t]
	\center
	\includegraphics[width = \columnwidth]{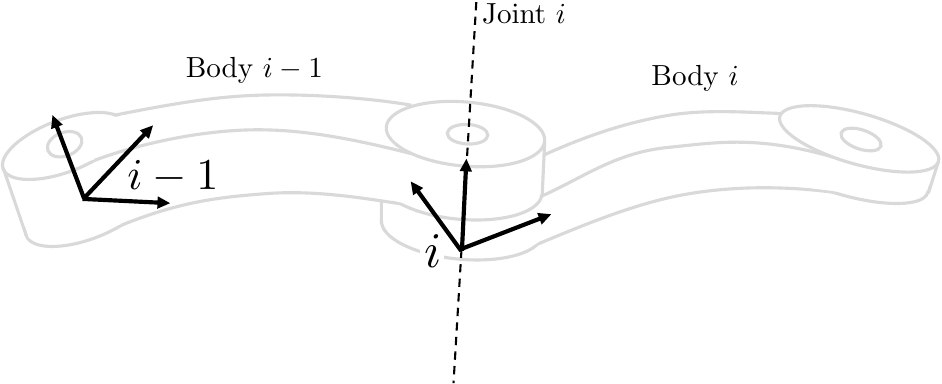}
	\caption{Example two-body system. Body $\ba$ can move freely, while body $\bb$ is attached via a single-degree-of-freedom joint (revolute as shown). Frames $\ba$ and $\bb$ are rigidly attached to bodies $\ba$ and $\bb$ respectively. 
 }
	\label{fig:two-body}
\end{figure}

\subsection{Undetectable Changes in Inertia }
We proceed to examine conditions under which the inertial parameters of these bodies can be changed without affecting the system's dynamics. To do so, we 
consider a collection of changes in inertias $ \dIa$ and $\dIb$. For these changes to have no effect on the dynamics, they must not change the kinetic energy
\[
\delta T =\frac{1}{2} \vva\T\, \dIa \,\vva + \frac{1}{2} \vvb\T \,\dIb\, \vvb = 0
\]
 Using \eqref{eq:two_body_diff_kin}, the variation $\delta T$ can be factored as:
\begin{equation}
 \frac{1}{2} \begin{bmatrix} \vva \\ \qdb \end{bmatrix}\T \begin{bmatrix} \dIa + \XTba \,\dIb\, \Xba & \XTba \, \dIb\, \vPhib \\[.75ex] \vPhib\T\, \dIb\, \Xba & \vPhib\T\, \dIb\, \vPhib \end{bmatrix} \begin{bmatrix} \vva \\ \qdb \end{bmatrix}
 \label{eq:TwoBodyMassMatrix}
\end{equation}

For the kinetic energy variation $\delta T$ to always be zero, each entry of the above matrix must be zero. Considering the off-diagonal blocks, this condition requires
\begin{equation}
0 =  \vPhib\T \dIb \label{eq:revised_single_dof_cond2}
\end{equation}
since $\Xba$ is full rank. 
The condition $\vPhi_i\T \dIb \vPhi_i = 0$ for the lower-right block is redundant with \eqref{eq:revised_single_dof_cond2}.
For the upper-left block to be zero, it must be that
\begin{align}
\mathbf{0} &= \dIa +  \XTbaq\, \dIb\, \Xbaq && \forall \qb \label{eq:I1C}
\end{align}
 Toward more general results later the manuscript, we note that since $\vva$ and $\vvb$ can individually take any values, these two conditions are equivalent to:
\newcommand{\myobjscale}{.98}
\begin{align}
\!\!\scaleobj{\myobjscale}{0} & \scaleobj{\myobjscale}{=\vPhib\T \dIb \vvb} && \scaleobj{\myobjscale}{\forall\, \vvb} \label{eq:twoBodyTcond2}\\
\!\!\scaleobj{\myobjscale}{0} & \scaleobj{\myobjscale}{=\vva\T \left[ \dIa +  \XTbaq\, \dIb\, \Xbaq \right] \vva} \!\!  &&\scaleobj{\myobjscale}{\forall \vva, q_i} \label{eq:twoBodyTcond1}
\end{align}
The first condition \eqref{eq:twoBodyTcond2} encodes that the change in inertia $\dIb$ must not change the projection of body~$\bb$'s momentum ($\vI_\bb \vvb$) along the joint free mode $\vPhib$. For example, for a revolute joint about $\hat{z}_\bb$, the angular momentum about $\hat{z}_\bb$ must not change due to $\delta \vI_i$.


Returning to condition \eqref{eq:I1C}, we see the first appearance of a condition regarding an inertia transfer. The sum $\dIa +  \XTbaq\, \dIb\, \Xbaq$ represents the change in the total inertia of the two bodies combined, where $\XTbaq\, \dIb\, \Xbaq$ gives how $\dIb$ maps back to its parent. As a result, \eqref{eq:I1C} requires that the combination of $\dIa$ and $\dIb$ must represent an even exchange of inertia between the bodies for all joint angles. 
In the case when  when $q_i=0$, \eqref{eq:I1C} requires
\begin{align}
\dIa = - \XTbaz \,\dIb \, \Xbaz && {\rm(Transfer~Assign.)}
\label{eq:Transfer_Assignment}
\end{align}
which encodes an assignment for a transfer of inertia between the bodies. Considering changes in configuration via a time derivative of \eqref{eq:I1C} then requires
\begin{align}
\bzero & = \frac{\rm d}{{\rm d} t} \left(-\XTbaq \, \dIb\, \Xbaq \right )
\label{eq:derivOfMap}
\end{align}
which enforces that the mapping of $\dIb$ to body $\ba$ must not change over time.  Consider the property from Eq.~(2.45) of \citet{Featherstone08}
\begin{equation}
\frac{\rm{d}}{{\rm d}t} \Xbaq = -(\vPhib \qdb)\times\Xbaq
\label{eq:change_of_X_main}
\end{equation}
where $(\vv)\times \in \mathbb{R}^{6 \times 6}$ gives the spatial cross-product matrix with its form noted further in Appendix \ref{app:dynamics}.
This property enables simplifying condition \eqref{eq:derivOfMap}  to:
\begin{equation}
\bzero = \XTba [\, (\vPhib \qdb \times)\T \dIb + \dIb (\vPhib \qdb \times)\, ]\,\Xba  \label{eq:InvWithX}
\end{equation}
 Since $\Xba $ is full rank and $\qdb$ can be chosen arbitrarily, \eqref{eq:InvWithX} is equivalent to 
\begin{align}
(\vPhib \times)\T\, \dIb + \dIb \, (\vPhib \times)  = \bzero 
\label{eq:InertiaChange2}
\end{align}


\noindent {\bf Summary:} In the case of a body that experiences general spatial motion,  an exchange of inertia \eqref{eq:Transfer_Assignment} with its child will not affect the dynamics if and only if it 1) does not modify the projection of the child body's momentum along the joint and 2) has a variation to the child inertia that maps to its parent in a constant way across configuration
\begin{empheq}[box=\fbox]{align}
\bzero&= \vPhib\T \dIb &&{\rm(Momentum)}\label{eq:Momentum_Floating}\\
\bzero &= \Ichange{\dIb}{\vPhib} && {\rm(\Mapping)}
\label{eq:Invariance_Floating} 
\end{empheq}
We name these conditions the momentum and \mapping conditions. Conditions \eqref{eq:Momentum_Floating} and \eqref{eq:Invariance_Floating} are equivalent to Eq.~(18) of \citet{niemeyer1991performance} (equiv., Eqs.~(4.13) and (4.14) of \citet{Niemeyer90}) and to Eq.~(41) of \citet{Ayusawa14}. 
\begin{remark}
    \citet{Ayusawa14} includes an additional condition in their Eq.~(42), which comes from considering Coriolis/centripetal effects. However, since these effects are determined by the kinetic energy, the additional conditions are redundant with those herein, enabling a more compact treatment. 
\end{remark}
\begin{remark}
\label{rem:FirstRowOfMassMatrix}
Building on this simple case, we revisit the main result of \citet{Ayusawa14} on parameter identifiability for floating-base systems with only external force/torque measurements in Sec.~\ref{subsec:EnergyToDynamics} following the extension of our theory to multi-DoF joints.
\end{remark}
Returning to the high-level description of how parameters are identified, the momentum condition \eqref{eq:Momentum_Floating} enforces that the changes $\dIb$ must not affect the \textbf{}local joint torque, where the \mapping condition \eqref{eq:Invariance_Floating} enforces that the changes $\dIb$ must not be detectable via how they are mapped to the parent. 


\subsection{Example: Revolute Joint}
\label{sec:example_revolute}
This subsection gives an example of the momentum and mapping conditions with a revolute joint. We only consider parameter changes to body~$\bb$, since the corresponding changes to body~$\ba$ are determined with the transfer assignment \eqref{eq:Transfer_Assignment}. 
Consider a revolute joint about the local $\hat{z}_i$ axis such that $\vPhib = [0~ 0~ 1~ 0~ 0~ 0]\T$.  In this case the momentum condition \eqref{eq:Momentum_Floating} imposes
\begin{equation}
 \delta h_x = \delta h_y =  \delta I_{xz} = \delta I_{yz} = \delta I_{zz} = 0 \label{eq:RevoluteMomentumConditions}
\end{equation}
for the second link. The last three restrictions ($\delta I_{xz} = \delta I_{yz} = \delta I_{zz}=0$) ensure that the angular momentum about the $\hat{z}_\bb$ axis will remain unchanged for pure angular velocities of the body frame, while the first two $\delta h_x =\delta h_y = 0$ likewise ensure the same for linear velocities. The implication is that, for the second body, $h_x$, $h_y$, $I_{xz}$, $I_{yz}$, and $I_{zz}$ are identifiable via the joint torque.

Likewise the \mapping condition \eqref{eq:Invariance_Floating} imposes
\begin{equation}
\delta h_x = \delta h_y = \delta I_{xy} = \delta I_{xz} = \delta I_{yz}=  \delta I_{xx} - \delta I_{yy} = 0 \label{eq:MappingRevolute}
\end{equation}
to ensure that $\dI_\bb$ maps in a fixed way to the parent (i.e., that $\XMT{i}{\pred{i}}(q_i)\, \dI_\bb \, \XM{i}{\pred{i}}(q_i)$ is configuration invariant). Physically, these conditions are satisfied for any mass distribution that is symmetric about the $\hat{z}_\bb$ axis. Comparing with \eqref{eq:RevoluteMomentumConditions}, \eqref{eq:MappingRevolute} implies that $I_{xy}$ and $I_{xx} - I_{yy}$ are also identifiable, but via configuration dependence in the way these parameters are combined with the parent inertia.

To satisfy both the momentum and \mapping conditions, $\dI_\bb$ is left with three degrees of freedom 
\begin{equation}
\delta m, \delta h_z, \delta I_{xx} = \delta I_{yy} \nonumber
\end{equation}
Notation for this third freedom signifies that changes in $I_{xx}$ must match those to $I_{yy}$. 
In this example, the inertia exchange has a physical interpretation. The transfer freedoms represent an exchange of any infinitely thin rod along $\hat{z}_\bb$.
The point mass $m_a$ in Fig.~\ref{fig:invariance} is a special case of such a rod.

In general, a rigid body attached to another by a revolute joint will add a maximum of seven parameters, in the absence of further motion restrictions.

\section{Two Bodies: Restricted Spatial Motion}
\label{sec:Two_Fixed}

Building on the previous section, we explore how additional spatial motion restrictions affect identifiability. Similar to the previous case, parameters of a body are identified via two mechanisms: (1) directly via how the parameters are sensed on the preceding joint, or (2) indirectly via variations in how the parameters map to their parent body. However, if the parent has motion restrictions, some of its parameters may not appear in the equations of motion and would be unidentifiable via the generalized force. Thus, for a child parameter to use the second identification mechanism, variations in its mapping to the parent must appear on identifiable parameters of the parent. 
We develop these conditions mathematically and then work through examples to illustrate them.

\subsection{Coupled Conditions for $\delta T=0$}
\label{sec:Two_Bodies_Coupled}

In the previous development, body $\ba$ was assumed able to experience any velocity $\vv_{\pred{i}}$. This simplified analysis when reducing conditions
\eqref{eq:twoBodyTcond2} and \eqref{eq:twoBodyTcond1}
for $\delta T = 0$
\begin{align}
\!\!\scaleobj{\myobjscale}{0} & \scaleobj{\myobjscale}{=\!\vPhib\T \dIb \vvb} && \scaleobj{\myobjscale}{\forall\, \vvb} \label{eq:Momentum_General}\\
\!\!\scaleobj{\myobjscale}{0} & \scaleobj{\myobjscale}{=\!\vva\T \left[ \dIa \!+\!  \XTbaq\, \dIb\, \Xbaq \right] \vva} \!\!  &&\scaleobj{\myobjscale}{\forall \vva, q_i} \label{eq:Mapping_General_Precursor}
\end{align}
Now, instead, we consider these conditions in the context where $\vva$ does not freely take values in $\mathbb{R}^6$.

\subsection{Decoupling Effects of Inertia Changes in the Mapping Condition}
\label{sec:Two_Bodies_Simplify}

The condition \eqref{eq:Mapping_General_Precursor} is impractical to verify as written due to its dependence on $\qb$ and $\vva$. It is also complicated by the fact that both $\dIa$ and $\dIb$ appear, which we address first.
%

For a given fixed $\vva$, \eqref{eq:Mapping_General_Precursor} must hold for all $\qb$. Equivalently, the expression on the right side of \eqref{eq:Mapping_General_Precursor} must be zero at some point, and have zero derivative everywhere. We again consider the transfer assignment (\ref{eq:Transfer_Assignment}):
\[
\dIa = - \XTbaz \,\dIb \, \Xbaz
\]
so that \eqref{eq:Mapping_General_Precursor} holds at $\qb=0$.  
Then, enforcing a zero derivative condition for \eqref{eq:Mapping_General_Precursor} gives:
\begin{equation}
\frac{\rm d}{{\rm d} q_i} \vva\T \left[ \XTbaq\, \dIb\, \Xbaq \right] \vva = 0 \label{eq:MappingCondition_derivative}
\end{equation}
which no longer has a dependence on $\dIa$.

\subsection{Simplification into a Finite Form}
\label{sec:Two_Body_Finite}

\subsub{Addressing dependence on $q_i$}
We can enforce a zero derivative everywhere by ensuring that all (first and higher-order) derivatives (w.r.t., $q_i$) are zero at $\qb=0$. For the first derivative:
\begin{align*}
0 &= \left. \frac{{\rm d}}{ {\rm d} q_i} \vva\T \left[ \dIa +  \XTbaq\, \dIb\, \Xbaq \right] \vva \right|_{\qb=0} \\
  &= \vva\T \XTbaz\, \left[  \left(\vPhib \times\right)\T \! \dIb + \dIb \left(\vPhib \times \right)  \right] \Xbaz \, \vva 
\end{align*}
While the analogous condition for the $k$-th derivative w.r.t.~$q_i$ is equivalent to:
\begin{equation} \vva\T\, \XTbaz\, \dIb^{(k)} \, \Xbaz \, \vva  = 0 \label{eq:Mapping_General}
\end{equation}
where $\dIb^{(0)} = \dIb$ and 
\[
\dIb^{(k+1)} = \left(\vPhib \times\right)\T \dIb^{(k)} + \dIb^{(k)} \left(\vPhib \times \right)
\]
The matrix $\dIb^{(k)}$ captures the $k$-th derivative in how $\dIb$ maps to the parent. The mapping condition for the case of unrestricted motion is equivalent to $\dIb^{(1)}= \bzero$. Each successive derivative is linear in the previous, and thus, for all derivatives after $k=10$, the condition \eqref{eq:Mapping_General}  can be guaranteed to be redundant via the Cayley-Hamilton Theorem \citep{Rugh96}.



\subsub{Addressing velocity dependence}
Repeating the momentum condition, the previous development is rewritten as
\begin{align}
0 &= \vPhib\T \dIb \vvb && \forall\,\vvb \label{eq:Momentum_General_2}\\
0 &= \vva\T \left[ \XTbaz\, \dIb^{(k)}\, \Xbaz \right] \vva && \forall \vva \label{eq:Mapping_General_2} \\
 &\qquad \quad \qquad \qquad \quad \forall\,k=1,\ldots,10 \nonumber
\end{align}
Both conditions remain impractical to verify as written since they give an infinite number of constraints.  To simplify them, we introduce spans and create an equivalent finite set of conditions.  This step provides the key to our contribution.  We address each condition separately.

\subsub{Linear velocity span for momentum condition \eqref{eq:Momentum_General_2}} 
First, let the set of possible velocities for $\vva$ be denoted by $\Vset{\ba}$. The set of possible velocities for body $\bb$ is
\begin{align}
\scaleobj{.95}{\!\!\Vset{\bb} = \{ \Xbaq\, \vva + \vPhib \, \qdb ~|~ \vva \in \Vset{\pred{i}},\,\qb,\qdb \in \mathbb{R}\}}
\end{align}
Then, consider the following linear velocity span
$\Vspan{i}$, which is subspace of $\mathbb{R}^6$ and characterizes the motion restrictions:
\begin{align}
\Vspan{\bb} &= \spn\{\vv ~|~ \vv \in \Vset{\bb} \}
\label{eq:VSpan}
\end{align}

\begin{remark}
Note that $\Vset{\bb} \subseteq \Vspan{i}$, since not all elements of $\Vspan{\bb}$ need be attainable velocities. Rather, all elements of $\Vspan{\bb}$ can be expressed as the linear combination of some attainable velocities. 
\end{remark}

Since the set $\Vspan{i}$ is a finite-dimensional subspace, we consider a set of basis vectors
\begin{align}
\Vspan{i} &= \spn \{ \vv_{i,1},\ldots, \vv_{i, \Nv{i}} \} 
\end{align}
where we collect the basis elements for $\Vspan{i}$ into a matrix $\vV_\bb = \left[ \vv_{i,1} ,\ldots,\vv_{i,\Nv{i}} \right]$ so that ${\rm Range}(\vV_{\bb}) = \Vspan{\bb}$. 
We will later provide methods to compute $\vV_{\bb}$.  With this, the infinite set of conditions \eqref{eq:Momentum_General_2} are equivalent to the finite set of conditions
\begin{align}
0 &= \vPhib\T \, \dIb \, \vv_{i,\ell} , \quad  \forall\,\ell=1,\ldots,\Nv{i} \label{eq:MomentumBasis}
\end{align}
which can easily be collected into the vector equation
\begin{align}
\bzero &= \vPhib\T \, \dIb \, \vV_i \label{eq:MomentumFinite}
\end{align}

\subsub{``Quadratic'' velocity span for mapping condition \eqref{eq:Mapping_General_2}} 
Unfortunately, the mapping condition depends on quadratic velocity terms.  We employ the $\trace()$ operator toward creating a ``quadratic'' velocity span.  Fortunately, leveraging the linear inertial parameterization in the next section will enable simplifying this step.

\newcommand{\W}{\mathbf{W}}

Using $\trace()$, we rewrite the mapping condition \eqref{eq:Mapping_General_2}
\begin{align}
0 &= \trace\!\left(\vva \,\vva\T \, \left[ \XTbaz\, \dIb^{(k)}\, \Xbaz \right] \right)\label{eq:Mapping_General_Trace} \\
       &\qquad \quad \qquad \qquad \quad \forall \vva \in\mathcal{V}^*_{\pred{i}}, \forall\,k=1,\ldots,10 \nonumber
\end{align}
and create the ``quadratic'' velocity span
\begin{align}
\Wspan{\pred{i}} &= \spn\{\vv\vv\T ~|~ \vv \in \Vset{\pred{i}} \}
\label{eq:WSpan}
\end{align}
where $\Wspan{\pred{i}}$ is a finite-dimensional subspace of $\mathbb{R}^{6\times 6}$.  We also consider the set of basis elements
\begin{align}
\Wspan{\pred{i}} &= \spn \{ \W_{\pred{i},1},\ldots, \W_{\pred{i}, \Nw{\pred{i}}} \}
\end{align}
for converting the mapping condition into the finite set of conditions
\begin{align}
0 &= \trace\!\left(\W_{\pred{i},j} \, \left[ \XTbaz\, \dIb^{(k)}\, \Xbaz \right] \right) \label{eq:mappingFinite} \\
        &\qquad \qquad \quad \forall\, k=1, \ldots, 10, \quad j = 1,\ldots, \Nw{\pred{i}} \nonumber
\end{align}
Unlike \eqref{eq:MomentumBasis}, the iteration over all basis elements can not be collected into a vector equation.  Sec.~\ref{sec:Parameters} will avoid this issue.

Again, physically, \eqref{eq:MomentumFinite} encodes that $\dIb$ must not change the components of body~$\bb$'s momentum associated with the joint $\vPhib$, while \eqref{eq:mappingFinite} encodes that $\dIb$ maps in a fixed way onto the identifiable parameters of the parent.

\subsection{Main Theorem}

Although we've developed this theory for a pair of bodies, an inductive argument (in the appendix) shows that we can consider the momentum and mapping conditions on inertia transfers joint-by-joint in chains of bodies to build out all unidentifiable changes to the inertia. 

\begin{theorem}
\label{thm:main}
(Main Result) Consider a serial-chain rigid-body system in the absence of gravity, with the following inertia transfer subspaces for each joint ($i \in \{1,\ldots,\nbod\}$):
\begin{align}
\TsetInertia_i =  \{ \dI_1, \ldots&, \dI_N \in \Iset ~| ~ \exists \dI_0 \in \Iset , \dI_j=\bzero {\rm{~if~}} j\notin\{i,\pred{i}\}, \nonumber\\
\dI_{\pred{i}} &= - \XJT{i}\, \dI_i \, \XJ{i}, \nonumber\\ 
\bzero &= \vPhi_i\T \, \dI_i \,\vV_{i} ,  \nonumber\\
\bzero &= \trace( \W_{\pred{i},j} \, \XJT{i}\, \dI_i^{(k)}\, \XJ{i} \,)\nonumber \\ 
& ~~~~~~~~~~~~~\forall\, {k = 1, \ldots, 10}, \quad j={1, \ldots, \Nw{\pred{i}}} \nonumber 
\nonumber
~\}\nonumber
\end{align}
Then, the set of all structurally unobservable inertia changes is given by $\TsetInertia_1 \oplus \cdots \oplus \TsetInertia_N$, where $\oplus$ denotes a direct sum of vector subspaces.
\end{theorem}
\begin{proof} See Appendix \ref{sec:Appendix:varH}.
\end{proof}

We proceed first with a set of examples before focusing on making this result amenable to practical computation with inertial parameters in the subsequent sections.

\subsection{Example}
\label{sec:simple_Examples}

\begin{figure}[t!]
\center
\includegraphics[width=.6\columnwidth]{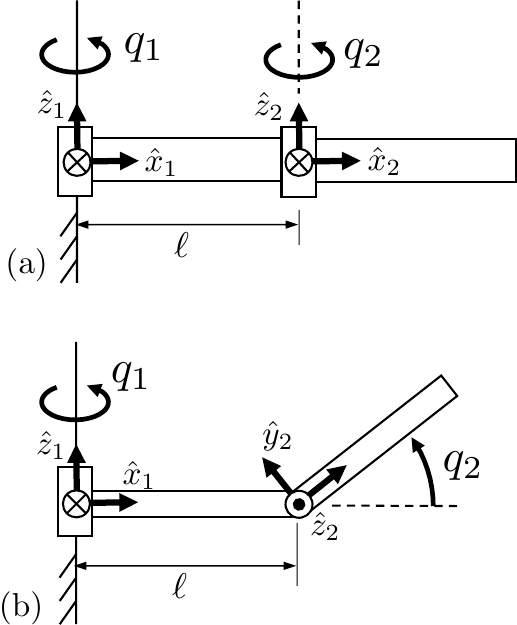}
\caption{Two simple RR manipulators.}
\label{fig:simpleManips}
\end{figure}

We work through these conditions within the context of the two 2R manipulators shown in Fig.~\ref{fig:simpleManips}. 

\subsubsection{Simple system with parallel joint axes}
\label{sec:Simple1}
The system in Figure \ref{fig:simpleManips}(a) is planar, with the spatial velocities of its bodies and a basis matrix $\vV_2$ given by:
\[
\vv_1 = \left[\begin{smallmatrix} 0 \\ 0 \\ \qd_1 \\ 0 \\ 0 \\0 \end{smallmatrix}\right] ~~~~~ \vv_2 = \left[\begin{smallmatrix} 0 \\ 0 \\ \qd_1+ \qd_2 \\ \ell s_2 \qd_1 \\ \ell c_2 \qd_1  \\ 0 \end{smallmatrix}\right] ~~~~~ 
\vV_2 = \left[\begin{smallmatrix} 0 & 0 & 0\\ 0 &0 &0 \\ 1 &0 &0 \\ 0 & 1 &0 \\ 0 & 0 & 1 \\0 & 0 & 0 \end{smallmatrix}\right]
\nonumber
\]
where $c_2 = {\rm cos}(q_2)$ and $s_2 = {\rm sin}(q_2)$. Note that the velocity span for body~2 includes linear velocities in both the $x$ and $y$ directions, since effects from $\dot{q}_1$ will be in the $\hat{x}_2$ or $\hat{y}_2$ direction depending on the value of $q_2$. Considering the span for $\vv_1 \vv_1\T$ we likewise have a basis for $\mathcal{W}_1$ consisting of the single element:
\[
\vW_{1,1}= \vPhi_1 \vPhi_1\T = 
 \left[ \begin{smallmatrix} 0 & 0 & 0 & 0 & 0 &  0 \\
                      0 & 0 & 0 & 0 & 0 &  0 \\
                      0 & 0 & 1 & 0 & 0 &  0 \\
                      0 & 0 & 0 & 0 & 0 &  0 \\
                      0 & 0 & 0 & 0 & 0 &  0 \\
                      0 & 0 & 0 & 0 & 0 &  0 \end{smallmatrix} \right]
\]
which has a $1$ in the slot of the matrix corresponding to $I_{zz}$ (the only identifiable parameter for body $1$).




%

The body 1 momentum condition  $\bzero = \vPhi_1\T \, \dI_1\, \vV_1$ imposes 
\[
\delta I_{zz_1} = 0
\]
The body 1 mapping condition is empty as $\Wspan{0}$ is empty - any parameters transferred to ground are unidentifiable.  The body 2 momentum condition  $\bzero = \vPhi_2\T \, \dI_2\, \vV_{2}$ imposes 
\[
 \delta h_{x_2} = \delta h_{y_2} = \delta I_{zz_2} = 0
\]
And the body 2 mapping condition
 \[
0  = \trace(\vW_{1,1} \, \XMT{2}{1}(0) \,\dI_2^{(k)} \,\XM{2}{1}(0) \,)
 \]
 or, equivalently, using $\vW_{1,1} = \vPhi_1 \vPhi_1\T$, written as
 \[
0  = \vPhi_1\T \, \XMT{2}{1}(0) \,\dI_2^{(k)} \,\XM{2}{1}(0) \, \vPhi_1
 \]
 imposes
 \begin{align*}
   \delta h_{y_2} &= 0 & \text{when\ } k &= 1 \\
   \delta h_{x_2} &= 0 & \text{when\ } k &= 2
 \end{align*}
 The constraints are redundant for all $k$ higher. Thus, we find four base parameters: $I_{zz_1}$ for body 1,  $h_{x_2}$,  $h_{y_2}$, and $I_{zz_2}$ for body 2.

For reference, consider the mass matrix $\vH(\vq)$
\begin{equation*}
  \left[ \begin{smallmatrix}
    I_{zz_1}^\star + I_{zz_2} + 2 \ell\, h_{x_2} c_2 - 2 \ell\, h_{y_2} s_2
    \quad  
    &
    I_{zz_2} + \ell\, h_{x_2} c_2 - \ell\, h_{y_2} s_2
    \\[1ex]
    I_{zz_2} + \ell\, h_{x_2} c_2 - \ell\, h_{y_2} s_2
    &
    I_{zz_2}
    \end{smallmatrix} \right]
\end{equation*}
where $I_{zz_1}^\star = I_{zz_1} + m_2\ell^2$.  We see the expected four parameter groupings: $I_{zz_1} + m_2\ell^2$, $I_{zz_2}$, $h_{x_2}$, $h_{y_2}$.  Note the effect of $m_2$ is transferred from body 2 and combined with $I_{zz_1}$.

In more detail, body 1 experiences only a pure rotation, so the body 1 momentum condition shows we can only identify the rotational inertia about the joint axis.  Body 2 rotates in the same plane but may also translate.  So the body 2 momentum condition also considers the torque needed to linearly move the center of mass ($h_{x_2}$, $h_{y_2}$ depending on $q_2$).  The mapping condition also certifies the identifiability of $h_{x_2}$, $h_{y_2}$, since they map inconsistenly to body 1 (compare to $m_b$, $m_c$ in Fig.~\ref{fig:invariance}). But $m_2$ itself maps to body 1 in a fixed manner (compare to $m_a$ in Fig.~\ref{fig:invariance}). Overall, seven parameters of body 2 are not identifiable individually, and can be transferred to body 1.


\newcommand{\XJJ}{\XM{}{}(0)}
\newcommand{\XJJT}{\XMT{}{}(0)}

\subsubsection{Simple system with perpendicular joint axes}
\label{sec:Simple2}

For the system in Fig.~\ref{fig:simpleManips}(b), the second body can move out of the $\hat{x}_1,\hat{y}_1$ plane, resulting in additional motion freedom. A full derivation for this example is in Appendix~\ref{app:example2}, with main results summarized here. The body 2 momentum condition provides:
\beq
\delta I_{xz_2} = \delta I_{yz_2} = \delta I_{zz_2} = 0 \label{eq:SimpleExample2Momentum}
\eeq
which imply that $ I_{xz_2}$, $ I_{yz_2}$, and $ I_{zz_2}$ can be identified through the second joint torque. The body 2 \mapping condition requires:
\beq
 \delta h_{x_2} = \delta h_{y_2} = \delta I_{xx_2} - \delta I_{yy_2} = \delta I_{xy_2} = 0 \label{eq:SimpleExample2Invariance}
\eeq
which imply that $h_{x_2}$, $h_{y_2}$, $I_{xx_2} -  I_{yy_2}$, and $I_{xy_2}$ can be identified via the first joint torque. 
Note again, due to motion restrictions, conditions \eqref{eq:SimpleExample2Momentum} and \eqref{eq:SimpleExample2Invariance} represent a subset of those in the free-floating case \eqref{eq:RevoluteMomentumConditions} and \eqref{eq:MappingRevolute} respectively. However, unlike the previous manipulator with parallel joints, the seven conditions from  \eqref{eq:SimpleExample2Momentum} and \eqref{eq:SimpleExample2Invariance} are independent. As a result, the unobservable transfers across Joint 2 have three degrees of freedom, which must coincide with those in the free-floating case. This gives eight base parameters for the mechanism (one from body 1, and seven from body 2).

\section{Momentum/Mapping Conditions in Terms of Inertial Parameters}
\label{sec:Parameters}

Up until this point, we have phrased the momentum and mapping conditions in terms of changes to the inertia matrices. In this section, we instead translate these conditions to ones on the inertial parameters, relying heavily on the linearity of the inertia matrices in the inertial parameters of each body: 
\begin{equation}
\vpi_i = [m, h_x, h_y, h_z, I_{xx}, I_{xy}, I_{xz}, I_{yy}, I_{yz}, I_{zz}]\T  \label{eq:paramsDef}
\end{equation}
We switch from the matrix form of the spatial inertia to the parameter vector via the notation
$\vpi_i = [ \vI_i ]^\vee$ 
where 
the vee $\vee$ demotes an inertia to a parameter vector. 

\subsection{Momentum Condition}
\label{sec:MomentumParams}

To express the momentum condition $\vPhi_i\T\dI_i \vV_i = \bzero$ from \eqref{eq:MomentumFinite} using inertial parameters, we temporarily consider $\vpi\in \mathbb{R}^{10}$ for a single body with  and  define the matrix:
\begin{equation}
\vC(\vV, \vPhi) = \left[ \frac{\partial}{\partial \vpi} \vV\T\,  \vI(\vpi) \, \vPhi \right]\T
\label{eq:Cmom}
\end{equation}
Then, inertia variations $\dI_i$ satisfy the momentum condition if and only if the corresponding parameter changes $\dpi_i$ satisfy $\MomentumCondition{i} = \bzero$.

\subsection{Mapping Condition}
\label{sec:MappingParams}

The earlier version of the mapping condition from \eqref{eq:Mapping_General_2} is recalled below
\begin{align}
0 &= \vva\T \left[ \XTbaz\, \dIb^{(k)}\, \Xbaz \right] \vva && \forall \vva \label{eq:SimplifiedMappingAgain}\\
 &\qquad \quad \qquad \qquad \quad \forall\,k=1,\ldots,10 \nonumber
\end{align}
which we favor over \eqref{eq:mappingFinite} to simplify our development in this section. 

Since this condition is a bit more complicated, we will work on it from the outside moving in, making use of three separate helper functions:
\begin{align}
\vk(\vv) &:= \nabla_{\vpi} \, \frac{1}{2}\,\vv\T\,\vI(\vpi) \, \vv 
\\ 
\vB(\vX) &:= \frac{\partial}{\partial \vpi} \left[\vX\T \,\vI(\vpi)\, \vX  \right]^\vee \\
\vA(\vPhi) &:= \frac{\partial }{\partial \vpi} \left[    (\boldsymbol{\Phi} \times)\T\, \vI(\vpi) + \vI(\vpi)\, (\boldsymbol{\Phi} \times)  \right]^\vee  \label{eq:A_InertialParams}
\end{align}
which we now use in three separate simplifying stages.

For the first stage, we note that $\vk(\vv_i)$ gives the gradient of the kinetic energy for the $i$-th body w.r.t.~its inertia parameters. In this regard, if $ \vk(\vv)\T \dpi_i =0$ for all attainable velocities $\vv \in \Vset{i}$, the linear combination of parameters given via $\dpi_i$ does not appear in the kinetic energy of the mechanism (i.e., that combination is unidentifiable).    

With this motivation, consider the span of the vectors $\vk(\vv_i)$ over all attainable velocities for body $i$:
\[
\mathcal{K}_i = \spn\{ \vk(\vv) ~|~ \vv \in \Vset{i} \}
\]
Analogous to the previous span $\Vspan{i}$, $\mathcal{K}_i$ is a {\em vector} subspace of $\mathbb{R}^{10}$ and thus has a finite basis representation that we set via selecting any matrix $\vK_i$ with ${\rm Range}(\vK_i) = \mathcal{K}_i$. A change $\dpi_i$ to body~$i$ is unidentifiable if and only if $\vK_i\T\, \dpi_i = \bzero$. Turning this around, the columns of $\vK_i$ form a basis for the coefficients of the inertial parameters of body~$i$ that are identifiable themselves or in combination with other bodies. 

Using this matrix, we provide equivalent conditions for the form of the mapping condition in \eqref{eq:SimplifiedMappingAgain} as:
\begin{align}
\bzero &= \vK_{\pred{i}}\T \left[\XJT{i}\, \dI_i^{(k)}\, \XJ{i} \right]^{\vee}\label{eq:Step1_Mapping}\\
 &\qquad \quad \qquad \qquad \quad \forall\, k=1,\ldots,10 \nonumber
\end{align}
which enforces that $\dI_i$ maps in a fixed way onto the identifiable parameters of the parent.

\begin{remark}
To connect this development to the previous section, we could alternatively compute $\mathcal{K}_i$ via the ``quadratic'' velocity span $\Wspan{i}$ as:
\[
\mathcal{K}_i = \spn\left\{ \nabla_{\vpi} \trace(\,\W \,\vI(\vpi)\,) ~|~ \W \in \Wspan{i} \right\}
\]
\end{remark}

Proceeding to our second stage of simplifications, consider the $10 \times 10$ parameter transformation matrix
\begin{equation}
\Bi = \vB( \, \XJ{i}\, )
\end{equation}
that maps parameters to the predecessor.  With this definition, we re-express the transfer assignment \eqref{eq:Transfer_Assignment} as:
\begin{equation}
\dpi_{\pred{i}} = - \Bi\, \dpi_i
\label{eq:parameter_transfer_assignment}
\end{equation}
Via this matrix, \eqref{eq:Step1_Mapping} is equivalent to
\begin{equation}
\vK_{\pred{i}}\T\, \Bi \, \left[\dI_i^{(k)} \right]^{\vee} = \bzero \quad \forall\, k=1,\ldots,10
\label{eq:rework2}
\end{equation}


As our last stage of simplification, we use the final helper function $\vA(\vPhi)$ from \eqref{eq:A_InertialParams}, which gives the rate of change in how parameters map to the parent with changes in joint angle. It follows from the definition of $\vA(\vPhi)$ that $\left[\dI^{(k)}\right]^{\vee} =\vA(\vPhi_i)^k\, \vpi_i$.
This property finally enables refactoring of \eqref{eq:rework2} as
\begin{equation}
\vK_{\pred{i}}\T \, \Bi \, \vA(\vPhi_i)^k \, \dpi_i  = \bzero \quad \forall\, k=1,\ldots,10
\label{eq:Cyea}
\end{equation}

\subsub{Summary} Inertia transfers \eqref{eq:parameter_transfer_assignment} across any joint~$i$ are unobservable to the kinetic energy if they satisfy:
\renewcommand{\myobjscale}{.97}
\begin{empheq}[box=\fbox]{align}
\!\scaleobj{\myobjscale}{\bzero}  & \scaleobj{\myobjscale}{= \MomentumCondition{i}} && \!\!\! \scaleobj{\myobjscale}{\textrm{(Momentum)}}  \label{eq:Momentum_Fixed} \\
\!\scaleobj{\myobjscale}{\bzero}  &\scaleobj{\myobjscale}{= \vK_{\pred{i}}\T \, \Bi \, \vA(\vPhi_i)^k \, \dpi_i } && \!\!\!\scaleobj{\myobjscale}{\textrm{(\Mapping)}}  \label{eq:Mapping_Fixed}\\
    & &&\hspace{-70px}\scaleobj{\myobjscale}{\forall\, {k = 1, \ldots, 10}}\nonumber
\end{empheq}

\section{Kinematic Chains Of Bodies: Recursion, Theorem, and Algorithm}
\label{sec:fb}

With the results of the previous sections in place, a remaining hurdle is how to compute bases for the attainable velocity spans $\Vspan{i}$ and identifiable parameter spans $\mathcal{K}_i$. In this section, recursive application of controllability and observability from linear systems theory is shown to play the key role needed. We develop these steps, and then present our main algorithm. While the focus remains on serial chains, extensions to tree-structure systems and multi-DoF joints are presented in Section \ref{sec:extensions}.


\subsection{Velocity Spans}

Bases $\vV_i$ for the velocity spans can be computed starting from ${\vV_0 = \bzero_{6 \times 1}}$ and proceeding outward, as in Fig.~\ref{fig:Controllability}.

\begin{figure}
\center
\includegraphics[height = .6 \columnwidth]{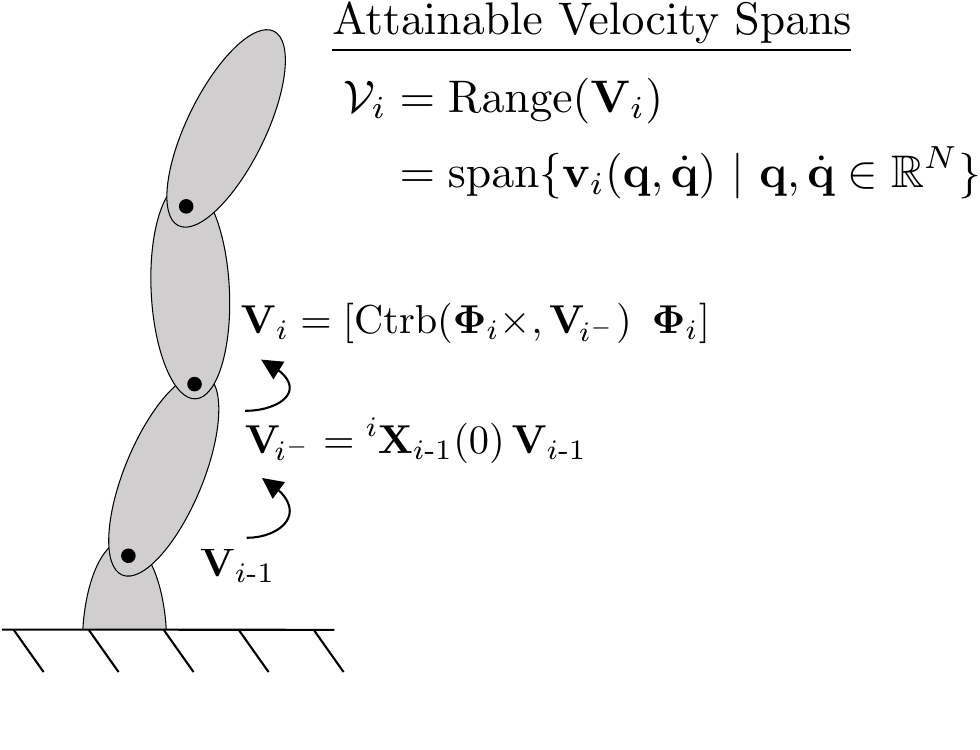}
\vspace{-10px}
\caption{Controllability analysis is applied recursively to obtain a basis $\vV_i$ for the span of the attainable velocities for each body, starting with $\vV_0 = \bzero_{6 \times 1}$.}
\vspace{-10px}
\label{fig:Controllability}
\end{figure}

\begin{lemma}
\label{corollary:velocityProp}
Suppose a matrix $\vV_{\pred{i}}$ such that $\Vspan{\pred{i}} = {\rm Range}(\vV_{\pred{i}})$. Consider the quantities
\begin{align*}
\vVJi &= \XJ{i} \, \vV_{\pred{i}}  \\
\vVim &= {\rm Ctrb}\!\left(\, ({\boldsymbol \Phi}_i \times) ,\vVJi \,\right) \\
      &=[ \vVJi,\, ({\boldsymbol \Phi}_i \times)\,\vVJi ,\ldots , ({\boldsymbol \Phi}_i \times)^5\,\vVJi ]
\end{align*}
where ${\rm Ctrb}\!\left(\, ({\boldsymbol \Phi}_i \times) ,\vVJi \,\right)$ is the controllability matrix \citep{Rugh96} associated with the pair $\left(\, ({\boldsymbol \Phi}_i \times) ,\vVJi \,\right)$.  Then, the matrix
\begin{equation}
\mathbf{V}_i = \left [\, \vVim ~ \boldsymbol{\Phi}_i \, \right] 
\label{eq:ViProp}
\end{equation}
satisfies $\Vspan{i} = {\rm Range}(\vV_i)$, 
\end{lemma}
\begin{proof} See Appendix \ref{sec:app:proof_lemma1}. 
\end{proof}

To provide intuition into this result, $\vVJi$ first transforms $\vV_{\pred{i}}$ {\em across the link}. Then $\vVim={\rm Ctrb}\!\left(\, ({\boldsymbol \Phi}_i \times) ,\vVJi \,\right)$  captures all possible velocity effects from the predecessor following transformation {\em across the joint}. Finally, the second term  $\boldsymbol{\Phi}_i$ in \eqref{eq:ViProp} adds in relative velocities from the joint itself.

\subsection{ Identifiable Parameter Coefficient Spans }
\label{sec:K_Observability}

Using the previous helper function definitions, $\vK_i$ can be computed recursively, starting from $\vK_0= \bzero_{10\times 1}$, and propagating outward via the following lemma.

\newcommand{\overtext}[1]{
\begin{minipage}[t]{1.05in}
\begin{flushright}
#1 
\end{flushright}
\end{minipage}
}
\begin{figure*}
\begin{overpic}[grid=false,tics=5, width=\textwidth]{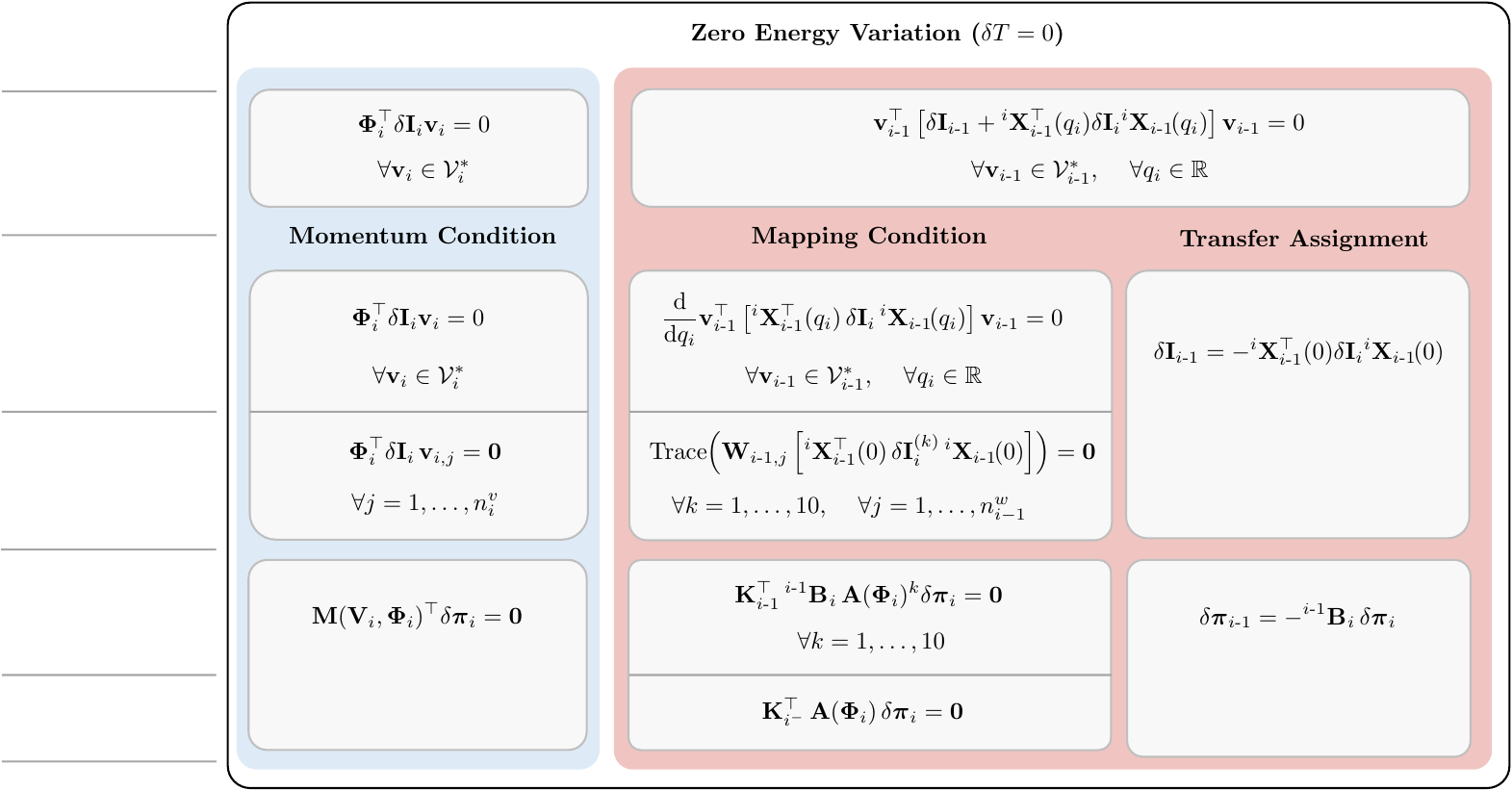}
\put(-3,43.5){\overtext{Infinite Form\\ (Coupled)\\ \bf Section~\ref{sec:Two_Bodies_Coupled}}}
\put(-3,31.5){\overtext{Infinite Form\\ (Decoupled)\\ \bf Section~\ref{sec:Two_Bodies_Simplify}}}
\put(-3,21.5){\overtext{Finite Form\\ \bf Section~\ref{sec:Two_Body_Finite}}}
\put(-3,12.5){\overtext{via Parameters\\ \bf Sec.~\ref{sec:MomentumParams} \& \ref{sec:MappingParams}}}
\put(-3,5.5){\overtext{Alt.~From\\ \bf Section~\ref{sec:K_Observability}}}
\put(36,40){\eqref{eq:twoBodyTcond2}}
\put(94,40){\eqref{eq:twoBodyTcond1}}
\put(35,26){\eqref{eq:Momentum_General}}

\put(69.5,26){\eqref{eq:MappingCondition_derivative}}
\put(94,25){\eqref{eq:Transfer_Assignment}}
\put(35,18){\eqref{eq:MomentumFinite}}
\put(69.5,18){\eqref{eq:mappingFinite}}
\put(35,8.5){\eqref{eq:Momentum_Fixed}}
\put(69.5,8.5){\eqref{eq:Mapping_Fixed}}
\put(93,8.5){\eqref{eq:parameter_transfer_assignment}}
\put(69.5,3.5){\eqref{eq:Mapping_Parameters}}

\end{overpic}
\caption{Summary of conditions for a parameter transfer at joint $i$ to be unobservable to the kinetic energy.}
\label{fig:summary}
\end{figure*}

\begin{lemma} 
\label{lemma:obs}
Suppose a matrix $\vK_{\pred{i}}$ such that $\mathcal{K}_{\pred{i}} = {\rm Range}(\vK_{\pred{i}})$. Consider the quantities 
\begin{align}
\vKJi &= \Bi\T \, \vK_{\pred{i}} \nonumber \\
\vKim &= {\rm Ctrb}(\, \vA(\vPhi_i)\T, \, \vKJi \,) \label{eq:Ki_minus} \\
      &= \left[ \vKJi,\ \vA(\vPhi_i)\T\,\vKJi ,\ \ldots ,\  \vA(\vPhi_i)^9{}\T\,\vKJi \right] \nonumber
\end{align}
Then, the matrix 
\begin{align*}
\vK_i &= \begin{bmatrix} \vKim  & \MomentumColumns{i}  \end{bmatrix}
\end{align*}
satisfies $\mathcal{K}_i = {\rm Range}(\vK_i)$.
\end{lemma}
\begin{proof} See Appendix \ref{sec:Proof_Observability}, which proves the equivalent characterization in Remark \ref{rem:observe} below.
\end{proof}

Similar to before, $\vKJi = \Bi\T \, \vK_{\pred{i}} $ transforms the identifiable parameters of body $\pred{i}$ {\em across the link},  while $\vKim  = {\rm Ctrb}(\, \vA(\vPhi_i)\T, \, \vKJi \,)$ then captures all possible transformations {\em across the joint}. Finally, the extra columns $\MomentumColumns{i}$ add on additional parameters for body $i$ that can be identified from measuring the torque at joint $i$.

As an additional outcome, the \mapping condition \eqref{eq:Cyea} can be written as
\begin{equation}
\ObservabilityRows{i} \, \vA(\vPhi_i)\, \dpi_i = \bzero\,. \nonumber
\end{equation}

\begin{remark}
\label{rem:observe}
Due to the duality between controllability and observability \cite{Rugh96}, we can equivalently view Lemma~\ref{lemma:obs} as characterizing the identifiable parameter combinations via an observability matrix:
\[
\vK_i\T = \begin{bmatrix} {\rm Obs}(\,\vKJiT ,\, \vA(\vPhi_i) \,)  \\[1ex]
 						\MomentumRows{i}  \end{bmatrix}
\]
where ${\rm Obs}(\,\vKJiT ,\, \vA(\vPhi_i) \,)$ gives the observability matrix for the pair $( \, \vKJiT ,\, \vA(\vPhi_i) \,)$.
\end{remark}


\subsection{Summary}
An inertia transfer $\dpi_{\pred{i}} = - \Bi \, \dpi_i$ across joint~$i$ is unobservable to the kinetic energy if it satisfies:
\begin{empheq}[box=\fbox]{align}
\MomentumCondition{i} = \bzero      && \textrm{(Momentum)   }  \label{eq:Momentum_Parameters} \\
\ObservabilityRows{i}\, \vA(\vPhi_i)\, \dpi_i = \bzero && \textrm{(\Mapping) } \label{eq:Mapping_Parameters}
\end{empheq}
A summary of the steps leading to these final conditions is provided in Figure~\ref{fig:summary}.

 Both conditions can further be written together using a transfer nullspace descriptor $\vN_i$ 
 given by
\begin{equation}
\vN_i = \begin{bmatrix} \MomentumRows{i} \\[1ex]
\ObservabilityRows{i} \, \vA(\vPhi_i)\\[1ex]
\end{bmatrix} \nonumber
\end{equation}
In summary, Lemmas \ref{corollary:velocityProp} and  \ref{lemma:obs} provide the main recursive steps for computing the spans $\mathcal{V}_i$ and $\mathcal{K}_i$
, with the transfer nullspace descriptors $\vN_i$ then collecting the momentum and mapping conditions together in the single equation $\vN_i \, \dpi_i = \bzero$.


\subsection{Theorem}

\begin{theorem}
\label{thm:main_rev}
(Main Result, Restated) Consider a serial-chain rigid-body system in the absence of gravity, with the following inertia transfer subspaces for each joint ($i \in \{1,\ldots,\nbod\}$):
\begin{align}
\mathcal{T}_i =  \{ \dpi \in \mathbb{R}^{10N}~| ~ \dpi_0 \in &\, \mathbb{R}^{10},~ 
\dpi_{\pred{i}} = - \Bi \, \dpi_i , \nonumber\\ 
\bzero &= \vN_i\, \delta \nonumber \vpi_i\\ 
\bzero &= \dpi_j {\rm{~if~}} j\notin\{i,\pred{i}\} ~\} \nonumber
\end{align}
Then, the structurally unobservable parameter subspace $\mathcal{N}$ satisfies $\mathcal{N} = \mathcal{T}_1 \oplus \cdots \oplus \mathcal{T}_N$.
\end{theorem}
\begin{proof} The proof follows from the same logic as Thm.~\ref{thm:main}.
\end{proof}


\begin{remark}
The above theorem only applies to the gravity-free case. In fixed-base robots, gravitational forces provide opportunity to identify additional mechanism parameters, decreasing the dimensionality of $\mathcal{N}$. The following section provides a simple method to address these effects within a recursive algorithm.
\end{remark}

\subsection{Addressing Gravity}
\label{sec:gravity}

Within rigid-body dynamics algorithms, effects of gravity are often addressed by fictitiously accelerating the base opposite gravity. This trick is applied in the Recursive-Newton-Euler algorithm \citep{LuhWalkerPaul80b} for inverse dynamics and the articulated-body algorithm for forward dynamics \citep{Featherstone08}. The same approach also works to address gravitational effects for identifiability. By seeding $\vV_0 = {}^0 \va_g$, with ${}^0 \va_g$ the gravity acceleration in the world coordinate, the recursive computations of this section result in modified transfer subspaces $\mathcal{T}_i$ that include gravitational considerations. Intuitively, this modification corresponds to adding a fictitious prismatic joint aligned with gravity at the base whose force is not measured. Appendix~\ref{sec:Conditions_g} rigorously analyzes the role of gravity on identifiability and further justifies this simple modification.    

\subsection{Algorithm Summary}

 Algorithm \ref{alg:RPNA} provides a compact method to recursively compute the parameter nullspace descriptors $\vN_i$. We name this method the Recursive Parameter Nullspace Algorithm (RPNA). As a practical matter, linearly dependent columns of $\vV_i$ or $\vK_i$ can be removed at any step in the algorithm. 
 A {\sc Matlab} implementation of the RPNA is provided open source at the following link: \url{https://github.com/pwensing/RPNA}.

\begin{remark}
The RPNA has described all unobservable parameter combinations through a direct sum of local transfers. The nullspace descriptors $\vN_i$ can also be used to compute bases for the system-wide parameter nullspace $\mathcal{N}$ and its orthogonal complement $\mathcal{N}^\perp$. Details are provided in Appendix~\ref{subsec:ParameterNullspace}. Any basis for $\mathcal{N}$ gives the unidentifiable parameter combinations for the mechanism, while any basis for $\mathcal{N}^\perp$ gives identifiable parameter combinations. 
\end{remark}

\begin{remark}
One might alternatively be interested in considering the parameter nullspace under static experiments.
\[
\mathcal{N}_{\rm static} = \{ \delta \vpi ~|~ \vY(\vq,\bzero,\bzero) \delta \vpi=\bzero, ~~~ \forall \vq \in \mathbb{R}^N\}
\]
Characterizing identifiability in this way would mean only considering the term $\vg$ in \eqref{eq:eom}, which could be of interest for developing a gravity-compensation model.
An algorithm for this set is obtained by modifying line \ref{line:Vi} of the RPNA to $\vV_i = \vVim$ 
which intuitively removes local joint velocities from consideration. See Appendix \ref{sec:Conditions_g} for justification.
\end{remark}



\IncMargin{.5em}
\SetInd{.85em}{0em}
\setstretch{1.5}

\let\oldnl\nl
\newcommand{\nonl}{\renewcommand{\nl}{\let\nl\oldnl}}

\begin{algorithm}[t]

$\vV_0 = {}^0 \va_g,~\vK_0 = \bzero_{10\times 1}$ \\[1ex]
\For {$i=1,\ldots, \nbod$} {
    \nonl ~ \\[-.25ex]
    \nonl
    \algdesc{Transfer across the link}\\[1ex]
	$\vVJi= \XJ{i}\, \vV_{\pred{i}} $\\[1.5ex]
    $\vKJi = \Bi\T \, \vK_{\pred{i}}$\\[2.5ex]

    \nonl
    \algdesc{Transfer across the joint}\\[1ex]
    $\vVim = {\rm Ctrb}( ({\vPhi_i {\times}}), \quad\vVJi)$\\[1ex]
	$\vKim = {\rm Ctrb}( \, \vA(\vPhi_i)\T ,\, \vKJi)$ \\[2ex]

    \nonl
    \algdesc{Add the current joint's effect}\\[1ex]
    $\vV_i = \begin{bmatrix}\vVim &\vPhi_i \end{bmatrix} $
    \label{line:Vi} \\[1.5ex]
	$\vK_i = \begin{bmatrix}  \vKim & \vC( \vV_i , \vPhi_i )  \end{bmatrix}$ \\[2.5ex]

    \nonl
    \algdesc{Collect the momentum and mapping conditions}\\[1ex]
	$\vN_i =  \begin{bmatrix} \vC( \vV_i , \vPhi_i )\T \\[1ex]
    \ObservabilityRows{i}\, \vA(\vPhi_i) 
	\end{bmatrix}
	$\\[1.5ex]
	}
 \nonl
\Return {$ \vN_i\,, \quad i = 1,\ldots,\nbod$}\\
  \caption{Recursive Parameter Nullspace Algorithm for Serial-Chains with Single-DoF Joints
  }
  \label{alg:RPNA}
\end{algorithm}
\DecMargin{.5em}
\setstretch{1}

\section{Extensions for Open-Chain Systems}
\label{sec:extensions}

This section considers extending the RPNA algorithm to tree-structure systems with multi-DoF joints. These extensions build toward the analysis of floating-base systems (e.g., mobile legged systems). We also reflect on our energy analysis so far and revisit how it relates to dynamics, which also provides a new perspective on an important past result of \citet{Ayusawa14}. 


\subsection{Tree-Structure Systems}
\label{sec:extension:branched}
The inertia transfer concept readily generalizes to branched open-chain rigid-body systems. 
In branched systems, each body has a predecessor, denoted $p(i)$, toward the base. The transfer assignment is re-written as
\[
\dI_{p(i)} = - \XMT{i}{p(i)}\!(0)\, \dI_i \, \XM{i}{p(i)}(0)
\]
All recursive steps of the RPNA generalize to branched systems by likewise replacing $\pred{i}$ with $p(i)$.

\subsection{ Multi-DoF Joints}

Suppose each joint~$i$ has $n_{d_i}$ DoFs with free modes: 
\[
\vPhi_i = \begin{bmatrix} \boldsymbol{\phi}_{i,1} & \cdots & \vphi_{i,n_{d_i}} \end{bmatrix}
\]
where each $\vphi_{i,j} \in \mathbb{R}^6$ is fixed. A fixed free-mode matrix of this form could accommodate, for example, spherical or floating-base joints. 


In any case, additional modifications are required to accommodate multi-DoF joints in the RPNA. First, the propagation of the attainable velocity spans must be generalized. For a single-DoF joint, joint kinematics follow a linear system \eqref{eq:change_of_X_main}. In contrast, for a multi-DoF joint: 
\[
\scaleobj{.95}{\frac{\rm d}{ {\rm d} t} \XM{i}{p(i)}(\vq_{i}) \in {\rm span}\left( (\vphi_{i,1} \times)\XM{i}{p(i)},\, \ldots ,\, (\vphi_{i,n_{d_i}} \times)\XM{i}{p(i)}  \right) }
\] 
As a result, the span
\[
{\rm span}\{   \vv ~|~ \exists \vq_{i} , \vv \in {\rm Range}( \XM{i}{p(i)}(\vq_i)\, \vV) \} 
\]
can be seen as the smallest set containing ${\rm Range}(\XM{i}{p(i)}(\bzero)\, \vV)$ that is invariant under each $(\vphi_{i,k} \times)$. This set is equivalent to the controllable subspace of a switched linear system \citep{Sun02} with pairs $\{\, (\,(\vphi_{i,k} \times), \XM{i}{p(i)}(\bzero) \,\vV\,) \,\}_{k=1}^{n_{d_i}}$. If the controllability matrix in Lemma~\ref{corollary:velocityProp} is replaced by a matrix whose range equals the switched controllable subspace of these pairs, then Lemma~\ref{corollary:velocityProp}  holds more generally. The conditions in Lemma~\ref{lemma:obs} generalize to multi-DoF joints in a similar manner. 
With these developments, Algorithm~\ref{alg:RPNA_multi_dof} provides a modification to the RPNA for open-chain systems with multi-DoF joints. The algorithm relies on an updated definition of the matrix $\vM(\vV,\vPhi)$ as:
\begin{equation}
\vC(\vV, \vPhi) = \left[ \frac{\partial}{\partial \vpi} {\rm Vec}(\vV\T\, [\, \vpi \,]^{\wedge}\, \vPhi) \right]\T
\end{equation}
where the ${\rm Vec}(\cdot)$ operation stacks the columns of a matrix into a vector.


\IncMargin{.5em}
\SetInd{.85em}{0em}
\setstretch{1.5}

\begin{algorithm}[t]


$\vV_0 = {}^0 \va_g,~\vK_0 = \bzero_{10\times 1}$ \\[1ex]
\For {$i=1,\ldots, \nbod$} {
        \nonl ~ \\[-.25ex]
        \nonl
    
        \algdesc{Transfer across the link}\\[1ex]
	$\vVJi= \XM{i}{p(i)}(0)\, \vV_{p(i)} $\\[1.5ex]
        $\vKJi = {}^{p(i)} \vB_{i^-}\T \, \vK_{p(i)}$\\[2.5ex]
         \nonl
        \algdesc{Transfer across the joint}\\[1ex]
         $\vVim = {\rm Ctrb}\left(\, \left\{\, \left (\, ({\vphi_{i,k} {\times}}), \vVJi\,\right)\,\right\}_{k=1}^{n_{d_i}} \, \right) $ \\[1.5ex]
        $\ObservabilityColumns{i} = {\rm Ctrb}\left( \, \{ \, ( \vA(\vphi_{i,k})\T ,\, \vKJi ) \}_{k=1}^{n_{d_i}} \,\right)$ \\[2.5ex]       
        \nonl
        \algdesc{Add the current joint's effect}\\[1ex]
         $\vV_i = \begin{bmatrix} \vVim &\vPhi_i \end{bmatrix} $\\[1.5ex]
	$\vK_i = \begin{bmatrix}  \ObservabilityColumns{i} & \MomentumColumns{i}  \end{bmatrix}$ \\[2.5ex]
        \nonl
        \algdesc{Collect the momentum and mapping conditions}\\[1ex]
	$\vN_i =  \begin{bmatrix} \MomentumRows{i}  \\[1ex]
    \ObservabilityRows{i}\, \vA(\vphi_{i,1}) \\ \vdots \\ \ObservabilityRows{i}\, \vA(\vphi_{i,n_{d_i}}) 
	\end{bmatrix}
	$\\[1.5ex]
	}
 \nonl
\Return {$ \vN_i\,, \quad i = 1,\ldots,\nbod$}\\
  \caption{Recursive Parameter Nullspace Algorithm for Open-Chain Systems with Multi-DoF Joints}
  \label{alg:RPNA_multi_dof}
\end{algorithm}
\DecMargin{.5em}
\setstretch{1}

\IncMargin{.5em}
\SetInd{.85em}{0em}
\setstretch{1.5}

\begin{algorithm}[t]

$\vN_1 = \bone_{10\times10}$ \\[1ex]
\For {$i=2,\ldots, \nbod$} {
	\textbf{}
         

	$\vN_i =  \begin{bmatrix} \vC( \bone_{6\times6} , \vPhi_i )\T  \\[1ex]
    \vA(\vphi_{i,1}) \\ \vdots \\ \vA(\vphi_{i,n_{d_i}}) 
	\end{bmatrix}
	$\\[1.5ex]
	}
 \nonl
\Return {$ \vN_i\,, \quad i = 1,\ldots,\nbod$}\\
  \caption{Recursive Parameter Nullspace Algorithm for Open-Chain Floating-Base Systems}
  \label{alg:RPNA_floating_base}
\end{algorithm}
\DecMargin{.5em}
\setstretch{1}

                                                                                                                                                                                                                                                                                                                                                                                                                                                                                                                                                                                                                                                                                                                                                                                                                                                                                                                                                                                                                         %




\subsection{Floating-Base Systems}
\label{sec:FloatingBaseChains}

With the generalization to multi-DoF joints, Alg.~\ref{alg:RPNA_multi_dof} can be applied directly to floating-base systems. Note, however, that the six degrees of freedom of the floating base can enable one to simplify the algorithm. For example, since the possible velocities of any link satisfy $\mathcal{V}_i^*=\mathbb{R}^6$, basis matrices $\vV_i$ and $\vK_i$ can be chosen as identity matrices $\vV_i=\mathbf{1}_{6\times 6}$, $\vK_i=\mathbf{1}_{10\times10}$, with $\ObservabilityColumns{i}$ correspondingly set to $\ObservabilityColumns{i} = \mathbf{1}_{10\times10}$ without effect on the algorithm. As a result, the null-space descriptors can be constructed directly as:
\[
\vN_i = \begin{bmatrix}
\vM(\mathbf{1}_{6 \times 6}, \vPhi_i)\T \\ 
\vA(\vphi_{i,1}) \\
\vdots\\
\vA(\vphi_{i,n_{d_i}})
\end{bmatrix}
\]
such that $\vN_i \, \delta \vpi_i = \mathbf{0}$ simply combines the original momentum \eqref{eq:Momentum_Floating} and mapping \eqref{eq:Invariance_Floating} conditions from Sec.~\ref{sec:Floating_Two_Body}. For the floating-base link (Link 1), the momentum condition is $\dI_1 = \bzero$, and so the first transfer subspace $\mathcal{T}_1$ is the zero set $ \mathcal{T}_1= \{\bzero\}$. Algorithm \ref{alg:RPNA_floating_base} shows the revised RPNA for floating-base systems.

\begin{remark} Gravity effects provide no additional identifiable parameters in floating-base systems. This result is due to the fact that the floating base can be accelerated opposite gravity to excite the same dynamic effects as does gravity itself. 
\label{rem:gravity}
\end{remark}

\begin{remark}
With the generalizations in this section, our main Theorems~\ref{thm:main} and \ref{thm:main_rev} apply to floating-base systems without motion restrictions, and certify the maximum number of parameter combinations that can be identified from a maximally exciting trajectory. However, for purely floating systems (without actuation on the base), the underactuation and constraints from physics (e.g., conservation of linear and angular momentum) can prevent resulting trajectories from being maximally exciting. For example, if the system undergoes ballistic motion without external forces other than gravity, it will lose at least one identifiable parameter (the total mass \citep{Ayusawa14}), and potentially more depending on the simplicity of the mechanism and its achievable motions. See \citet{Ayusawa14} for additional discussion.
\end{remark}






\newcommand{\half}{\tfrac{1}{2}}
\newcommand{\vvp}{\vv_1}
\newcommand{\vvm}{\vv_\mot}
\newcommand{\vvq}{\vv_2}
\newcommand{\vIp}{\vI_1}
\newcommand{\vIm}{\vI_\mot}
\newcommand{\vIq}{\vI_2}
\newcommand{\dIp}{\dI_1}
\newcommand{\dIm}{\dI_\mot}
\newcommand{\dIq}{\dI_2}

\newcommand{\Izzm}{I_{zz_\mot}}
\newcommand{\dIzzm}{\delta I_{zz_\mot}}

\newcommand{\qq}{q_2}
\newcommand{\qm}{q_\mot}
\newcommand{\qqdot}{\dot{q}_2}
\newcommand{\qmdot}{\dot{q}_\mot}
\newcommand{\vPhiq}{\vPhi_2}
\newcommand{\vPhim}{\vPhi_\mot}

\newcommand{\Xqp}{\XM{2}{1}}
\newcommand{\Xmp}{\XM{\mot}{1}}
\newcommand{\XqpT}{\XMT{2}{1}}
\newcommand{\XmpT}{\XMT{\mot}{1}}

\newcommand{\Xqpq}{\XM{2}{1}(\qq)}
\newcommand{\Xmpq}{\XM{\mot}{1}(\qm)}
\newcommand{\XqpTq}{\XMT{2}{1}\!(\qq)}
\newcommand{\XmpTq}{\XMT{\mot}{1}\!(\qm)}

\newcommand{\Xqpz}{\XM{2}{1}(0)}
\newcommand{\Xmpz}{\XM{\mot}{1}(0)}
\newcommand{\XqpTz}{\XMT{2}{1}\!(0)}
\newcommand{\XmpTz}{\XMT{\mot}{1}\!(0)}

\newcommand{\XqpInvq}{\XM{2}{1}^{-1}(\qq)}
\newcommand{\Xmq}{\XM{\mot}{2}}
\newcommand{\Xmqq}{\XM{\mot}{2}(\qq)}

\subsection{From Energy to Dynamics}
\label{subsec:EnergyToDynamics}
Throughout, we have primarily considered energy analysis to characterize unidentifiable parameters rather than examining the dynamics directly. These two views are equivalent: In the absence of gravity, ensuring a zero variation to the kinetic energy $\delta T(\vq,\vqd) \equiv 0$ is the same as ensuring a zero variation to the mass matrix $\delta \vH(\vq) \equiv \bzero$, which, in turn, is the same as ensuring zero variation to the total generalized force $\delta \btau_{\rm total} = \delta \left[ \vH(\vq) \vqdd + \vc(\vq,\vqd)\right] \equiv \bzero$. 

Comparing these views, however, can help determine parameter identifiability via specific or individual forces/torques on/in the mechanism. Let us consider a two-body floating-base system similar to in Sec.~\ref{sec:Floating_Two_Body}. The system has kinetic energy:
\[
T =  \frac{1}{2} \begin{bmatrix} \vvp \\ \qqdot \end{bmatrix}\T \vH(\qq) \begin{bmatrix} \vvp \\ \qqdot \end{bmatrix}
\]
where the mass matrix takes the form:
\[
\scaleobj{.9}{\vH(\qq) = \begin{bmatrix} \vIp + \XqpT \,\vIq\, \Xqp & \XqpT \, \vIq\, \vPhiq \\[.75ex] \vPhiq\T\, \vIq\, \Xqp & \vPhiq\T\, \vIq\, \vPhiq \end{bmatrix} = \begin{bmatrix} \vH_{11} & \vH_{12} \\ \vH_{21} & H_{22} \end{bmatrix}}
\]
The generalized force in this case would be comprised of the net external force on the base $\vf_1$ and the torque $\tau_2$ at joint $2$, so that the equations of motion take the form:
\[
\begin{bmatrix} \vf_1 \\ \tau_2 \end{bmatrix} = \begin{bmatrix} \vH_{11} & \vH_{12} \\ \vH_{21} & H_{22} \end{bmatrix}  \begin{bmatrix} \dot{\vv}_1 \\ \ddot{q}_2 \end{bmatrix} + \begin{bmatrix} \vc_1 + \vg_1 \\ c_2 + g_2 \end{bmatrix}
\]
In this case, the momentum and mapping conditions for a transfer at joint $2$ would collectively enforce a zero variation $\delta \vf_1$ to the base force, and zero variation $\delta \tau_2$ to the joint torque. 
More generally, {\em without any additional motion restrictions}, these two conditions for body $i$ ensure that a transfer with the parent would not disturb the connecting joint torque $\tau_i$ nor the total 6D predecessor force $\vf_{p(i)}$ (acting from body $p(i)$ onto body $i$).

For this two-body system, a zero variation to the mass matrix $\vH$ overall is equivalent to a zero variation in its first row ($\vH_{11}$ and $\vH_{12}$) alone. This result follows due to symmetry ($\vH_{12} = \vH_{21}\T$) and the fact that the lower right block gives a projection of the upper right one ($H_{22} = \vPhi_2\T \XMT{1}{2} \vH_{12}$). Physically, this occurs since any reaction torque $\tau_2$ is balanced by and shows up on the base force $\vf_1$. Overall, the implication is that sequential excitation, i.e., accelerating the floating base in all directions and then accelerating the joint itself, will be maximally exciting. For $\delta \vf_1 \equiv \bzero$, then we necessarily have $\delta \tau_2\equiv0$ as well, such that the joint torque provides no additional information nor can identify any additional parameters not already identifiable via the base force.

We generalize this argument to open-chain floating-base systems in Appendix~\ref{sec:compareToKo}: Under proper excitation and full knowledge of net external forces, measurements of joint torques do not enable the identification of any additional parameters. This is inspired by and equivalent to a key result of \citet{Ayusawa14}.

\newcommand{\mot}{m}
\newcommand{\ngear}[1]{n_{\!R_{#1}} }
\section{Special Case Extension to Closed Kinematic Loops: Joint Motors}
\label{sec:closedChain}

Joint motors represent a simple and common closed kinematic chain,
driving the joint between two connected bodies.  The spinning rotor is
coupled to the joint, but often spins a fixed ratio faster and may
spin along a distinct axis.  Hence, we must treat the rotor as a
separate body.  Toward understanding this case, we consider a
three-body system with a floating base (body~1), a child link
(body~2), and a motor's rotor (body~$\mot$), building on Sec.~\ref{sec:Floating_Two_Body}.

We will also assume the rotor to be rotationally symmetric about the
motor axis, giving it only four inertial parameters versus the general
ten parameters.

\subsub{Dynamics and constraints}
The total kinetic energy of the three bodies is
\begin{equation*}
  T = \half \vvp\T \vIp \vvp
    + \half \vvq\T \vIq \vvq
    + \half \vvm\T \vIm \vvm
\end{equation*}
where, similar to \eqref{eq:two_body_diff_kin},
\begin{equation*}
  \begin{array}{r@{\,=\,}c@{\,}l@{\,+\,}l@{\,}l}
    \vvq & \Xqpq & \vvp & \vPhiq & \qqdot \\[1ex] 
    \vvm & \Xmpq & \vvp & \vPhim & \qmdot
 \end{array}
\end{equation*}  
are the child and rotor body velocities respectively.  The motor
position $\qm = \ngear{} \qq$ and speed $\qmdot = \ngear{} \qqdot$ are
modified by the fixed gear ratio $\ngear{}$.

Using inertia variations $\dI$ and the gear ratio $\ngear{}$, we can
factor the energy variation $\delta T$ as
\begin{equation*}
  \delta T \!=\! \half \!
  \begin{bmatrix}\vvp \\ \qqdot \end{bmatrix}\T
  \!\!
  \begin{bmatrix}
    \dIp^C \!\! &
    \XqpT   \dIq \vPhiq + \!\XmpT   \dIm \vPhim \ngear{}   \\[1ex]
    \cdot &
    \ \ \vPhiq\T \dIq \vPhiq + \ \ \vPhim\T \dIm \vPhim \ngear{}^2
  \end{bmatrix}
  \!\!
  \begin{bmatrix}\vvp \\ \qqdot \end{bmatrix}
\end{equation*}
where the composite inertia variation $\dIp^C$ is
\begin{equation*}
  \dIp^C \!=\! \dIp \!+\! \XqpTq \dIq \! \Xqpq \!+\! \XmpTq \dIm \! \Xmpq
\end{equation*}

Mirroring the developments of Sec.~\ref{sec:Floating_Two_Body},
enforcing $\delta{}T=0$, we use the top-left composite inertia
$\dIp^C$ to define a fixed transfer assignment analogous to
\eqref{eq:Transfer_Assignment}
\begin{equation*}
  \dIp = - \XqpTz \, \dIq \, \Xqpz - \XmpTz \, \dIm  \Xmpz
\end{equation*}
and uniformly ensure $\dIp^C=\mathbf{0}$ over all configurations by zeroing its
derivative.
\begin{equation*}
  \bzero = \frac{\rm d}{{\rm d} t}
  \Bigl( \XqpTq \dIq \Xqpq \!+\! \XmpTq \dIm \Xmpq \Bigr)
\end{equation*}
Using \eqref{eq:change_of_X_main}, we find the mapping condition
\begin{equation}
\label{eq:mapping_rotor}
  \begin{array}{r@{\big[}r@{}l@{+}r@{}l@{\big]}c@{}l}
  \bzero = 
  \XqpTq & (\vPhiq\times)\T & \dIq & \dIq & (\vPhiq\times) & \Xqpq & \qqdot
  \\[1ex]
  + 
  \XmpTq & (\vPhim\times)\T & \dIm & \dIm & (\vPhim\times) & \Xmpq & \qmdot
  \end{array}
\end{equation}

From the off-diagonal elements for $\delta{}T$, we obtain the momentum
condition
\begin{equation}
\label{eq:momentum_rotor}
  \bzero =
  \vPhiq\T \dIq \Xqpq + \vPhim\T \dIm \Xmpq \, \ngear{}
\end{equation}

Finally, from the bottom-right element determining $\delta{}T$, we obtain a
third condition, which we refer to as the torque condition.
\begin{equation}
  0 = \vPhiq\T \dIq \vPhiq + \vPhim\T \dIm \vPhim \, \ngear{}^2 \label{eq:torque_rotor}
\end{equation}
Note, in this three-body case the momentum condition does not automatically
guarantee the torque condition.\footnote{Relating back to Sec.~\ref{subsec:EnergyToDynamics}, a related implication here is that motor rotors are generally not identifiable from ground forces alone.}

\subsub{Symmetric rotor}
In the common case that the motor's rotor is rotationally symmetric,
these conditions simplify a great deal.  Consider a rotor spinning
about its $z$ axis such that $\vPhim=[0\,0\,1\,0\,0\,0]\T = [\hat{\mathbf{z}}\T \mathbf{0}\T]\T$.  Rotational
symmetry implies $I_{xy}=I_{xz}=I_{yz}=h_x=h_y=0$ and
$I_{xx}=I_{yy}$. Intuitively, variations respecting this symmetry
satisfy
\begin{equation}
  \bzero = (\vPhim\times)\T \dIm + \dIm (\vPhim\times)
  \label{eq:symmetric_inertia}
\end{equation}

This means a rotor only has four parameters:
$I_{zz},I_{xx}=I_{yy},h_z,m$.  Further, from the
example in Sec.~\ref{sec:example_revolute}, we know that the revolute attachment via
the $z$ axis allows the transfer of the last three to the parent body 1.
So, practically, we need only consider changes to the rotor inertia
$\delta I_{zz}$ about its axis.  This further gives
\begin{align}
  \vPhim\T \dIm \phantom{\vPhim} &= \vPhim\T \, \dIzzm \label{eq:mot_IPhi} \\
  \vPhim\T \dIm \vPhim  &= \phantom{\vPhim} \, \dIzzm \label{eq:mot_PhiIPhi}
\end{align}

\newcommand{\Xpqq}{\XM{1}{2}(q_2)}

\subsub{Simplified conditions}
We simplify the mapping condition \eqref{eq:mapping_rotor} using symmetry via \eqref{eq:symmetric_inertia},
acknowledging that $\Xqpq$ is full rank, and allowing $\qqdot$ to take
any value.  For the momentum condition \eqref{eq:momentum_rotor}, we first
post-multiply with $\Xpqq$. We then recognize that
$\vPhim\T\Xmpq=\vPhim\T\Xmpz$ since the motor rotation does not change
the motor axis, and note $\Xmqq=\Xmpz\Xpqq$ is a transform from
joint 2 to the motor.  
Finally, for the torque condition, we use
\eqref{eq:mot_PhiIPhi}.  In summary, the mapping, momentum, and torque
conditions are
\begin{align}
  \bzero &= (\vPhiq\times)\T \dIq + \dIq (\vPhiq\times)
  \label{eq:mot_mapping}\\
  \bzero &= \vPhiq\T \dIq \phantom{\vPhiq}
          + \dIzzm \ngear{} \, \vPhim\T \Xmqq
  \label{eq:mot_momentum}\\
  0 &= \vPhiq\T \dIq \vPhiq  + \dIzzm \ngear{}^2
  \label{eq:mot_torque}
\end{align}
where the mapping condition reverts to the two-body case from Sec.~\ref{sec:Floating_Two_Body}, while the
momentum and torque conditions retain the rotor inertia terms.



Since $\dIzzm$ is the only parameter appearing for the motor, it can
at most add this one additional identifiable parameter.  However, if
both conditions \eqref{eq:mot_momentum} and \eqref{eq:mot_torque} are
satisfied, $\dIzzm$ can be transferred to $\dIp$ and it becomes
unidentifiable. 

Breaking down $\vPhiq$ into its rotational and linear components as  $\vPhiq = [ \hat{\mathbf{e}}_{\omega,2}\T, \hat{\mathbf{e} }_{v,2}\T ]\T$ and then post-multiplying \eqref{eq:mot_momentum} with $\vPhiq$ then gives a projected version of the momentum condition as
\begin{align}
0 &= \vPhiq\T \, \dIq \, \vPhiq + \ngear{}\,  \dIzzm \,  \vPhim\T\,  \Xmqq\,  \vPhi_2 \nonumber \\
           &= \vPhiq\T \, \dIq \, \vPhiq + \ngear{} \, \dIzzm \, \hat{\mathbf{z}}\T \, {}^m \mathbf{R}_2(q_2)  \,\hat{\mathbf{e}}_{\omega,2}  \label{eq:mom_simplified}
\end{align}
where the structure of the spatial transform $\Xmqq$ (given in Appendix \ref{app:dynamics}) is used in the simplification to \eqref{eq:mom_simplified}.

We see that \eqref{eq:mot_torque} and \eqref{eq:mom_simplified} can simultaneously be satisfied if and only if 
\[
\ngear{}^2 \, \dIzzm  = \ngear{} \, \dIzzm \, \hat{\mathbf{z}}\T \,{}^m \mathbf{R}_2(q_2) \,\hat{\mathbf{e}}_{\omega,2}
\]
or equivalently that:
\begin{equation}
\ngear{} = \hat{\mathbf{z}}\T \, {}^m \mathbf{R}_2(q_2) \, \hat{\mathbf{e}}_{\omega,2} \label{eq:rotorSimultaenousCondition}
\end{equation}

If the joint is purely translational, \eqref{eq:rotorSimultaenousCondition} cannot hold since $\hat{\mathbf{e}}_{\omega,2} = \mathbf{0}$, and so $\dIzzm$ is identifiable in this case

If the joint has a rotational component (e.g., for a revolute or helical joint), we assume $\| \hat{\mathbf{e}}_{\omega,2}\| = 1$. In this case, \eqref{eq:rotorSimultaenousCondition} holds when:
\begin{enumerate}[nosep]
\item the gear ratio $\ngear{}$ is unity and
\item the rotational component of joint 2 ($\hat{\mathbf{e}}_{\omega,2}$) is parallel to the motor axis (making
  $\vPhim\T\Xmqq\vPhiq$ unity).
\end{enumerate}
Note that this allows only a motor without no reduction ($\ngear{}=1$), but that may
be offset from the joint. In all other cases, the rotor adds one
identifiable parameter $\Izzm$.

\subsub{Generalization to fixed bases}
More generally, the fixed-base case requires considerations of motion
restrictions to determine whether the single additional motor
parameter is identifiable at each joint.

The main conceptual difference in the fixed-base case is that the motor inertia is identifiable {\bf only if} it can be felt earlier in the chain. This follows since, when $\vv_1$ has motion restrictions, the momentum condition takes the form:
\[
  \bzero =
  \vPhiq\T \dIq \Xqpq \vv_1 + \vPhim\T \dIm \Xmpq \, \ngear{} \vv_1
\]
where $\vPhim\T \dIm \Xmpq \vv_1=\bzero$ when the rotor inertia is not felt along any of the directions $\vv_1$ can take.
For instance, consider the simple manipulators in Fig.~\ref{fig:simpleManips} and suppose each motor rotates along the corresponding joint axis $\hat{z}_i$. The inertia of the first motor is not felt earlier in the chain for either mechanism (since there are no previous joints), and thus it does not add an identifiable parameter. For the system with parallel joints in Fig.~\ref{fig:simpleManips}(a), the rotational inertia of the second motor about its axis does contribute rotational inertia about the first joint axis, and it is found to add an identifiable parameter. For the system with perpendicular joint axes, the rotational inertia of the second motor rotor does not lead to any rotational inertia about the first joint axis. Thus neither motor inertia contributes an identifiable parameter in this case.

The full details of generalizing the RPNA to consider rotors are described in App.~\ref{app:rotors}. They are also implemented in the companion {\sc Matlab} code where $\vV_i$ and $\vK_i$ are generalized to include effects from both the rotor and link associated with joint $i$.  Returning to the original motivation for the RPNA, these generalizations avoid the algorithm having to consider special cases, such as when joints are/aren't parallel, as was necessary in the above example.

\section{Verification and System-Level Examples}
\label{sec:results}

This section provides verification of the Recursive Parameter Nullspace (RPNA) for fixed- and floating-base systems. The RPNA is unique in that it requires only the kinematic parameters of a mechanism as its input,  it is provably correct, and it does not rely on any symbolic manipulations or assumed exciting input data.  
 We use numerical identification approaches \citep{Atkeson86} to empirically verify the RPNA output.

\begin{figure}[t]
\center
\includegraphics[height=.7\columnwidth]{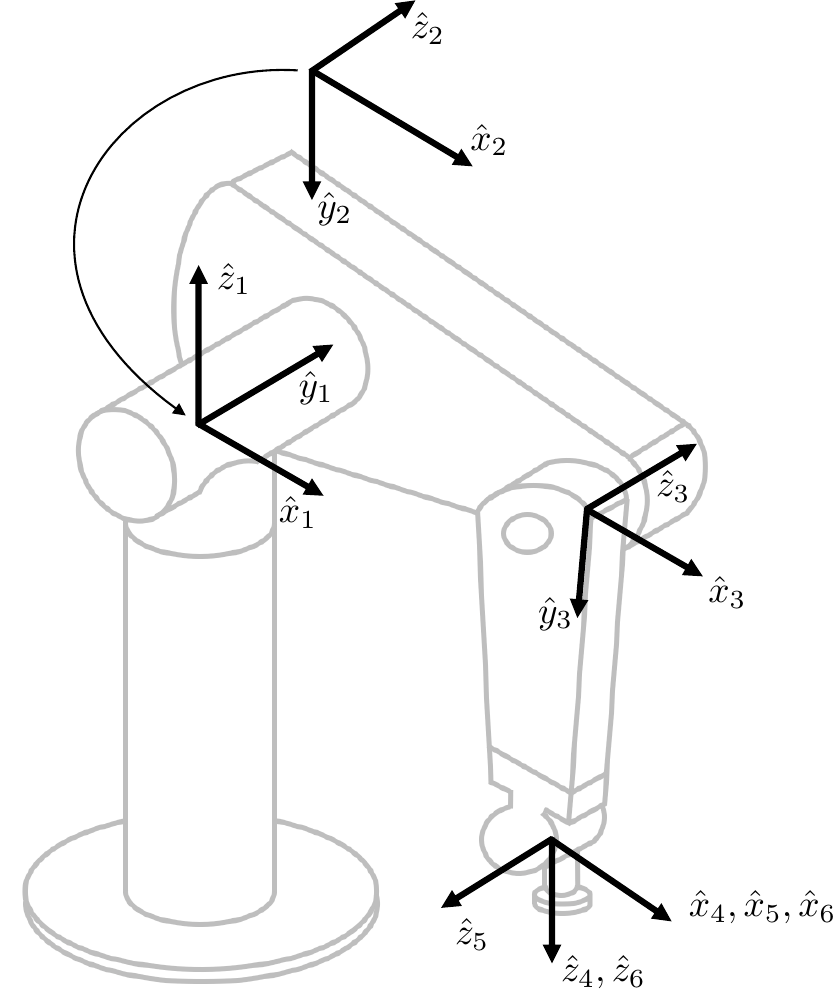}
\caption{PUMA 560 robot and coordinate assignment.}
\label{fig:PUMA}
\end{figure}

\subsection{PUMA 560}

We first consider the classical industrial manipulator PUMA 560 shown in Figure \ref{fig:PUMA}. The mechanism has three joints to position the wrist, followed by three wrist joints with intersecting orthogonal axes.

\newcommand{\cmark}{{\footnotesize\ding{72}}}%
\newcommand{\y}{\cmark}
\renewcommand{\c}{\tikz\draw[black,fill=white] (0,0) circle (.4ex);}
\newcommand{\xmark}{{\tiny\ding{53}}}%
\newcommand{\n}{\xmark}

\newcommand{\cm}{\tikz\draw[black,fill=black] (0,0) circle (.4ex);}

\begin{table}[t]
\center
\begin{tabular}{|c| c| c| c| c| c| c|}
\hline
          & 1 & 2 & 3 & 4 & 5 & 6\\ \hline
 $m$      & \n	& \n  & \c  & \c  & \c  & \c \\
 $m c_x$  & \n	& \cm & \cm & \y  & \y  & \y \\
 $m c_y$  & \n 	& \y  & \cm & \cm & \cm & \y \\
 $m c_z$  & \n 	& \n  & \c  & \c  & \c	& \c \\
 $I_{xx}$ & \n	& \cm & \cm & \cm & \cm	& \cm \\ 
 $I_{yy}$ & \n	& \c  & \c  & \c  & \c	& \c \\ 
 $I_{zz}$ & \cm & \cm & \cm & \cm & \cm & \y \\ 
 $I_{yz}$ & \n	& \y  & \y  & \y  & \y	& \y \\ 
 $I_{xz}$ & \n	& \cm & \y  & \y  & \y	& \y \\
 $I_{xy}$ & \n 	& \y  & \cm & \y  & \y	& \y \\ \hline
 $I_{\mot}$ &\c & \c  & \y  & \y  & \y & \y  \\ \hline
 ${\rm dim}({\mathcal V}_i)$ & 1 (2) & 3 (5) & 6 & 6 & 6 & 6 \\
 ${\rm dim}({\mathcal K}_i)$ & 1 (1) & 6 (8) & 10&10 & 10& 10 \\  
 ${\rm dim}({\mathcal T}_i)$ & 9 & 3 & 3 & 3 & 3 & 3 \\ \hline 
\end{tabular}
\vspace{4px}
\caption{PUMA 560 parameters -- see Table \ref{tab:star_ref} for legend. We find 36/40 base parameters ex-/including motor inertias.  Dimensions in parentheses consider gravitational effects.}
\label{tab:PUMA}
\textbf{}\end{table}


\begin{table}[t]
\center
\small
\begin{tabular}{c c}
\hline
{\bf Symbol} & {\bf Explanation} \\ \hline
\rowcolor{light-gray} \y   & Identifiable individually\\
\multirow{2}{*}{\cm}  & Identifiable in combination with others\\
     & (Selected as a base parameter) \\
\rowcolor{light-gray} \c   & Identifiable in combination with others \\
\n   & Unidentifiable parameter \\ \hline
\end{tabular}
\caption{Legend for results tables. A minimal set of parameters is denoted by the filled stars and circles. Filled circles are only identifiable in combination with open circles.}
\label{tab:star_ref}
\end{table}

 Table~\ref{tab:PUMA} details the parameter identifiability for this mechanism, with the symbols explained in Table~\ref{tab:star_ref}. We recall that there are three possibilities for each parameter: identifiable by itself, unidentifiable, and identifiable in linear combinations only \citep{Atkeson86}. 
 %
A minimal set of parameters representing identifiable combinations is indicated with symbols (\cm) in the table. There is some freedom in this assignment since, for example, there is an arbitrary choice as to whether $I_{xx,6}$ or $I_{yy,6}$ is chosen as the base parameter for the identifiable combination $ I_{xx,6}-I_{yy,6}$. To resolve this ambiguity, we always give preference to the parameter that appears first in the table as we move down the columns and right across the rows.
With this designation, the total number of solid entries (\y~or~\cm) in any given column represents the number of 
base parameters 
contributed by that body. The total number of \n~and \c~symbols for each body indicates the number of degrees of freedom in the inertia transfer with its parent. 
Using the RPNA, the PUMA is found to have 36 identifiable parameter combinations for its bodies. This is consistent with previous symbolic approaches \citep{Mayeda90,Gautier90b} that relied on many special cases. Additional details on the identifiable linear combinations (i.e., full specification of the parameter combinations for the given choice of base parameters) can be obtained from the supplementary {\sc Matlab} code.

Motion restrictions play an important role on the structure of the identifiable parameters for the first two bodies of the PUMA. The true attainable velocity spans have sub-maximal dimensions 1 and 3, and have dimensions 2 and 5 within the algorithm when considering gravity as a fictitious prismatic joint at the base. All remaining bodies have full dimension for $\Vspan{i}$ and $\mathcal{K}_i$. Despite the motion restrictions on body~2, its undetectable transfers coincide with that of an unconstrained body.  The mass $m_2$ is unidentifiable since it is not sensed by joint~$2$, and it is mapped onto the parent parameter $m_1$, which is itself unidentifiable. Likewise, $m c_{z_2}$ is not sensed by torques on joint $2$ and maps to parent parameters $m c_{x_1}$ and $m c_{y_1}$ depending on the value of $q_2$. Both of these parameters of the parent are unidentifiable, and thus so is $m c_{z_2}$. With respect to the motor inertias, the first two joints of the PUMA are perpendicular and thus its first two motor inertias are not identifiable (see the end of Sec.~\ref{sec:closedChain}). 

\subsection{SCARA}
The second example considered is a SCARA robot depicted in Fig.~\ref{fig:scara}. The SCARA is a 4-DoF RRPR manipulator traditionally used in pick and place operations. All rotations and translations take place about the local $\hat{z}_i$ axes. Motion restrictions play a key role in parameter identifiability for this robot, as described in Table \ref{tab:SCARA}.

Each of the joints in the SCARA admits more transfer freedoms than in the floating case. The first two links of the SCARA resemble the parallel joint example from Section \ref{sec:Simple1}. As a result, the second revolute joint admits 7 transfer degrees of freedom and contributes 3 identifiable parameter combinations.  Motion restrictions likewise enlarge the undetectable transfers across the prismatic joint. While a free-floating prismatic joint admits 6 transfer degrees of freedom, the  SCARA prismatic joint admits 9 transfer degrees of freedom. 

These extra transfer freedoms for the SCARA prismatic joint can be understood physically from the momentum and \mapping conditions. The momentum condition ${\vPhi_3\T \dI_3 \vV_3=\bzero}$ requires that $\dI_3$ must not modify the linear momentum of body $3$ along $\hat{z}_3$. Motions of joint 3 will create pure linear momentum along $\hat{z}_3$ with magnitude $m_3 \dot{q}_3$, while motions of joints 1 and 2 do not create any linear momentum in this direction. Thus, the momentum condition requires $\delta m_3 =0$.

\begin{figure}[t]
\center
\includegraphics[height=.5\columnwidth]{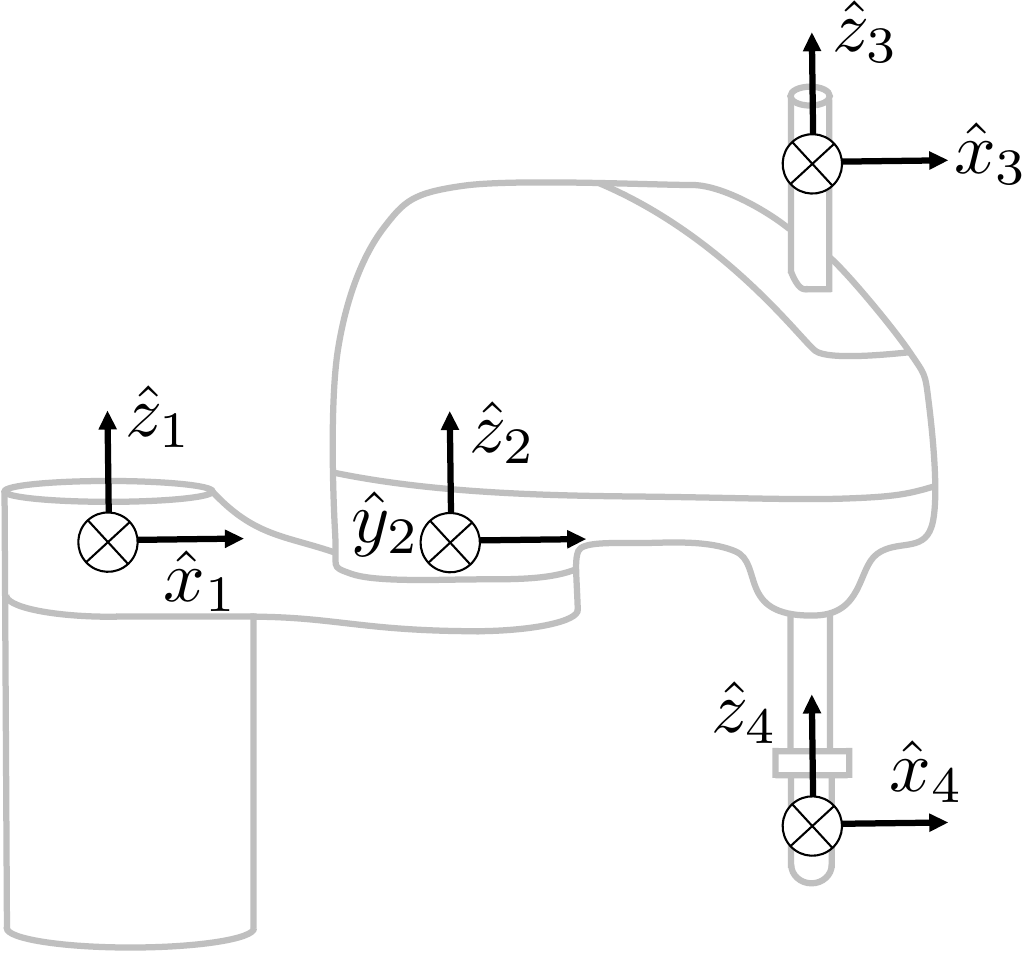}
\caption{SCARA robot and coordinate assignment.}
\label{fig:scara}
\end{figure}

\begin{table}[t]
\center
\begin{tabular}{|c| c| c| c| c| c| c|}
\hline
          & 1(R) & 2(R) & 3(P) & 4(R) \\ \hline 
 $m$      & \n	& \c  & \cm  & \c \\
 $m c_x$  & \n	& \cm & \c &	\y \\
 $m c_y$  & \n 	& \cm  & \c & \y \\
 $m c_z$  & \n 	& \n  & \n  & \n  \\
 $I_{xx}$ & \n	& \n  & \n  &	\n  \\ 
 $I_{yy}$ & \n	& \n  & \n  & \n  \\ 
 $I_{zz}$ & \cm & \cm & \c &	\y  \\ 
 $I_{yz}$ & \n	& \n  & \n  &	\n  \\ 
 $I_{xz}$ & \n	& \n  & \n  & \n  \\
 $I_{xy}$ & \n 	& \n  & \n  & \n  \\ \hline
 $I_{\mot}$ & \c & \y & \y  & \y \\ \hline
 ${\rm dim}({\mathcal V}_i)$ & 1 (2) & 3 (4) & 4 (4) & 4 (4)  \\[.2ex] 
 ${\rm dim}({\mathcal K}_i)$ & 1 (1) & 4 (4) & 4 (4) & 4 (4)  \\[.2ex]
 ${\rm dim}({\mathcal T}_i)$ & 9 & 7 & 9 & 7  \\ \hline 
\end{tabular}
\vspace{4px}
\caption{SCARA  parameters -- see Table \ref{tab:star_ref} for legend. We find 8/11 base parameters ex-/including motor inertias.  Dimensions in parentheses consider gravitational effects.}
\label{tab:SCARA}
\end{table}

It turns out that the \mapping condition \eqref{eq:Mapping_Parameters} for joint 3 holds without restriction on $\dI_3$. Recall that the \mapping condition considers changes in the way $\dI_3$ maps to parameters of its parent. It holds when any changes in this mapping with $q_3$ appear only on unidentifiable parameters for the parent. Changes in $q_3$ affect the vertical distribution of mass for body $3$ relative to its parent. Yet, any of the parameters affected by the vertical distribution of mass (i.e., $m c_z$, $I_{xx}$, $I_{yy}$, $I_{xz}$, $I_{yz}$) are unidentifiable for body 2. Thus, the \mapping condition holds without any restriction on $\dI_3$. As a result, transfers between body 2 and body 3 need only satisfy the momentum condition $\delta m_3 = 0$, providing 9 transfer freedoms across this joint. Discounting motors, the mechanism has $32$ unobservable parameter combinations and therefore only $8$ identifiable parameter combinations. This was confirmed empirically through an SVD applied to random samples of the regressor $\vY$.

\subsection{Cheetah 3 Leg}

The last example considers the MIT Cheetah 3 robot \citep{bledt2018cheetah} consisting of a torso and four independent legs.  Each leg contains three  rigid bodies, driven by three proprioceptive actuators \citep{Wensing17b}.  It is common to identify the legs separate from the torso \citep{WensingKimSlotine17,Focchi17b}: Fixing the torso as a base, leg swing experiments identify the leg parameters, as depicted in the Figure \ref{fig:Cheetah3}. We explore whether this common setup is appropriate to fully identify the leg, testing two cases: floating vs. fixed torso.  We find that motion restrictions in the fixed experiments prevent exciting all the parameters affecting the floating torso case.

\begin{figure}
\center
\includegraphics[height=.7\columnwidth]{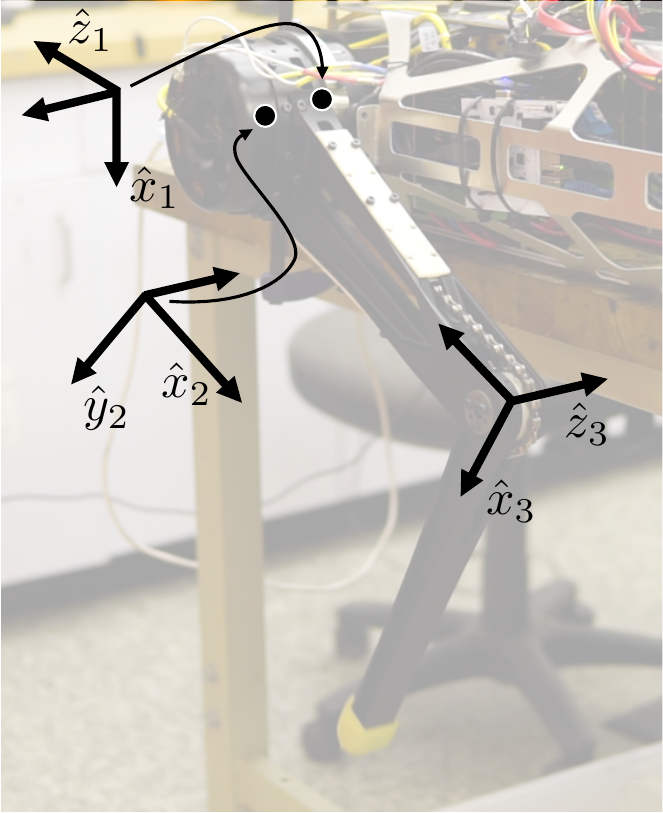}
\caption{Cheetah 3 Leg Coordinates. Table-top experiments like the one shown are often used to identify leg parameters.}
\label{fig:Cheetah3}
\end{figure}

\begin{table}
\center

\begin{tabular}{|c|| c|c|c|| c|c|c|}
\hline
&  \multicolumn{3}{|c||}{Fixed} &  \multicolumn{3}{|c|}{Free} \\ \hline
& 1 & 2 & 3 & 1 & 2 & 3 \\ \hline
 $m$   &    \n	& \c  &   \c   &  \c	& \c  &  \c  \\
 $m c_x$  & \y	& \cm &  \y & \y	& \cm &  \y  \\
 $m c_y$  & \cm & \y  &  \y & \cm   & \y  &  \y\\
 $m c_z$  &\n 	& \c  &   \c & \c 	& \c  &  \c \\
 $I_{xx}$ &\n	& \cm  &  \cm & \cm	& \cm &  \cm\\
 $I_{yy}$ &\n	& \c  &  \c    & \c	& \c  &  \c \\
 $I_{zz}$ &\cm  &\cm  &  \y & \cm   &\cm  &  \y \\
 $I_{yz}$ &\n	& \y  &  \y  & \y	& \y  &  \y\\
 $I_{xz}$ &\n	& \cm  &  \y & \y	& \y  &  \y \\
 $I_{xy}$ &\n 	& \y  &  \y &  \y	& \y  &  \y \\ \hline 
 $I_{\mot}$ & \c & \c & \y & \y	& \y  &  \y \\ \hline
 ${\rm dim}({\mathcal V}_i)$ &1 (3) & 3  (6) & 6  & 6 & 6 & 6\\
 ${\rm dim}({\mathcal K}_i)$ &1 (3) & 6 (9) & 10 & 10 & 10 & 10\\
 ${\rm dim}({\mathcal T}_i)$ &7 & 3 & 3 & 3 & 3 & 3 \\ \hline
\end{tabular}
\vspace{2px}
\caption{Cheetah 3 Leg parameters -- see Table \ref{tab:star_ref} for legend. We find 17/18 versus 21/24 base parameters in fixed versus floating conditions, ex-/including motor inertias.  Dimensions in parentheses consider gravitational effects.}
\label{tab:Cheetah3Obs}
\end{table}

Table \ref{tab:Cheetah3Obs} compares parameter identifiability in the fixed- vs. floating-base case. Similar to the PUMA and SCARA examples, the Cheetah 3 leg model possesses unidentifiable parameters in the fixed-base case.  As expected, the parameter set is a subset of the floating-base case.

\begin{table*}
\renewcommand{\arraystretch}{1.35}
\center 

\begin{tabular}{|c||c|c c c|}\hline
&  {\bf Fixed-Base Validation} & \multicolumn{3}{c|}{{\bf Floating-Base Validation}} \\
  \bf Identification & Leg Joint Torques & Leg Joint Torques    & Body Torques   & Body Forces \\\hline \hline
  Floating Torso 
  & $[0,0,0]$ Nm & $[0,0,0]$ Nm & $[0,0,0]$ Nm & $[0,0,0]$ N  \\ 

  Fixed Torso 
  & $[0,0,0]$ Nm &  $[1.35,    2.39, 0.00]$ Nm                  & $[7.40, 20.24, 4.71]$ Nm  & $[16.31,   12.40,   61.73]$ N   \\\hline
\end{tabular}\vspace{3px}

\caption{RMS validation errors on the galloping dataset using fixed-base vs. floating-base identification. Leg torque vector report ab/ad, hip, and knee residuals. Force and torque residuals are reported in body coordinates with $+x$ forward, $+y$ left, and $+z$ up.}
\label{tab:Validation}
\end{table*}

To analyze the effects of motion restrictions in a concrete situation, consider the scenario of Cheetah~3 executing a transverse gallop. For consistency across cases, motion data is collected in simulation, shown in Figure \ref{fig:CheetahGallop}, and includes the configuration $\vq$, generalized velocity $\vqd \in \mathbb{R}^{18}$, and generalized acceleration $\vqdd \in \mathbb{R}^{18}$.  Motor rotors are modeled as rigid bodies themselves connected to the preceding link via a revolute joint.

In the floating case, we retain the motion as simulated and use inverse dynamics to determine the matching generalized force $\btau_{\rm total} \in \mathbb{R}^{18}$ with effects from active joint torques and ground forces. 
We estimate all 178 inertial parameters $\hat{\vpi} \in \mathbb{R}^{178}$, stemming from the 13 bodies (10 parameters each) and 12 rotationally symmetric rotors (4 parameters each, as described in Sec.~\ref{sec:closedChain}).  Following \eqref{eq:regressor_main}, the parameters of the system are found by solving a least-squares problem:
\begin{equation}
\min_{\hat{\vpi}} \sum_{j=1}^{N_s} \left \| \btau_{\rm total}^{[j]} - \vY\!\left(\vq^{[j]}, \vqd^{[j]}, \vqdd^{[j]}\right) \hat{\vpi} \right\|^2
\label{eq:least_squares_full}
\end{equation}
where $N_s$ is the number of samples used, the superscript $(\cdot)^{[j]}$ indicates the $j$-th sample of the quantity. Note that the experiment captures both torques for the legs at the joints, as well as associated dynamic coupling forces on the body. 

In the fixed case, we lock the torso in place, high above the ground, mimicking the table-top setup in Figure \ref{fig:Cheetah3}. For simplicity, only the front-left (FL) leg is considered. To provide a fair comparison with the floating case, the same swing-leg trajectories are employed, providing configuration $\vq_{FL} \in \mathbb{R}^3$, velocity $\dot{\vq}_{FL}$, and acceleration $ \ddot{\vq}_{FL}$. Again, inverse dynamics determine the
required joint torques $\btau_{FL} \in \mathbb{R}^3$, this time inherently without ground reaction forces.  Equivalent to \eqref{eq:least_squares_full}, we estimate the 42 leg parameters $\hat{\vpi}_{FL} \in \mathbb{R}^{42}$ (3 bodies and rotors).

\begin{figure}[t]
\center
\includegraphics[width=.65 \columnwidth]{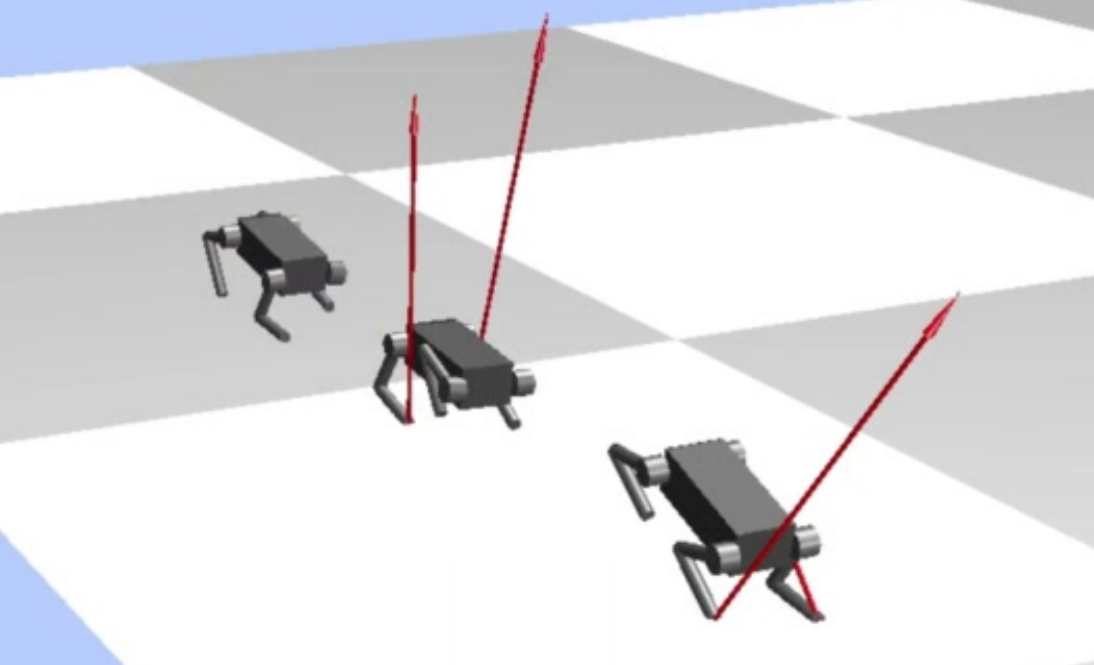}
\caption{Full dynamic simulation of galloping was used to obtain the identification dataset for the MIT Cheetah 3 model.}
\label{fig:CheetahGallop}
\end{figure}

An SVD on the regressor shows that the fixed case includes 18 identifiable combinations (17 from links, and 1 from the knee rotor), while the floating case includes 24 identifiable combinations for each leg (21 from links, and 3 from rotors). The provably-correct output of the RPNA, summarized in Table \ref{tab:Cheetah3Obs}, thus certifies that this motion is maximally exciting for both the fixed- and floating-base cases. Details on the identifiable combinations for both cases are available by running the supplementary MATLAB code.

We computed solutions $\hat{\vpi}$ and  $\hat{\vpi}_{FL}$ to the optimization problems without leveraging prior knowledge by using a pseudo-inverse. 
In practice, regularization is often used to include a prior parameter estimate (often from CAD) (e.g., \cite{lee2020geometric}), which also occurs implicitly \citep{boffi2021implicit} in adaptive settings (e.g., \citep{lee2018natural}).



Table \ref{tab:Validation} shows the validation error across the two cases. For identification with the floating torso, as expected, validation errors are zero in both the floating-base and fixed-base validation cases. 
In contrast, the fixed torso identification only displays favorable generalization when applied to another fixed-base data set. When the identified leg model is used within a full floating-base model, validation errors appear on both the leg torques and the coupling forces/torques.

The parameter indentifiability analysis in Table \ref{tab:Cheetah3Obs} explains the leg torque errors in the floating-base validation. These validation errors occur when motions of the body excite new dynamic effects for the leg that were not captured in the mock table-top experiments. It is observed that these motion restrictions are only on the first two links (ab/ad and hip), while the shank (body 3) can fully excite all its parameters, as signified by ${\rm dim}({\mathcal V}_3)=6$ and ${\rm dim}({\mathcal K}_3)=10$. As a result of this full excitation of the shank parameters in the fixed-base case, the knee joint experiences zero validation errors when generalizing to the free-base case. 

\section{Conclusions}
\label{sec:conclusions}

This paper has introduced the recursive parameter nullspace algorithm (RPNA) to geometrically characterize the identifiability of inertial parameters in a rigid-body system. We have show that unidentifiable parameter combinations have an interpretation as representing a sequence of undetectable inertial transfers across the joints. In arriving at this result, we have transformed the nonlinear parameter identifiability problem for the system as a whole into a sequence of classical linear systems observability problems, proceeding recursively across each joint of the mechanism. 
As a result of these new theoretical advances, the final algorithm is compact (it can be expressed in 10 lines), while generalizing the results of multiple previous authors. 
Extensions have been discussed to handle general multi-DoF joint models, branched kinematic trees, and simple closed loops arising from geared motors. 
The results verify the correctness of the algorithm and illustrate the importance of considering motion restrictions when designing identification strategies for mobile systems. 




\bibliographystyle{SageH}

\appendix

\section{Rigid-Body Dynamics Details}
\label{app:dynamics}

\noindent{\bf Spatial Velocities:}
The spatial velocity is a 6D velocity that collects traditional 3D rotational and linear velocities. When expressed in its body-fixed coordinate frame the spatial velocity of body $i$ takes the form
\begin{equation}
\vv_i = \begin{bmatrix} {\boldsymbol{\omega}}_i \\ {\mbox{\boldmath $v$}}_i \end{bmatrix}
\end{equation}
where $\boldsymbol{\omega}_i \in \mathbb{R}^3$ is the angular velocity in body coordinates, and $\vel_i \in \mathbb{R}^3$ is the linear velocity of the coordinate origin (given in body coordinates).

\noindent{\bf Spatial Transform:}
The $6 \times 6$ matrix $\XM{i}{\pred{i}}$ is a spatial transformation matrix that converts spatial velocities expressed in frame $\pred{i}$ to frame $i$:
\begin{equation}
\XM{i}{\pred{i}} = \begin{bmatrix} \Rot{i}{\pred{i}} & \bzero \\[1ex] -\Rot{\,i}{\pred{i}}\,\vS\!\left( {}^{\pred{i}}\vp_i \right) & \Rot{i}{\pred{i}} \end{bmatrix}\
\end{equation}
where $\Rot{i}{\pred{i}}\in\mathbb{R}^3$ the rotation matrix from frame $\pred{i}$ to frame $i$, ${}^{\pred{i}}\vp_i$ the vector from the origin of frame $\pred{i}$ to the origin of frame $i$, and $\vS(\vx) \in \mathbb{R}^{3 \times 3}$ is the skew-symmetric 3D cross-product matrix satisfying $\vS(\vx) \vy = \vx \times \vy$ for all $\vx,\vy\in\mathbb{R}^3$. Note that if $\TM{i}{\pred{i}}$ is the homogenous transform between frames $i$ and $\pred{i}$, then  $\XM{i}{\pred{i}}$ is equivalent to the (big) Adjoint matrix ${\rm Ad}_{\TM{i}{\pred{i}}}$.
We note that these transformations satisfy
\begin{equation*}
  \XM{\pred{i}}{i} = \XM{i}{}_{\pred{i}}^{-1} 
\end{equation*}
However, unlike with rotation matrices, $\XMT{\pred{i}}{i} \ne \XM{i}{}_{\pred{i}}^{-1}$.

\noindent{\bf Spatial Cross Product:} The $6 \times 6$ spatial ``cross product'' matrix is given by
\begin{equation}
\begin{bmatrix} {\boldsymbol{\omega}} \\ {\mbox{\boldmath $v$}} \end{bmatrix} \times = \begin{bmatrix} \vS(\boldsymbol{\omega})  & 0 \\ \vS({\mbox{\boldmath $v$}})  & \vS(\boldsymbol{\omega}) \end{bmatrix} \nonumber
\end{equation}
Similar to the standard 3D cross product, the spatial cross product can be used to provide the rate of change in a 6D quantity due to its expression in moving coordinates \citep{Featherstone08}. Note that the $6\times 6$ cross product matrix $(\vv \times)$ is equivalent to the (little) adjoint matrix ${\rm ad}_{\vv}$.


\section{Second Example for Fixed-Based Systems}
\label{app:example2}

For the system in Fig.~\ref{fig:simpleManips}(b), the velocity of body 2 and its span can be given by
\begin{equation}
\vv_2 = \left[\begin{smallmatrix} \ell s_2 \dot{q}_1 \\ \ell c_2 \dot{q}_1 \\ \dot{q_2} \\ 0 \\ 0 \\ -\ell \dot{q}_1 \end{smallmatrix} \right] \quad\quad \vV_2 = \left[\begin{smallmatrix} 1 & 0 & 0 &0\\ 0 &1 &0 &0\\ 0 &0 &1 &0\\ 0 & 0 &0 &0\\ 0 & 0 & 0 &0 \\0 & 0 & 0 &1 \end{smallmatrix}\right]\nonumber
\end{equation}
In comparison to the example system in Fig.~\ref{fig:simpleManips}(a), the span $\vV_2$ has an extra column, representing additional motion freedoms for the second body of this non-planar system. This example also highlights that not all velocities in the span $\Vspan{2}$ are themselves attainable. It is observed that while the last column of $\vV_2$ is in the span of attainable velocities, a pure linear velocity in the $\hat{z}_2$ direction is not possible.

The first body again only has a single identifiable parameter $I_{zz_1}$. Considering a transfer between body~1 and body~2, the momentum condition $\vPhi_{2}\T \, \dI_2\, \vV_2 = \bzero$ enforces
\beq
\delta I_{xz_2} = \delta I_{yz_2} = \delta I_{zz_2} = 0 \label{eq:SimpleExample2Momentum_app}
\eeq
Similar to the previous case, these conditions impose that inertial changes to body $2$ must not create angular momentum about $\hat{z}_2$. However, the change in joint geometries between the examples provides a different set of parameters that are identifiable via the second joint torque.

The \mapping condition:
\[
\vPhi_1\T \,\XMT{2}{1}(0)\, \dI_2^{(k)}  \,\XM{2}{1}(0)\, \vPhi_1 =0\quad \forall k=1,\ldots,10
\]
collectively enforces
\beq
 \delta h_{x_2} = \delta h_{y_2} = \delta I_{xx_2} - \delta I_{yy_2} = \delta I_{xy_2} = 0 \label{eq:SimpleExample2Invariance_app}
\eeq
for $k=1,\ldots,4$. The conditions are redundant for all larger $k$. Again, these conditions ensure that any variations how in $\dI_2$ maps to $\delta I_{zz_1}$ must be zero. Note that the rotational inertia of Body $2$ about $\hat{z}_1$ is
\begin{align}
\vPhi_1\T \,\XMT{2}{1}\, \vI_2\, \XM{2}{1} \, \vPhi_1 &= m_2 \ell^2 + c_2^2 I_{yy_2} + s_2^2 I_{xx_2} \nonumber \\ 
	& ~~~+ 2 c_2 s_2 I_{xy_2} + 2 \ell c_2 h_{x_2} - 2\ell s_2 h_{y_2}  \nonumber
\end{align}
This term staying constant with changes in $q_2$ is equivalent to \eqref{eq:SimpleExample2Invariance_app}, and can be deduced from conditions on its first four derivatives with respect to $q_2$.

\section{Proof of Theorem \ref{thm:main}}
\label{sec:Appendix:varH}

We restate the theorem from the main text:

\begin{customthm}{2}
(Main Result) Consider a serial-chain rigid-body system in the absence of gravity, with the following inertia transfer subspaces for each joint ($i \in \{1,\ldots,\nbod\}$):
\begin{align}
\TsetInertia_i =  \{ \dI_1, \ldots&, \dI_N \in \Iset ~| ~ \exists \dI_0 \in \Iset , \dI_j=\bzero {\rm{~if~}} j\notin\{i,\pred{i}\}, \nonumber\\
\dI_{\pred{i}} &= - \XJT{i}\, \dI_i \, \XJ{i}, \nonumber\\ 
\bzero &= \vPhi_i\T \, \dI_i \,\vV_i ,  \nonumber\\
\bzero &= \trace( \W_{\pred{i},j} \, \XJT{i}\, \dI_i^{(k)}\, \XJ{i} \,)\nonumber \\ 
& ~~~~~~~~~~~~~\forall {k = 1, \ldots, 10}, j={1, \ldots, \Nw{\pred{i}}} \nonumber 
\nonumber
~\}\nonumber
~\}
\end{align}
Then, the set of all structurally unobservable inertia changes is given by $\TsetInertia_1 \oplus \cdots \oplus \TsetInertia_N$, where $\oplus$ denotes a direct sum of vector subspaces.
\end{customthm}

\begin{proof} Consider the set of inertia variations up to body $i$ that don't change the kinetic energy:
\begin{align*}
\scaleobj{.9}{\NsetInertia_i = 
\{ \dI_1, \ldots, \dI_N \in \Iset ~|~ \delta T(\vq,\vqd) = 0~\forall \vq,\vqd , \delta \vI_j=\mathbf{0} {\rm~if~} j>i\}}. 
\end{align*}
We show via induction that for $i=0,\ldots, N$ 
\begin{equation}
\NsetInertia_i=\bigoplus_{j=1}^{i} \TsetInertia_j \label{eq:thingToShow}
\end{equation}

{\em Base case ($i=0$):} \eqref{eq:thingToShow} holds since $\NsetInertia_0$ is the set containing only the zero element.

{\em Inductive step:} We assume \eqref{eq:thingToShow} holds at some $\pred{i}$, and show this implies it holds at $i$. To do so, we consider perturbations  $\dI_1, \ldots, \dI_N$ such that $\dI_j = \bzero$ for all $j>i$. We will show that these perturbations don't change the kinetic energy if and only if they are in the set $\bigoplus_{j=1}^{i} \TsetInertia_j$. Now,  not changing the kinetic energy requires 
$
0=\delta T = \frac{1}{2} \sum_{j=1}^{i} \vv_j\T \dI_j \vv_j\,.
$ 
Then, using \eqref{eq:two_body_diff_kin}, the kinetic energy variation can be expressed as:
\begin{align}
\delta T  =& \frac{1}{2} \sum_{j=1}^{i-2}\left( \vv_j\T \dI_j \vv_j \right) \nonumber\\
			 &~+ \vv_{\pred{i}}\T \XMT{i}{\pred{i}} \dI_i \vPhi_i \qd_i + \frac{1}{2} \vPhi_i\T \dI_i \vPhi_i \qd_i^2\nonumber \\
			 &~ + \frac{1}{2} \vv_{\pred{i}}\T \left[ \dI_{\pred{i}} + \XMT{i}{\pred{i}} \dI_i \XM{i}{\pred{i}} \right] \vv_{\pred{i}} \label{eq:T_transfer}
\end{align}
Consider a linear change of variables for $\dI_{\pred{i}}$:
\begin{equation}
\dI_{\pred{i}} = \dI_{\pred{i}}' - \XJT{i} \dI_i \XJ{i}
\label{eq:ChangeOfVariables}
\end{equation}
which forms $\dI_{\pred{i}}$ via an inertia transfer from the child plus an additional change $\dI_{\pred{i}}'$.
Under this change of variables: 
\begin{align}
\delta T  =& \frac{1}{2} \sum_{j=1}^{i-2}\left( \vv_j\T \dI_j \vv_j \right)+ \frac{1}{2} \vv_{\pred{i}}\T \dI_{\pred{i}}' \vv_{\pred{i}}   \label{eq:Tcond1} \\
			 &~+ \vv_{\pred{i}}\T \XMT{i}{\pred{i}} \dI_i \vPhi_i \qd_i  + \frac{1}{2} \vPhi_i\T \dI_i \vPhi_i \qd_i^2  \label{eq:Tcond2} \\
			 &~ + \frac{1}{2} \vvJi\T  \DI{i}(q_i) \vvJi  \label{eq:Tcond3}
\end{align}
where 
$\scaleobj{.95}{\DI{i}(q_i)= \XMT{i}{\pred{i}}(q_i) \dI_i \XM{i}{\pred{i}}(q_i) - \XMT{i}{\pred{i}}(0) \dI_i \XM{i}{\pred{i}}(0)}$. 

Considering the cases when $q_i=0$ and $\dot{q}_i=0$ shows that $\delta T$ will be identically zero if and only if \eqref{eq:Tcond1}, \eqref{eq:Tcond2}, and \eqref{eq:Tcond3} are all individually identically zero. 

The terms in \eqref{eq:Tcond1} being zero is equivalent to $\dI_1,\ldots,\dI_{i-2}$, $\dI_{\pred{i}}'$ giving $\delta T=0$, which is the same as the perturbations being in $\NsetInertia_{\pred{i}}$.

The terms from \eqref{eq:Tcond2} being zero for all $\vq,\vqd$ is equivalent to 
$\vv_i\T \dI_i \vPhi_i = 0$  holding for all $\vq, \vqd$, 
which, via the argument in the main body, is the same as $\vPhi_i\T \,\delta \vI_i \, \vV_i=0$.

Finally, the terms from \eqref{eq:Tcond3} being zero is equivalent to
\begin{align}
0  &= \frac{\partial}{\partial q_i} \vvJi\T\, [\XMT{i}{\pred{i}}(q_i) \dI_i \XM{i}{\pred{i}}(q_i)]\,\vvJi \quad \forall \vq, \vqd
\label{eq:decoupling_condition_arranged}
\end{align}
which, via the argument in the main body, is the same as $0=\trace(\vW_{\pred{i},j} \,\XJT{i}\, \dI_i^{(k)}\, \XJ{i}\,)$ for $k=1,\ldots,10$, $j=1,\ldots,\Nw{\pred{i}}$.

Thus, we've shown that a variation to the inertias $(\dI_1, \ldots, \dI_N)  \in \NsetInertia_i$ iff it can be written as a parameter variation in $\NsetInertia_{\pred{i}}$ added with one in $\TsetInertia_i$. This proves the inductive step.

Considering \eqref{eq:thingToShow} when $i=N$ proves the theorem. \qed
\end{proof}


\section{Proof of Lemma \ref{corollary:velocityProp}}
\label{sec:app:proof_lemma1}

To prove the lemma in the main text, we begin with a proposition to compute the span of velocities that can be reached after a joint given a span of velocities before it.
\begin{proposition}
\label{prop:ctrb}
Consider a spatial transform as a function of a single angle $q$, denoted $\XM{}{}(q)$. Suppose  $\XM{}{}(0) = \bone$ and that
\begin{equation}
\frac{\partial }{\partial q} \XM{}{}{}(q) = -(\vPhi \times) \XM{}{}(q) \label{eq:XacrossJoint}
\end{equation}
for some $\vPhi\in \mathbb{R}^{6\times1}$. Then, for any $\vV \in\mathbb{R}^{6\times k}$ 
\begin{align}
&{\rm span}\{   \vv ~|~ \exists q \in \mathbb{R}, \vv \in {\rm Range}( \XM{}{}(q)\, \vV) \} \\\nonumber
&~~~~={\rm Range}(\, {\rm{Ctrb}}(\, (\vPhi \times),\, \vV\,)\, ) \label{eq:controllability} \nonumber
\end{align}
where ${\rm{Ctrb}}(\, (\vPhi \times),\, \vV\,)$ is the controllability matrix associated with the pair $(\, (\vPhi \times),\, \vV\,)$ \cite{Rugh96}.  
\end{proposition}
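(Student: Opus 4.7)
\textbf{Proof plan for Proposition~\ref{prop:ctrb}.} My plan is to first solve the matrix ODE explicitly, then prove each inclusion using Cayley--Hamilton for one direction and a derivative/duality argument for the other. Since the hypothesis \eqref{eq:XacrossJoint} together with $\XM{}{}(0) = \bone$ is a linear matrix initial value problem with constant coefficient $-(\vPhi \times) \in \mathbb{R}^{6\times 6}$, the unique solution is the matrix exponential
\begin{equation*}
\XM{}{}(q) = \exp\!\bigl(-q\,(\vPhi\times)\bigr).
\end{equation*}
Denote $A = (\vPhi\times)$, $\mathcal{W} = \mathrm{span}\{\vv \mid \exists q \in \mathbb{R},\ \vv \in \mathrm{Range}(\XM{}{}(q)\vV)\}$, and $\mathcal{C} = \mathrm{Range}(\mathrm{Ctrb}(A, \vV))$. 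The goal is to show $\mathcal{W} = \mathcal{C}$.

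\textbf{The inclusion $\mathcal{W} \subseteq \mathcal{C}$.} Here I would invoke the Cayley--Hamilton theorem: since $A$ is $6\times 6$, every power $A^k$ with $k \geq 6$ is a polynomial of degree at most $5$ in $A$, so
\begin{equation*}
\exp(-qA) = \sum_{k=0}^{5} \alpha_k(q)\,A^k
\end{equation*}
for suitable scalar coefficients $\alpha_k(q)$. Consequently, for every $q$ and every $\vw$,
\begin{equation*}
\XM{}{}(q)\vV\vw = \sum_{k=0}^{5} \alpha_k(q)\,A^k \vV\vw \in \mathrm{Range}\bigl[\vV,\ A\vV,\ \ldots,\ A^5\vV\bigr] = \mathcal{C}.
\end{equation*}
Taking spans yields $\mathcal{W} \subseteq \mathcal{C}$.

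\textbf{The inclusion $\mathcal{C} \subseteq \mathcal{W}$.} Here I would use a duality (orthogonal complement) argument, which is cleaner than manipulating difference quotients directly. Pick any covector $\vu \in \mathbb{R}^6$ with $\vu\T \vw = 0$ for all $\vw \in \mathcal{W}$. Then in particular $\vu\T\, \XM{}{}(q)\,\vV = \bzero$ for every $q\in\mathbb{R}$. Differentiating this identity $k$ times with respect to $q$ at $q=0$, and using $\frac{\rm d}{{\rm d}q}\XM{}{}(q) = -A\,\XM{}{}(q)$, gives
\begin{equation*}
\vu\T\,(-A)^k\,\vV = \bzero, \qquad k = 0,1,2,\ldots
\end{equation*}
Hence $\vu$ annihilates every column of each $A^k \vV$, i.e., $\vu \in \mathcal{C}^\perp$. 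This shows $\mathcal{W}^\perp \subseteq \mathcal{C}^\perp$, and taking orthogonal complements in the finite-dimensional space $\mathbb{R}^6$ (where all linear subspaces are closed) yields $\mathcal{C} \subseteq \mathcal{W}$, completing the proof.

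\textbf{Main obstacle.} The proof itself is short and standard once the ODE is integrated to a matrix exponential; the only place I would be careful is in the $\mathcal{C}\subseteq\mathcal{W}$ step, where it is tempting to argue via Taylor expansion $\exp(-qA)\vV = \sum_k \frac{(-q)^k}{k!} A^k \vV$ and conclude that each $A^k\vV$ lies in $\mathcal{W}$. That argument needs the observation that a linear subspace of $\mathbb{R}^6$ is closed under limits, which is automatic in finite dimensions but easy to gloss over. The duality argument sketched above sidesteps this subtlety entirely by working with a fixed annihilator $\vu$, so I would present it that way.
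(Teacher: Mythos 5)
Your proof is correct, and its first half coincides with the paper's: both integrate the ODE to $\XM{}{}(q) = \exp(-q(\vPhi\times))$ and use Cayley--Hamilton to get ${\rm span}\{\XM{}{}(q)\vV\vw\} \subseteq {\rm Range}({\rm Ctrb})$. You diverge on the reverse inclusion. The paper invokes the standard linear-systems characterization of ${\rm Range}({\rm Ctrb}((\vPhi\times),\vV))$ as the \emph{smallest} $(\vPhi\times)$-invariant subspace containing ${\rm Range}(\vV)$, and then observes that the span of the reachable vectors is itself $(\vPhi\times)$-invariant and contains ${\rm Range}(\vV)$ (the invariance coming from differentiating the curve $q\mapsto\XM{}{}(q)\vv$, which implicitly uses closedness of subspaces in $\mathbb{R}^6$). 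You instead fix an annihilator $\vu$ of the reachable span, differentiate the identically-zero scalar function $q\mapsto\vu\T\XM{}{}(q)\vV$ repeatedly at $q=0$ to get $\vu\T(\vPhi\times)^k\vV=\bzero$ for all $k$, and conclude by taking orthogonal complements. The two arguments are of essentially equal length; yours has the minor advantage of not needing the ``smallest invariant subspace'' fact as a black box and of confining the limiting operation to derivatives of a scalar function that vanishes identically, while the paper's is slightly more conceptual in that it exhibits the reachable span directly as an invariant subspace. Either is acceptable.
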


\begin{proof}[Proof of Proposition \ref{prop:ctrb}]
We define
\[
\mathcal{S}(\vPhi, \vV) = {\rm span}\{ \XM{}{}(q) \vv ~|~q\in\mathbb{R},~\vv\in {\rm Range}(\vV) \} 
\]
and recall, from the Lemma statement, that 
\[
\frac{\rm d}{{\rm d} q} \XM{}{}(q) = - (\vPhi \times) \XM{}{}(q) 
\]
From the definition of the matrix exponential for a linear system \citep{Rugh96}:
\[
 \XM{}{}(q) \vV =  {\rm e}^{\,-q(\vPhi \times)} \vV
\]
The Cayley-Hamilton theorem then ensures that 
\[
\mathcal{S}(\vPhi, \vV)\subseteq  {\rm Range}\left([\vV,(\vPhi \times)\vV,\ldots,(\vPhi \times)^5\vV]\right)
\] 
and thus
\[
\mathcal{S}(\vPhi, \vV)\subseteq  \rm{Range}(\, {\rm{Ctrb}}( (\vPhi \times), \vV)\,)
\]
Note, the range of the controllability matrix provides the smallest
$(\vPhi \times)$-invariant subspace containing ${\rm Range}(\mathcal{\vV})$. 
Yet, $\mathcal{S}(\vPhi,\vV)$ is invariant under $(\vPhi\times)$ and contains ${\rm Range}(\mathcal{\vV})$. This proves the reverse containment.\qed 
\end{proof}

\begin{proof}[Proof of Lemma \ref{corollary:velocityProp}]
The propagation of the attainable velocity span:
\[
\mathbf{V}_i = \left [\, {\rm Ctrb}\!\left(\, ({\boldsymbol \Phi}_i \times) ,\XJ{i}\,\mathbf{V}_{\pred{i}} \,\right)~ \boldsymbol{\Phi}_i \, \right] 
\]
follows from Proposition \ref{prop:ctrb} and Eqs.~\eqref{eq:two_body_diff_kin} and \eqref{eq:change_of_X_main}.\qed

\end{proof}

\section{Proof of Lemma \ref{lemma:obs}}
\label{sec:Proof_Observability}

Before proving \eqref{lemma:obs}, we introduce a hat operator $[\cdot]^\wedge$ that maps inertial parameters to a 6D inertia matrix:
\[
\left[\vpi\right]^\wedge = \vI(\vpi)
\]
in order to streamline the equations.

The following proposition is key to proving Lemma 2.
\begin{proposition}
\label{obs:prop}
Consider a spatial transform as a function of a single angle $q$, denoted $\XM{}{}(q)$. Suppose  $\XM{}{}(0) = \bone$ and that there exists $\vPhi\in \mathbb{R}^{6\times1}$ such that
\begin{equation}
\frac{\partial }{\partial q} \XM{}{}{}(q) = -(\vPhi \times) \XM{}{}(q) \nonumber
\end{equation}
Then, for any $\mathbf{C} \in \mathbb{R}^{k\times10}$, the following holds
\begin{align}
&\{\vpi \in \mathbb{R}^{10} ~|~  \mathbf{C} \left[ \,\XMT{}{}\!(q)\, [\vpi]^\wedge \, \XM{}{}(q) \right]^{\vee} = \mathbf{0},~ \forall q \in \mathbb{R}\} \nonumber\\
&~~~={\rm Null}\left(\, { \rm Obs}\left(\, \mathbf{C} ,\,  \vA( \vPhi)\,  \right) \,\right) \nonumber
\end{align}
where ${ \rm Obs}\left(\, \mathbf{C},\,  \vA( \vPhi)\,  \right)$ is the observability matrix \citep{Rugh96} associated with the pair $\left(\, \mathbf{C},\,  \vA( \vPhi)\, \right)$.
\end{proposition}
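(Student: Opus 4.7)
The plan is to convert the infinite family of algebraic conditions indexed by $q \in \mathbb{R}$ into a finite linear system by differentiating with respect to $q$ and invoking analyticity together with the Cayley–Hamilton theorem. Define the vector-valued function $f(q) = \vC \left[ \XMT{}{}(q)\, [\vpi]^\wedge\, \XM{}{}(q) \right]^\vee$. Since $\XM{}{}(q) = \exp(-q\,(\vPhi\times))$ has entries that are entire analytic functions of $q$, each component of $f$ is real-analytic in $q$. Hence $f(q) \equiv 0$ on $\mathbb{R}$ if and only if $f^{(k)}(0) = 0$ for every $k \ge 0$.

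Next I would compute $f^{(k)}(0)$ in closed form. Differentiating $\XMT{}{}(q)\, \vM\, \XM{}{}(q)$ once and using \eqref{eq:XacrossJoint} yields
\begin{equation}
\frac{d}{dq}\!\left[ \XMT{}{}(q)\, \vM\, \XM{}{}(q) \right] = \XMT{}{}(q)\bigl[ (\vPhi\times)^\top \vM + \vM\,(\vPhi\times) \bigr] \XM{}{}(q). \nonumber
\end{equation}
By the definition \eqref{eq:A_InertialParams} of $\vA(\vPhi)$, the bracketed operator, when applied to $[\vpi]^\wedge$, corresponds in parameter coordinates to left multiplication by $\vA(\vPhi)$, i.e.\ $\left[(\vPhi\times)^\top [\vpi]^\wedge + [\vpi]^\wedge (\vPhi\times)\right]^\vee = \vA(\vPhi)\,\vpi$. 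A straightforward induction then gives
\begin{equation}
\left[ \XMT{}{}(q)\, [\vpi]^\wedge\, \XM{}{}(q) \right]^{\vee\,(k)}\!\!\bigg|_{q=0} = \vA(\vPhi)^{k}\,\vpi, \nonumber
\end{equation}
since $\XM{}{}(0) = \bone$. Therefore $f^{(k)}(0) = \vC\, \vA(\vPhi)^k\, \vpi$, and the condition $f \equiv 0$ is equivalent to $\vC\,\vA(\vPhi)^k\,\vpi = 0$ for all $k \ge 0$.

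Finally, because $\vA(\vPhi)$ is a $10\times 10$ matrix, the Cayley–Hamilton theorem implies that $\vA(\vPhi)^k$ for $k \ge 10$ is a linear combination of $\vA(\vPhi)^0,\ldots,\vA(\vPhi)^9$, so the infinite family of conditions collapses to the finite system $\vC\,\vA(\vPhi)^k\,\vpi = 0$ for $k = 0,\ldots,9$. Stacking these gives exactly $\mathrm{Obs}(\vC,\vA(\vPhi))\,\vpi = 0$, proving the claimed equality of sets.

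The main obstacle I expect is the bookkeeping in step two: verifying carefully that the wedge/vee operators intertwine the matrix-level derivative operator with the parameter-level operator $\vA(\vPhi)$, and that this intertwining persists under iteration. Once the identity $\left[ (\vPhi\times)^\top [\vpi]^\wedge + [\vpi]^\wedge (\vPhi\times) \right]^\vee = \vA(\vPhi)\,\vpi$ is established (which is essentially the definition \eqref{eq:A_InertialParams}) and linearity of $[\cdot]^\vee$ is used, the induction is routine. The analyticity argument and Cayley–Hamilton step are standard and cause no difficulty.
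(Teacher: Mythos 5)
Your argument is correct and is essentially the paper's proof: the paper recasts $\vpi(q) = \left[\XMT{}{}(q)\,[\vpi_0]^\wedge\,\XM{}{}(q)\right]^\vee$ as the solution of the linear system $\frac{{\rm d}}{{\rm d}q}\vpi(q) = -\vA(\vPhi)\,\vpi(q)$ and then cites standard linear observability, which is exactly the derivative-at-zero plus Cayley--Hamilton computation you carry out explicitly. The only slip is a dropped minus sign --- differentiating with $\frac{{\rm d}}{{\rm d}q}\XM{}{}(q) = -(\vPhi\times)\XM{}{}(q)$ gives $f^{(k)}(0) = (-1)^k\,\vC\,\vA(\vPhi)^k\,\vpi$ --- but this is immaterial because the null space of the stacked family of conditions is unchanged.
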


\begin{proof}[Proof of Proposition \ref{obs:prop}]
Let $\vpi_0 \in \mathbb{R}^{10}$ and denote
\[ 
\vpi(q) = [\, \XMT{}{}\!(q)\, [\vpi_0]^\wedge \, \XM{}{}(q) \,]^\vee
\]

Using \eqref{eq:change_of_X_main}, \eqref{eq:A_InertialParams}, and the fact that $\XM{}{}(q)$ and $(\vPhi\times)$ commute:
\begin{align}
\frac{\rm d}{{\rm d}q} \vpi(q) &= -\left[\, (\vPhi\times)\T\, [\vpi(q)]^\wedge +  [\vpi(q)]^\wedge\, (\vPhi\times) \,\right]^\vee \nonumber\\ 
 &= -\vA(\vPhi)\, \vpi(q) \nonumber
 \end{align}
Linear systems observability results \citep{Rugh96} then guarantee that the following are equivalent
\begin{align}
\scaleobj{.96}{\vpi_0  \in {\rm Null}\left(\, { \rm Obs}\left(\, \mathbf{C},\,  \vA( \vPhi)\,  \right) \,\right) \,\iff \,\vpi(q) \in {\rm Null}(\mathbf{C}) ~~~\forall q} \nonumber 
\end{align}
\qed
\end{proof}

\begin{proof}[Proof of Lemma \ref{lemma:obs}]
We aim to construct $\vK_i$ so that ${\rm Range}(\vK_i) = \mathcal{K}_i = \spn(\{\nabla_{\vpi} \vv\T \vI(\vpi) \vv \,|\, \vv\in\mathcal{V}_i^*\})$. Equivalently, we construct $\vK_i\T$ so that ${\rm Null}(\vK_i\T) = \mathcal{K}_i^\perp$.

Suppose $\delta \vpi_i \in \mathcal{K}_i^\perp$. That is, we consider a corresponding $\dI_i$ such that $
\vv_i\T \dI_i \vv_i=0 ~~\forall \vq,\vqd$. 
This is equivalent to
\[
\left[\XM{i}{\pred{i}}\vv_{\pred{i}} + \vPhi_i \qd_i\right] \T \dI_i \left[ \XM{i}{\pred{i}} \vv_{\pred{i}} + \vPhi_i \qd_i  \right]=0
\]
for all $\vq,\vqd$. Expanding terms, this implies
\begin{align}
0&=\vv_{\pred{i}}\T \XMT{i}{\pred{i}}(q_i) \dI_i \XM{i}{\pred{i}}(q_i) \vv_{\pred{i}}  \quad \forall \vq,\vqd \label{eq:app_expand1}\\
0&=\vPhi_{i}\T \dI_i \vv_i \quad \forall \vq,\vqd \label{eq:app_expand2}
\end{align}
We take the transformation $\XM{i}{\pred{i}}(q_i)$ from frame $\pred{i}$ to $i$, and instead decompose it using a fixed transformation to a frame $J_i$ right before the joint so that $\XM{J_i}{\pred{i}} = \XM{i}{\pred{i}}(0)$. With this definition, we express $\XM{i}{\pred{i}}(q_i) = \XM{i}{J_i}(q_i)\, \XM{J_i}{\pred{i}}$. The first condition \eqref{eq:app_expand1} above is then equivalent to
\[
\bzero = \vK_{\pred{i}}\T \Bi\left[ \XMT{i}{J_i}(q_i) \dI_i \XM{i}{J_i}(q_i) \right]^{\vee} \quad \forall q_i
\]
and the second \eqref{eq:app_expand2} equivalent to 
\[
\MomentumCondition{i} = \mathbf{0}
\]
where $\vC(\vV_{i}, \vPhi_i)$ is given by \eqref{eq:Cmom}. Using Proposition \ref{obs:prop}, it follows that $\vK_i\T$ can be selected as
\[
\vK_i\T = \begin{bmatrix} {\rm Obs}(\,\vK_{\pred{i}}\T \Bi ,\, \vA(\vPhi_i) \,)  \\[1ex]
 						\MomentumRows{i}  \end{bmatrix}
\] \qed
\end{proof}

\section{Identifiability from Gravity}
\label{sec:Conditions_g}

Similar to the nullspace for the kinetic energy, parameter variations ensuring $\delta \vg \equiv \mathbf{0}$ can be formed via sequences of inertia transfers. The variation $\delta \vg$ to the generalized gravitational force is equal to zero if and only if it does not affect the rate of change in potential energy $\dot{V}$:
\[
0 = \delta \dot{V} = \vqd\T (\delta \vg) = - \sum_{j=1}^{\nbod} \vv_j\T \, \dI_j\, \XM{j}{0}  \, {}^0 \va_g 
\]
where ${}^0 \va_g $ is the spatial acceleration due to gravity. Each entry of the sum characterizes a change in the instantaneous power of the gravitational force on Body $j$. We again assume that body $i$ is the largest body with $\dI_i \ne \bzero$, and follow a similar approach to the kinetic energy nullspace analysis. Following an equivalent derivation to Appendix \ref{sec:Appendix:varH}
it can be shown that $\delta \vg = \bzero$ iff
\begin{align}
\bzero & = \vPhi_i\T \,\dI_i\, \XM{i}{0}\,  {}^0 \va_g  \label{eq:vgcond1}\\
0 & = \vvJi\T \,\DI{i}(q_i) \, \XM{\pred{i}}{0}\, {}^0 \va_g   \label{eq:vgcond2} 
\end{align}
for all $\vq,\vqd$, and subsequent changes $\dI_1, \ldots, \dI_{i-2},\dI_{\pred{i}}'$ independently satisfy $\delta \vg=\bzero$. Similar to before, the substitution introducing $\dI_{\pred{i}}'$ via \eqref{eq:ChangeOfVariables} decouples considerations of transfers across joint~$i$ from transfers earlier in the chain.

Condition \eqref{eq:vgcond1} motivates the attainable gravity vector span
\[
\mathcal{G}_i = {\rm span}\left\{ \XM{i}{0}(\vq) \,{}^0 \va_g ~|~ \vq \in \mathbb{R}^{\nbod} \right\}
\]
Analogous to Lemma \ref{corollary:velocityProp}, we seed $\vG_0 = {}^0 \va_g$, let $\wellBeforeJoint{\vG}{i} = \XJ{i}\, \vG_{\pred{i}}$, and recursively apply
\[
\vG_{i} = {\rm Ctrb}(\, (\vPhi_i \times), \wellBeforeJoint{\vG}{i}  \, )
\]
which ensures each ${\rm Range}(\vG_i) = \mathcal{G}_i$.  Intuitively, changes satisfying $\vPhi_i\T \dI_i \vG_i = \bzero$ cannot be detected via the preceding joint torque in static cases.


The second condition \eqref{eq:vgcond2} can be addressed by generalizing the propagation of $\vK_i$ from Lemma~\ref{lemma:obs} to include gravitational effects. This extension can be accomplished by including the new parameters that are identified via torques on each joint:
\[
\vK_i\T = \begin{bmatrix} {\rm Obs}(\,\vK_{\pred{i}}\T \Bi ,\, \vA(\vPhi_i) \,)  \\[1ex]
 						\vC([\vV_{i}~ \vG_i], \vPhi_i)\T  \end{bmatrix}
\]
Comparing the propagation of $\vG_i$ and $\vV_i$
\begin{align}
\vV_0 &= \bzero			&  \mathbf{V}_i &= \left [\, {\rm Ctrb}\!\left(\, ({\boldsymbol \Phi}_i \times) ,\vVJi \,\right)~ \boldsymbol{\Phi}_i \, \right] \\
\vG_0 &= {}^0 \va_g 	&  \vG_i &= {\rm Ctrb}(\, (\vPhi_i \times), \wellBeforeJoint{\vG}{i}  \, )
\end{align}
a union of these bases $\tilde{\vV}_i = [\vV_i ~ \vG_i]$ can thus be propagated together in one operation via:
\begin{align}
\tilde{\vV}_0 &= {}^0 \va_g 			&  \tilde{\mathbf{V}}_i &= \left [\, {\rm Ctrb}\!\left(\, ({\boldsymbol \Phi}_i \times) , \XJ{i} \tilde{\vV}_{\pred{i}}  \,\right)~ \boldsymbol{\Phi}_i \, \right] \nonumber
\end{align}
which simply represents a change in seed for $\vV_0$. Again, this change in seed can be interpreted as adding a prismatic joint at the base that is aligned with gravity, but whose joint force is not measured.

\section{Computing the System Parameter Nullspace}
\label{subsec:ParameterNullspace} 
For each body, consider $\vR_i$ as any full rank matrix such that ${\rm Range}(\vR_i) = {\rm Null}( \vN_i)$. With this local nullspace basis, we construct a block upper-bidiagonal matrix $\vR$ such that
\begin{align}
\vR_{i,i} &= \vR_i \nonumber\\
\vR_{\pred{i}, i} &= -\Bi\, \vR_i \textrm{~~when $\pred{i}>0$} \nonumber
\end{align}
and $\vR_{i,j} = \bzero$ otherwise. Following this construction ${\rm Range}(\vR) = \mathcal{N}$. Similarly, we can use the local nullspace descriptors $\vN_i$ to determine a basis for $\mathcal{N}^\perp$. A system nullspace descriptor $\vN$ is constructed as a block upper-triangular  matrix satisfying
\begin{align}
\vN_{i,i} &= \vN_i  \nonumber \\
\vN_{i, j}&= \vN_{i, \pred{j}} \Bj 	 \quad \forall j < i \nonumber
\end{align}
and ${\vN}_{i,j} = \bzero$ otherwise. 
Following this construction, ${\rm Null}(\vN) = \mathcal{N}$ 
and thus, ${\rm Range}(\vN\T) = \mathcal{N}^\perp$. 
The significance of this property is that each row of $\vN$ describes a linear combination of parameters that can be identified. 
The row-reduced-echelon form of $\vN$ allows identifiable parameters (individually or through regroupings) to be plainly discerned.

\section{Aside: Revisiting \cite{Ayusawa14}}
\label{sec:compareToKo}
\citet{Ayusawa14} provided a powerful result that is summarized as follows. For a floating-base open-chain system, the inertial parameters that are identifiable through measurement of joint torques and external forces are the same as with external forces alone. This result has immediate application to identifying position-controlled robots without torque sensing. To relate these previous results with the approaches taken here, consider a partitioning of the equations of motion \eqref{eq:eom}:
\begin{equation}
\begin{bmatrix} \vH_1(\vq) \\ \vH_*(\vq) \end{bmatrix} \vqdd + \begin{bmatrix} \vc_1(\vq,\vqd) + \vg_1(\vq) \\ \vc_*(\vq,\vqd) + \vg_*(\vq) \end{bmatrix}  = \begin{bmatrix} \bzero \\ \boldsymbol{\tau}_* \end{bmatrix} + \begin{bmatrix} \vJ_1(\vq)\T \vf \\ \vJ_*\T \vf \end{bmatrix}
\label{eq:block_eom}
\end{equation}
where $\vH_1$, $\vc_1$, and $\vg_1$ give the top six rows of $\vH$, $\vc$, and $\vg$ corresponding to the floating base, and $\vJ_1$ gives the left six columns of $\vJ$ likewise corresponding to the floating base. The $*$ entries represent analogous entries for the joints, but won't be needed elsewhere in our development. The main result from \citet{Ayusawa14} is re-phased below with an alternate proof using a few short physics arguments. 

\begin{figure}
    \centering
    \includegraphics[width=\columnwidth]{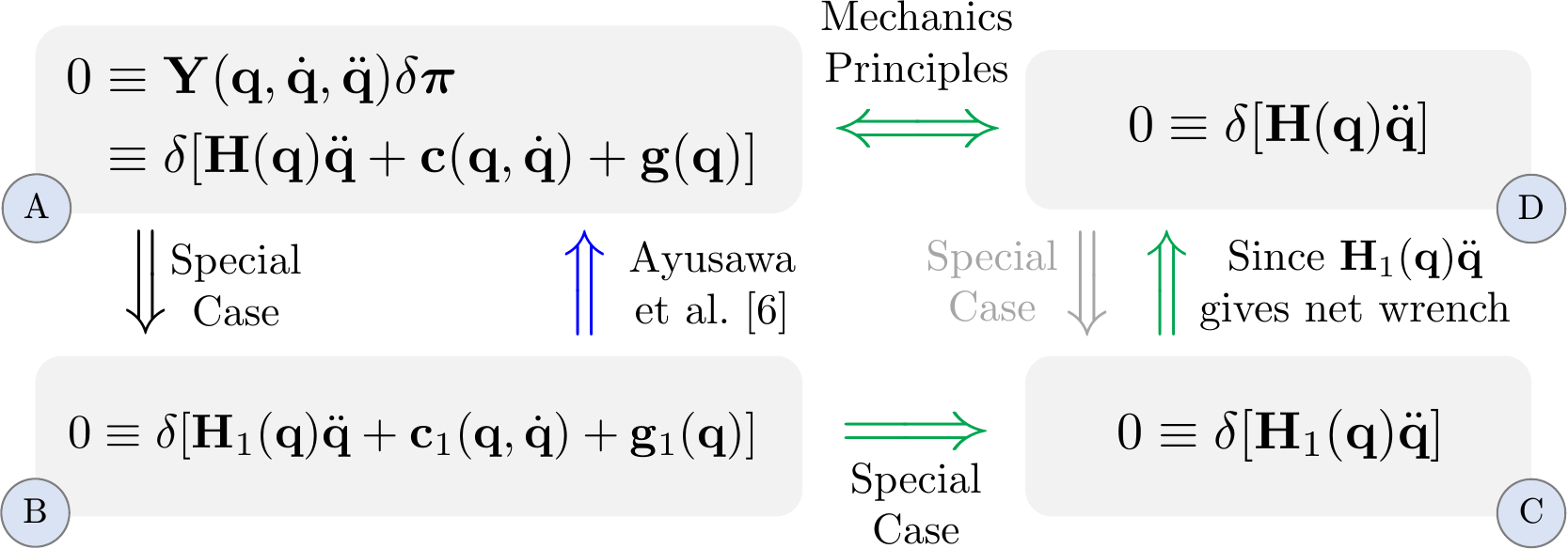}
    \caption{An alternate proof of the main result from \citet{Ayusawa14} uses mechanics principles. The implication from the blue arrow is replaced by those in green.}
    \label{fig:Ayusawa}
\end{figure}


\begin{theorem}[Main result of \cite{Ayusawa14}]
Consider a floating-base open-chain rigid-body system with dynamics \eqref{eq:block_eom}. 
The parameter change $\dpi$ doesn't affect the dynamics if and only if it does not affect the first block rows of \eqref{eq:block_eom}, i.e.,:
\begin{equation}
\delta[ \vH_1(\vq) \vqdd + \vc_1(\vq,\vqd) + \vg_1(\vq) ] = \vzero~~~\forall \vq,\vqd,\vqdd
\label{eq:ko_complex}
\end{equation}
\end{theorem}

\newcommand{\cA}{\rm{A}}
\newcommand{\cB}{\rm{B}}
\newcommand{\cC}{\rm{C}}
\newcommand{\cD}{\rm{D}}

\begin{proof}
($\Rightarrow$) The forward implication ($\cA\Rightarrow \cB$ in Fig.~\ref{fig:Ayusawa}) is immediate since the conditions in \eqref{eq:ko_complex} are a subset of those required for $\dpi \in \mathcal{N}$. 

($\Leftarrow$) Toward proving the reverse implication ($\cB\Rightarrow \cA$ in Fig.~\ref{fig:Ayusawa}), we will instead prove that $\cB\Rightarrow \cC$, $\cC\Leftrightarrow \cD$, and $\cD\Leftrightarrow \cA$.

The implication $\cB\Rightarrow \cC$ is immediate since $\cC$ is a special case of $\cB$.

We now wish to show that $\cC\Leftrightarrow \cD$. The direction $\cC\Leftarrow \cD$ is again immediate since $\cC$ is a special case of $\cD$. To show the reverse direction, we will make use of the form of the mass matrix \citep{Featherstone08}
\begin{equation}
\scaleobj{.9}{\vH = \begin{bmatrix}   \vI_1^C & \XMT{2}{1}\, \vI_2^C\, \vPhi_2 & \cdots  & \XMT{\nbod}{1}\,\vI_{\nbod}^C\, \vPhi_{\nbod} \\[.7ex]
						 \cdot & \vPhi_2\T\, \vI_2^C \,\vPhi_2 & \cdots & \vPhi_2\T\, \XMT{{\nbod}}{2}\, \vI_{\nbod}^C\, \vPhi_{\nbod} \\[.7ex]
						 \cdot  & \cdot     & \ddots & \vdots \\[.7ex]
						 \cdot & \cdot &  \cdot & \vPhi_{\nbod}\T \, \vI_{\nbod}^C \, \vPhi_N \end{bmatrix}} 
\label{eq:fb_h}
\end{equation}
where $\vI_i^C$ is the composite rigid-body inertia \citep{FeatherstoneOrin08} of the subchain of bodies rooted at body~$i$. Indeed, one may be able to see that if a variation in inertias does not change the first block row of $\vH$, then it will not change $\vH$ overall. We proceed to justify this point alternatively via physical arguments. 

Let us consider the case when only joint $j$ is moving. In that case,  we have from \eqref{eq:fb_h} that $\vH_1\vqdd = \XMT{j}{1} \vI_j^C \vPhi_j \qdd_j$, which represents the total wrench required to move all bodies in the system. By comparison, consider the joint torque at any joint $i$ earlier in the chain ($i < j$), given by 
\[
\tau_i = \vH_{ij} \qdd_j = \vPhi_i\T \XMT{j}{i} \vI_j^C \XMT{j}{1} \vPhi_j \qdd_j = (\XM{1}{i} \,\vPhi_i )\T  \vH_1\vqdd
\]
In this case, $\tau_i$ is simply the projection of the net wrench onto the $i$-th joint. As such, the $i$-th joint torque carries strictly less information than $\vH_1 \vqdd$. By this logic, if 
\begin{equation}
\delta[ \vH_1(\vq) \vqdd ] = \vzero~~~\forall \vq,\vqdd
\label{eq:ko_simple}
\end{equation}
then the upper triangle of $\delta \vH(\vq)$ will be zero for all $\vq$. Via symmetry of $\vH$, this implies that $\delta \vH(\vq)=\mathbf{0}$ for all $\vq$. This reasoning shows that $\cC$ implies $\cD$.\footnote{Note that this reasoning for $\cC \Rightarrow \cD$ no longer holds for systems that have closed kinematics loops, as is observed in Sec.~\ref{sec:closedChain}.}

We finally argue the equivalence of $\cA$ and $\cD$ in the figure.
Per Remark~\ref{rem:gravity}, we can ignore gravity, so $\dpi$ leaves the dynamics unchanged if and only if:
\begin{equation}
\delta[ \vH(\vq) \vqdd + \vc(\vq,\vqd) ] = \vzero~~~\forall \vq,\vqd,\vqdd \label{eq:noEnergyVariation}
\end{equation}
Since the Coriolis terms are determined uniquely from the form of the mass matrix (e.g., via Christoffel symbols \cite{siciliano2008book,echeandia2021numerical}), \eqref{eq:noEnergyVariation} is equivalent to requiring that
\begin{equation}
\delta[ \vH(\vq) \vqdd ] = \vzero~~~\forall \vq,\vqdd
\label{eq:no_gravity_condition}
\end{equation}
This proves the equivalence of $\cA$ and $\cD$ in the figure, which completes the proof. 
\qed 
\end{proof}
We hope this conceptual but physically grounded proof will help enhance the appeal of the original main result in \cite{Ayusawa14} for a broader range of readers. 

%
%


\section{Generalization of RPNA to Address Rotors}
\label{app:rotors}

\newcommand{\gen}[1]{\underline{#1}}
\newcommand{\gV}{\gen{\vV}}
\newcommand{\gPhi}{\gen{\vPhi}}
\newcommand{\gpi}{\gen{\vpi}}
\newcommand{\gX}{\gen{\vX}}
\newcommand{\gB}{\gen{\vB}}
\newcommand{\gI}{\gen{\vI}}
\newcommand{\gk}{\gen{\vk}}
\newcommand{\gK}{\gen{\vK}}
\newcommand{\gA}{\gen{\vA}}
\newcommand{\gC}{\gen{\vC}}
\newcommand{\gN}{\gen{\vN}}

\newcommand{\gv}{\gen{\vv}}

The momentum and torque conditions \eqref{eq:momentum_rotor} and \eqref{eq:torque_rotor} are generalized here in the case of a serial chain:
\begin{align}
  0 &=
  \left[ \vPhi_i\T \dI_i \XM{i}{\pred{i}}(q_i) + \ngear{} \vPhi_{m_i}\T \dI_{m_i} \XM{m_i}{\pred{i}}(q_i) \right] \vv_{\pred{i}} \nonumber \\
  & \qquad \qquad \qquad \qquad \qquad ~~\forall {\vv_{\pred{i}} \in \mathcal{V}_{\pred{i}}^*}, q_i \in \mathbb{R}\\
  0 &= \vPhi_i\T \dI_i \vPhi_i + \vPhi_{m_i}\T \dI_{m_i} \vPhi_{m_i} \, \ngear{}^2
\end{align}
We can collect these two conditions together into a larger equation:
\begin{equation}
\scaleobj{.9}{
\mathbf{0} =\begin{bmatrix} \vPhi_i \\ \ngear{} \vPhi_{m_i} \end{bmatrix}\T \begin{bmatrix} \dI_i & \mathbf{0} \\ \mathbf{0} & \dI_{m_i} \end{bmatrix} \left( \begin{bmatrix} \XM{i}{\pred{i}} \\ \XM{m_i}{\pred{i}} \end{bmatrix} \vv_{\pred{i}} + \begin{bmatrix} \vPhi_i \\ \ngear{} \vPhi_{m_i} \end{bmatrix} \dot{q}_i \right)}
\end{equation}
for all $\vv_{\pred{i}}$ and $\dot{q}_i$. To simplify these conditions, we define:
\begin{align}
\gPhi_i &=\begin{bmatrix} \vPhi_i \\ \ngear{} \vPhi_{m_i} \end{bmatrix} \\[1ex]
\delta \gen{\vI}_i & = \begin{bmatrix} \dI_1 &  \mathbf{0} \\ \mathbf{0} & \dI_{m_i} \end{bmatrix}  \\[1ex] {}^i\gX_{\pred{i}}(q_i) &= \begin{bmatrix} \XM{i}{\pred{i}}(q_i) \\ \XM{m_i}{\pred{i}}(q_i) \end{bmatrix}
\end{align}
and assume a matrix $\gV_i$ so that
\begin{align}
{\textrm{Range}}(\gV_i) = \spn( \{ &{}^i \gX_{\pred{i}}(q_i)  \vv_{\pred{i}} + \gPhi_i \dot{q}_i  ~|  \\
 & ~~~~~~ \vv_{\pred{i}} \in \mathcal{V}_{\pred{i}}^*, q_i,\dot{q}_i \in \mathbb{R} \} ) 
\end{align}
With these definitions, we re-express the momentum and torque conditions in a combined form as:
\[
\gPhi{}_i\T \, \delta \gen{\vI}_i \, \gV_i = \mathbf{0}
\]
Additionally, we can recover a choice of $\vV_i$ as:
\begin{equation}
\vV_i = \begin{bmatrix} \mathbf{1}_{6\times6} & \mathbf{0}_{6\times6} \end{bmatrix} \gV_i
\label{eq:newV_rotor}
\end{equation}

A remaining question regards how to construct $\gV_i$, however, this can be done using controllability analysis similar to how $\vV_i$ was constructed in the main text. 
\begin{align*}
\wellBeforeJoint{\gV}{i} &= {}^i \gen{\mathbf X}_{\pred{i}}(0) \, \vV_{\pred{i}} \\
\beforeJoint{\gV}{i} &={\rm Ctrb}\left( \begin{bmatrix} (\vPhi_i \times) & \bzero \\ \bzero & \ngear{} (\vPhi_{m_i} \times) \end{bmatrix} , \, \wellBeforeJoint{\gV}{i} \, \right) \\
\gV_i &=  \begin{bmatrix} \beforeJoint{\gV}{i} & \gPhi_i \end{bmatrix}
\end{align*}
which motivates the definition:
\[
\gPhi_i\times =\begin{bmatrix} (\vPhi_i \times) & \bzero \\ \bzero & \ngear{} (\vPhi_{m_i} \times) \end{bmatrix}
\]

While we used the rotational symmetry of the rotor in the main text to simplify the mapping condition,  we can discharge that assumption to improve the generality of the RPNA with rotors. Specifically, we consider $\gpi \in \mathbb{R}^{20}$ to represent both rotor and link inertia parameters. We then extend the definitions in the main text using $\gV \in \mathbb{R}^{12\times m}$, $\gen{\vI}( \gpi) \in \mathbb{R}^{12\times 12}$,  $\gv\in \mathbb{R}^{12}$, $\gX  \in \mathbb{R}^{12\times 6}$ as:
\begin{align*}
\gC(\gV, \gPhi ) &:= \left[ \frac{\partial}{\partial \gpi} \gV\T\,  \gen{\vI} \, \gPhi \right]\T  \\
\gk(\gv) &:= \nabla_{\gpi} \, \frac{1}{2}\,\gv\T\,\gen{\vI}(\gpi) \, \gv 
 \\ 
\gB(\gX) &:= \frac{\partial}{\partial \gpi} \left[\gX\T \,\gen{\vI}(\gpi)\, \gX  \right]^\vee \\
\gA(\gPhi) &:= \frac{\partial }{\partial \gpi} \left[    (\gPhi\times)\T\, \gen{\vI}(\gpi) + \gen{\vI}(\gpi)\, (\gPhi \times)   \right]^\vee  
\end{align*}
We consider a matrix $\gK_i$ so that 
\begin{align*}
\textrm{Range}(\gK_i) = \spn(\,\{ &\gk(\gv_i) ~|~ \gv_i = {}^i\gX_{\pred{i}}(q_i) \, \vv_{\pred{i}} + \gPhi_i \dot{q}_i ~| \nonumber \\
& ~ \vv_{\pred{i}}\in \mathcal{V}_{\pred{i}}^*, q_i, \dot{q}_i \in \mathbb{R} \} )
\end{align*}

With these definitions, the momentum and torque conditions take the form:
\[
\gC(\gV_i, \gPhi_i )\T \delta \gpi_i=0 
\]
while the mapping condition takes the form:
\[
\gK_i\T \, \vB( {}^i\gX_{\pred{i}}(0) ) \, \gA(\gPhi_i)^k \, \delta \gpi_i = \mathbf{0} , k=1,\ldots,10
\]
Mirroring the development in the main text, we can consider:
\begin{align*}
\wellBeforeJoint{\gK}{i} &= \gB( {}^i\gX_{\pred{i}}(0) )\!\T \,\vK_{\pred{i}} \\
\beforeJoint{\gK}{i} &= {\rm Ctrb}( \, \gA(\gPhi_i)\T ,\, \wellBeforeJoint{\gK}{i} \,) \\[1ex]
\gK_i &= \begin{bmatrix}  \beforeJoint{\gK}{i} & \gC( \gV_i , \gPhi_i )  \end{bmatrix} \\[1.5ex]
\gN_i &=  \begin{bmatrix} \gC( \gV_i , \gPhi_i )\T  \\[1ex]
   \beforeJoint{\gK}{i}\T\, \gA(\gPhi_i)  \end{bmatrix}
\end{align*}
where we then compute 
\begin{align}
\vK_i &= \begin{bmatrix} \mathbf{1}_{10\times 10} & \mathbf{0}_{10\times 10} \end{bmatrix} \gK_i \label{eq:newK_rotor} 
\end{align}
for the link alone. Aside from the extra steps \eqref{eq:newV_rotor} and \eqref{eq:newK_rotor}, the rest of these computations take the same structure as the RPNA in the main text, allowing us to quickly generalize its operation to accommodate single-DoF joints with rotors.

\begin{remark}
This approach can be directly extended to the case when $m$ bodies ($m\ge2$) move with $q_i$ as long as their velocities $\gv_i \in \mathbb{R}^{6m}$ can be represented in the form
\[
\gv_i = {}^i\gX_{\pred{i}}(q_i)  \vv_{\pred{i}} + \gPhi_i \dot{q}_i
\]
for some fixed $\gPhi_i $.
\end{remark}

\end{document}